\def\eqref#1{Eq.~(\ref{#1})}
\def\1{\bm{1}}
\DeclareMathAlphabet{\mathsfit}{\encodingdefault}{\sfdefault}{m}{sl}
\SetMathAlphabet{\mathsfit}{bold}{\encodingdefault}{\sfdefault}{bx}{n}
\def\gG{{\mathcal{G}}}
\newcommand{\E}{\mathbb{E}}
\theoremstyle{remark}
\newtheorem{theorem}{Theorem}[section]
\newtheorem{lemma}{Lemma}[theorem]
\newtheorem{definition}[theorem]{Definition}
\newtheorem{corollary}{Corollary}[theorem]
\newtheorem{proposition}{Proposition}[theorem]
\newtheorem{remark}{Remark}[theorem]
\newtheorem{assumption}{Assumption}[theorem]
\newcommand\zx[1]{{\color{blue}{zx: #1}}}
\begin{document}

\title{Path Regularization: A Near-Complete and Optimal Nonasymptotic Generalization Theory for Multilayer Neural Networks and Double Descent Phenomenon}

\author{Hao~Yu
\thanks{Hao Yu was with the Department of Automation, Tsinghua University, Beijing, 100086, China, e-mail: dustless2014@163.com}
}
\markboth{Journal of \LaTeX\ Class Files,~Vol.~14, No.~8, August~2015}%
{Shell \MakeLowercase{\textit{et al.}}: Bare Demo of IEEEtran.cls for IEEE Journals}

\maketitle

\begin{abstract}
Path regularization has shown to be a very effective regularization to train neural networks, leading to a better generalization property than common regularizations i.e. weight decay, etc. We propose a first near-complete (as will be made explicit in the main text) nonasymptotic generalization theory for multilayer neural networks with path regularizations for general learning problems. In particular, it does not require the boundedness of the loss function, as is commonly assumed in the literature. Our theory goes beyond the bias-variance tradeoff and aligns with phenomena typically encountered in deep learning. It is therefore sharply different from other existing nonasymptotic generalization error bounds. More explicitly, we propose an explicit generalization error upper bound for multilayer neural networks with $\sigma(0)=0$ and sufficiently broad Lipschitz loss functions, without requiring the width, depth, or other hyperparameters of the neural network to approach infinity, a specific neural network architecture (e.g., sparsity, boundedness of some norms), a particular optimization algorithm, or boundedness of the loss function, while also taking approximation error into consideration. A key feature of our theory is that it also considers approximation errors. In  particular, we solve an open problem proposed by Weinan E et. al. regarding the approximation rates in generalized Barron spaces. Furthermore, we show the near-minimax optimality of our theory for regression problems with ReLU activations. Notably, our upper bound exhibits the famous double descent phenomenon for such networks, which is the most distinguished characteristic compared with other existing results. We argue that it is highly possible that our theory reveals the true underlying mechanism of the double descent phenomenon. This work emphasizes that many classical results should be improved to embrace the unintuitive characteristics of deep learning for a better understanding of it.
\end{abstract}

\begin{IEEEkeywords}
Path regularization, Generalization error, Multilayer feed-forward neural networks, Lipschitz activation, Lipschitz loss function, Double descent
\end{IEEEkeywords}

\section{Introduction}\label{sec0}

\subsection{Neural Networks and Their Generalization Theory}
The advent of neural networks (NNs) has greatly enhanced and revolutionized artificial intelligence. Although research on neural networks can be traced back to the early 1980s with Hinton et al., their power only began to emerge in recent decades. NN models have been immersed in many areas of AI, including computer vision, natural language processing, speech, reinforcement learning, etc., and have witnessed thrilling breakthroughs in AI development. Although realistic NNs work very well in practice, they still defy rigorous mathematical understanding.

Recently, there has been a surge of research on the theoretical foundations of NNs. One line of research concerns the training dynamics of neural networks, especially for stochastic gradient descent (SGD) algorithms. For example,~\citet{jin2017escape} shows that noisy SGD can escape from saddle points.~\citet{fang2019sharp} explains that SGD can also escape from saddle points with high probability. Many works~\citet{zhou2019sgd,mertikopoulos2020almost,du2018gradient,arora2019fine,du2019gradient,rotskoff2022trainability,bao2023global} show that SGD can converge to global minima under certain conditions.

To tackle this problem, one approach is to consider its continuous version, i.e., to formulate an appropriate definition of the continuous limit when the width or depth approaches infinity, and study the training dynamics of SGD in such a limit. The efficiency and power of this method lie in the fact that it can often be cast as a stochastic ordinary differential equation or partial differential equation problem, forming a beautiful mathematical framework that can leverage strong mathematical tools. The original discrete version can be recovered and derived by sampling from probability distributions or discretizing the continuous or differentiable equations.

This paper focuses on another prominent problem: understanding the generalization property of a trained neural network. This can be studied either under the assumption that the network is trained by some specific optimization algorithm or by simply assuming that a global minimum has been found without knowledge of the optimization algorithm. The relationship between generalization error and empirical error is a classic topic that has been well studied historically. For neural networks, most existing literature focuses on two-layer ReLU neural networks as a starting point. For example,~\citet{bach2017breaking,parhi2021banach,parhi2022near,e2019priori,neyshabur13601359}. Some works also leverage continuous formulations~\citet{mei2018mean}, while others approach the problem via asymptotic regimes~\citet{deng2022model,liang2022precise,mei2022generalization,yang2020rethinking,seroussi2022lower}. The salient double descent phenomenon for overparameterized neural networks also poses an intrinsic challenge beyond the classical characterization for general models. On the other hand, there are many empirical studies~\citet{zhang2021understanding,nagarajan2019generalization,neyshabur2017exploring,zhao2021estimating,schiff2021predicting}.

Among others, we would like to emphasize a recent work~\citet{huiyuan2023Nonasymptotic} by Huiyuan and Wei, which studies the nonasymptotic generalization property of a general two-layer ReLU neural network. It does not require specific network structure constraints assumed in other works and derives a nonasymptotic near-optimal generalization error bound. A closely related work is~\citet{neyshabur13601359}.

However, as far as we know, rigorous studies (rather than empirical or heuristic studies) of generalization properties for multilayer neural networks are few, albeit there are many works providing generalization error bounds in various ways. It appears like a puzzle for readers to understand their complex relationships. A superficial reason is that two-layer ReLU neural networks are quite close to linear functions, making them easier to transfer to traditional and well-studied statistical problems, whereas multilayer neural networks are highly nonlinear and nonconvex.

\subsection{Path Regularization}
Path regularization was proposed in works~\citet{neyshabur2015path,neyshabur2015norm} as a new form of regularization with favorable properties for ReLU neural networks. We now discuss in detail why we study the learning problem~\ref{opt_problem} with PeSV regularization (a slight variant of path regularization). From a nonparametric or functional standpoint, a neural network should be considered as a nonlinear function rather than a machine with a set of parameters. This view has been adopted by many works. Under this consideration, a Banach function space associated with ReLU neural networks with finite depth and infinite width was developed by~\citet{wojtowytsch2020banach}. There is a very natural norm on this space with a well-defined mathematical foundation, and the path norm is simply its discrete analog (strictly, the $\ell^1$ path norm, while the PeSV norm is a slight variant of the $\ell^1$ path norm). In contrast, viewing neural networks as a set of parameters with weight decay regularization seems too naive to be a correct foundation for the theory of neural networks. Furthermore, the authors show that under this norm, the unit ball in the space for $L$-layer networks has low Rademacher complexity and thus favorable generalization properties, implying that using path norm as a regularizer is a priori expected to train a network that generalizes well. Another related piece of evidence is the observation that the $\ell^p$ path norm is related to the per-unit $\ell_p$ norm (also called max $\ell_p$ norm) in that, for ReLU networks, it is the minimum of the per-unit $\ell_p$ norms over all linearly rescaled networks representing the same function. Since per-unit $\ell^1$ and $\ell^2$ norms have shown great generalization properties, this suggests that the $\ell^1$ and $\ell^2$ path norms are important regularizers. Last but not least,~\citet{neyshabur2015path} advocates that the geometric invariance of ReLU networks under scaling should be taken into account and suggests that optimization algorithms should incorporate this invariance. Path norm is scaling-invariant, and that work develops Path-SGD, a scaling-invariant optimization algorithm that numerically achieves much better generalization than SGD and Adam. The work~\citet{ergen2023path} suggests studying path norm for training neural networks and shows that it has an equivalent convex formulation that favors GD/SGD in finding a global minimum.~\citet{gonon2023path} develops a first toolkit specialized for path-norm regularization to facilitate challenging concrete promises of path-norm-based generalization bounds. With all these considerations, it is worthwhile and desirable to mathematically investigate the generalization properties of learning algorithms with path norm as a regularizer. Here we take a slight variant of the path norm, the PeSV norm (path enhanced scaled variation norm), which is an $\ell^1$ and $\ell^2$ mixed version of the path norm. If we use the $\ell^1$ norm for the first layer, we recover the $\ell^1$ path norm. The $\ell^2$ norm here is not essential, and all our conclusions hold for the pure path norm. Thus we take problem~\ref{opt_problem} in Section~\ref{sec4} as the object of study in this work.

Another point we need to emphasize is that the path regularization can be computed efficiently by using dynamic programming, as opposed to the apparently exponential complexity~\cite{neyshabur2015path}. This is very important for practical network training. 

Although path regularization has favorable properties only for ReLU networks, we find that our results actually hold for general Lipschitz activations. Therefore, in the following sections, we present the results in this generality unless otherwise specified. 

In passing, the advantage of path regularization for general Lipschitz activations remains to be explored and will be the subject of our future research.

\subsection{Our Contributions}
Based on our review of existing generalization theories for multilayer neural networks available in the literature, in the next section, we propose a framework for what constitutes a satisfactory generalization theory. This paper attempts to fill the gap between theory and practice by providing a fairly general rigorous characterization of the generalization property of a general multilayer neural network with path regularizations for quite general problems, including regression and classification, with almost trivial assumptions. Our bound is optimal up to a logarithmic factor for MSE loss and RelU activations. Most importantly, the bound demonstrates the double descent phenomenon observed in practice for such networks.

Another work most closely related to ours is perhaps~\citet{parhi2022near}. They set up a correct space of functions of second-order bounded variation in the Radon domain, which turns out to be the same as the variation space~\citet{siegel2023characterization} studied before for associating shallow ReLU neural networks with the same form of learning problem, and finally establish a similar conclusion to ours in this space. One novel difference is that they choose estimators in this function space instead of the original shallow neural networks. Their main tool for proving the $L^2$ generalization error is a metric entropy argument corresponding to the underparameterized regime in our work; however, this would not be optimal for the aforementioned reason: overparameterization and underparameterization are different.

We believe that the main meaning and contributions of this work are as follows.
\begin{enumerate}
    \item \textit{We give the first near-complete and optimal nonasymptotic generalization error bound for multilayer neural networks with path regularization for general loss functions}. It is flagged with no constraints placed on network structures (e.g., intrinsic sparsity or boundedness of certain norms), no constraints on any particular optimization algorithm used, and no boundedness assumption on the loss. We explicitly model the target function space and take the approximation error into account. The latter point is notable as it is almost \textit{ignored} in previous works.
    \item \textit{To the best of our knowledge, our work is the first to predict the double descent phenomenon for deep networks with path regularization without relying on asymptotic analysis}. It also reveals the intrinsic mechanism of why the double descent phenomenon occurs. The analysis can be extended to other machine learning models. In principle, they will follow the same pattern to exhibit double descent. This paves the way to the final complete understanding of double descent nonasymptotically.
    \item \textit{We answer an open question posed in~\citet{wojtowytsch2020banach} by Weinan E et. al. regarding approximation error rates in generalized Barron spaces}. 
\end{enumerate}
Our work is one of very few works that goes beyond two-layer neural networks by overcoming several challenges. The characteristics of our work can be listed as follows:
\begin{enumerate}
    \item We explicitly distinguish between the overparameterized regime and the underparameterized regime and estimate the generalization error separately, obtaining results that are as good as possible. Previous works either give a single bound that is not optimal or focus only on the overparameterization regime.
    \item When the loss is MSE and the activation is ReLU, we show that our generalization bound is near-optimal (i.e., up to a logarithmic factor) by establishing an information-theoretic lower bound. This near-optimality strongly suggests that our characterization of the double descent mechanism is the correct one.
    \item We establish an approximation bound for multilayer neural networks with Lipschitz activation in a suitable continuous function space called the generalized Barron space, greatly strengthening previously known results. To the best of our knowledge, this result is new.
    \item We show that the generalization upper bound can predict the double descent phenomenon. Previous works have never been able to predict the double descent phenomenon from a theoretical point of view alone.
\end{enumerate}

The organization of this paper is as follows: Section~\ref{sec1} discusses relevant works. Section~\ref{sec2} states the notation. In Section~\ref{sec3}, we set up a reasonable function space to work on in this paper. We define the appropriate normed function space to establish a framework for studying general problems modeled by multilayer neural networks. Section~\ref{sec4} sets up the learning problem under consideration and the optimization objects. Section~\ref{sec_approx} studies the approximation properties of this function space. Sections~\ref{sec5}~\ref{sec6}~\ref{sec7} and~\ref{sec8} study the empirical and generalization error bounds in the overparameterized and underparameterized regimes, respectively, and discuss their implications for the double descent phenomenon. Section~\ref{sec9} generalizes our theory to Lipschitz loss functions under some very mild conditions. We compare our results with classical results in Section~\ref{sec10} and demonstrate our superiority. Finally, Section~\ref{sec11} concludes the paper. All technical details, including all proofs, are presented in the Supplementary Information~\ref{appendix}.

\section{Related Work}\label{sec1}
We review relevant work on the generalization properties of neural networks, including both shallow (two-layer) and deep architectures, while abstracting away specific activation functions and regularization techniques. We subsequently discuss their relations and, especially, differences with our work. We suggest readers consult~\citet{huiyuan2023Nonasymptotic} and the references therein first for thorough discussions of the background and motivation of this problem, as well as several relations with other related theories, e.g., high-dimensional statistics and recent understanding of bias-variance tradeoff~\citet{derumigny2023lower}.

Neural Tangent Kernel (NTK)~\citet{jacot2018neural,chizat2019lazy} is a method to understand the training dynamics of NNs by formulating them as kernel gradient descent with the neural tangent kernel. When the number of neurons approaches infinity, this kernel is constant during training, facilitating easy analysis. This method requires the parameters to stay in a small neighborhood of their initial values, which does not align with realistic NNs. The generalization analysis is therefore not useful.

Mean field theory~\citet{sirignano2020mean,rotskoff2022trainability} aims to formulate the training dynamics of NNs as a stochastic partial differential equation in the continuous limit. The generalization property analysis via SPDE poses difficulties that do not seem easy to solve.

There are a number of works analyzing the generalization properties of two-layer neural networks. Most of them leverage the law of large numbers or the central limit theorem to represent the training dynamics of SGD as a stochastic partial differential equation in the continuous limit.~\citet{mei2018mean} proposes a mean field analysis of the training dynamics of two-layer NNs via SGD by recasting it as an SPDE in the continuous limit. However, it is not clear how this method can be generalized to multilayer NNs, and the analysis of the resulting SPDE is far from easy except under specific cases; therefore, generalization property analysis is also challenging.

Since the discovery of the double descent phenomenon~\citet{belkin2019reconciling}, many works have attempted to understand it theoretically under various assumptions.~\citet{hastie2022surprises,muthukumar2020harmless,belkin2020two} started detailed analyses for classical linear models or other simple models, e.g., the random feature model, which is equivalent to a two-layer neural network with the first-layer parameters fixed. Another line of attack is to perform asymptotic analysis for these models~\citet{deng2022model,liang2022precise,mei2022generalization,yang2020rethinking}, i.e., letting depth, width, dimension, and the number of training data approach infinity separately or jointly. Random matrix theory is largely leveraged in these works. Using this theory, they can find explicit generalization error expressions in the asymptotic regime and show that its curve with respect to the structural parameter demonstrates the ``double descent'' shape. In this line, a lower generalization error bound for two-layer neural networks is also developed in~\citet{seroussi2022lower}. Notably, a ``multiple descent'' phenomenon in infinite-dimensional linear regression and kernel ridgeless regression is proposed in~\citet{liang2020multiple,li2021multi}. Despite these advances, it is still unclear how these simple models can be helpful or insightful for multilayer NN models. In our opinion, the gap between them is potentially large. The reader can also check a very recent thorough discussion of double descent~\citet{schaeffer2023double}.

Several works have derived generalization properties for two-layer neural networks in certain variation spaces~\citet{bach2017breaking,parhi2021banach,parhi2022near}. Most of the existing work focuses on variational formulations of the empirical risk minimization problem. However, the network width of the solution is required to be smaller than the sample size, which does not fall within our regime of overparameterized NNs. Another attempt~\citet{e2019priori} allows the network width to grow to infinity; however, the $\ell_1$ path norm regularization induces potential sparsity in parameters, which does not seem to have implications for intrinsically overparameterized NNs.

For deep neural networks, there are some empirical analyses regarding the phenomenological aspects of generalization ability. One of the most surprising features is that any deep neural network with a number of parameters greater than the number of data points has perfect finite-sample expressivity, which forces a rethinking of what generalization means~\citet{zhang2021understanding}.~\citet{nagarajan2019generalization} empirically found that generalization crucially depends on a notion of model capacity that is restricted through the implicit regularization of $\ell_2$ distance from initialization.~\citet{neyshabur2017exploring} discussed different candidate complexity measures that might explain generalization in neural networks. It also outlines a concrete methodology for investigating such measures and reports on experiments studying how well the measures explain different phenomena.~\citet{zhao2021estimating,schiff2021predicting} propose some new measures, e.g., sparsity, Gi-score, and Pal-score, that can explicitly calculate and compare generalization ability. A deep insight was analyzed in~\citet{neyshabur2017exploring}, where they show quantitative evidence, based on representational geometry, of when and where memorization occurs, and link double descent to increasing object manifold dimensionality.

To the best of our knowledge, an incomplete list of rigorous generalization theories for multilayer neural networks (more than two layers) includes~\citet{neyshabur2017pac, bartlett2017spectrally, bartlett2019nearly, tirumalageneralization, gu2023unraveling, wang2022generalization, allen2019learning, jakubovitz2019generalization, neyshabur2018towards}. Early classical works~\citet{neyshabur2017pac, bartlett2017spectrally, bartlett2019nearly, tirumalageneralization} bound the generalization error using complexity measures such as VC dimension, Rademacher complexity, and Bayesian PCA. It is well known that these methods often require the loss function to be bounded, which is not practical. As classical theory focuses on bounding the generalization error by empirical errors and other complexity quantities related to neural networks, it does not pay attention to the estimation of empirical error, meaning that it does not offer a complete picture of the generalization error. These bounds are also not optimal, as they do not distinguish between the overparameterization and underparameterization regimes that should have fundamental differences, as the salient double descent phenomenon indicates, and thus the upper bound will never predict the double descent phenomenon.~\citet{tirumalageneralization} does not rely on the aforementioned complexity norms but uses direct analytic analysis to establish concentration inequalities that are, in spirit, similar to our methods, but it still requires boundedness of the loss function and is not optimal for the same reason. For recent work, a kind of generalization error bound analysis appears in~\citet{gu2023unraveling}, but it is for two- and three-layer neural networks only and studies how generalization error converges under the gradient descent algorithm.~\citet{wang2022generalization} provides a generalization error bound for multilayer neural networks under the stochastic gradient descent algorithm. Again, it focuses on a specific optimization algorithm. Furthermore, these two works do not consider the target function space, which means they do not take into account approximation errors and do not estimate empirical errors. The second work also imposes boundedness constraints on certain norms of multilayer neural networks. Our work differs from theirs in that we do not require knowledge of the optimization algorithm beforehand, do not impose any structural constraints on the multilayer neural networks, and integrate the target function space (and thus approximation error) into our generalization bound expression.

After reviewing the aforementioned works, we find that the generalization theories they present either deal with two-layer neural networks, consider only the asymptotic regime, impose structure constraints on neural networks, depend on particular optimization algorithms, require boundedness of the loss function, or are not aware of potential differences between underparameterization and overparameterization and thus are not predictive of the double descent phenomenon, and they do not consider approximation error a priori.

Through inspecting people's experience in deep learning in recent decades, we propose here what we think a satisfactory generalization theory should be. We call a set of generalization error expressions or bounds for multilayer neural networks \textit{complete} if:
\begin{enumerate}
    \item it measures the difference between the estimated model and the true model in a nonasymptotic fashion;
    \item all terms in the expressions or bounds are computable without running experiments first, meaning that it must involve estimation of empirical error;
    \item it applies to all realistic neural networks without \textit{essential} constraints on their structures, e.g., sparsity, boundedness of some norms, or asymptotic regime;
    \item it sheds light on the famous double descent phenomenon in practice (otherwise it is not tailored for neural networks and not optimal);
    \item it applies to most activation functions;
    \item it applies to most common loss functions in practice; in particular, it does not require boundedness of loss functions, as most common losses are not bounded;
    \item it applies to most common optimization algorithms in practice.
\end{enumerate}

Regarding point 7, it is certainly valuable to have results for a particular optimization algorithm; however, that is not the focus of this paper. Although the aforementioned empirical findings and partial results are necessary steps toward a better understanding of the generalization properties of NNs, no established, even near-complete, theory has been made public for the simplest deep NN: the multilayer ReLU NN.

\section{Notation}\label{sec2}
$\mathbb{E}$ denotes expectation. For a vector $x\in \mathbb{R}^d$, let $\|x\|_q$ represent its $\ell_q$-norm. Denote by $\mathbf{B}^d$ and $\mathbf{S}^{d-1}$ the $\ell_2$ unit ball and the unit sphere, respectively, and by $\mathbf{B}^d(R)$ the ball of radius $R$. For a function $f$, $\|f\|_{L_\infty(D)}$ signifies the $L_{\infty}$-norm on a domain $D$, while $\|f\|_2$ and $\|f\|_n$ denote the $L_2$-norm under the data distribution and its empirical counterpart, respectively. We use $C^0(K)$ and $C^{0,\alpha}(K)$ ($0<\alpha\le 1$) to denote the space of continuous functions and the space of H\"older functions on a domain $K$, respectively.

For any metric $\rho$ on a family of functions $\mathcal{F}$ and $\delta > 0$, we denote $\mathcal{N}(\delta, \mathcal{F}, \rho)$ the covering number, and $\log\mathcal{N}(\delta, \mathcal{F}, \rho)$ the metric entropy, as usual.

Let $\sigma$ denote a Lipschitz continuous activation function with Lipschitz constant $L_\sigma$ and $\sigma(0)=0$. In particular, $\|\sigma(x)\| \le L_\sigma \|x\|$. Any multilayer network with Lipschitz activation not satisfying $\sigma(0)=0$ can be equivalently transformed to the former, which we discuss at the end of Section~\ref{sec3}.

Keep in mind that \textit{the constants in our theorems, propositions and lemmas may depend on the number of hidden layers $L$ of the neural networks, the activation function-related quantities: the Lipschitz constant $L_\sigma$, and the loss function-related quantities: the Lipschitz constants $L_0, L_{1,y}, L^{\prime}_0$ and the strong convexity constant $\gamma$, which can be easily inspected from the proofs and is omitted in the statements of relevant results. Importantly, they do not depend on the width vector $\mathbf{m}$ of the networks}.

In the remainder of this paper, we abbreviate \textit{multilayer neural network}, \textit{deep neural network},  \textit{multilayer network} or even \textit{network} to refer to a multilayer fully connected feedforward neural network with Lipschitz activation for brevity, unless otherwise specified.

\section{Multilayer Neural Space and Its Properties}\label{sec3}
As we explicitly take into account the space of true models rather than ignoring it completely as in previous works, to ensure that a machine learning model learned from a predefined model class exhibits favorable empirical and generalization properties, a necessary condition is that this model class possesses the capability to approximate the underlying true model well so that we can perform approximation theory. In this section, we define the true model space specifically associated with multilayer neural networks. We follow the setup of~\citet{wojtowytsch2020banach}.

Keep in mind that $\sigma$ below is any Lipschitz activation with $\sigma(0)=0$, although~\citet{wojtowytsch2020banach} deals with ReLU. Our definition of the function space relies on the construction of the \textit{generalized Barron space}, denoted by $\mathcal{B}_{X,K}$, called the generalized Barron space modeled on $X$, where $K \subset \mathbb{R}^d$ is a compact set and $X$ is a Banach space such that:
\begin{enumerate}
    \item $X$ embeds continuously into the space $C^{0,1}(K)$ of Lipschitz functions on $K$, and
    \item the closed unit ball $B^X$ in $X$, which is a Polish space, is closed in the topology of $C^0(K)$.
\end{enumerate}
$\mathcal{B}_{X,K}$ is constructed as follows:
\begin{equation}
\begin{aligned}
 f_{\mu} & = \int_{B^X} \sigma(g(\cdot))\,\mu(dg) \\
\|f\|_{X,K} & = \inf \{\|\mu\|_{M(B^X)} : \mu \in M(B^X) \text{ s.t. } f = f_{\mu} \text{ on } K \} \\
\mathcal{B}_{X,K} & = \{ f \in C^0(K): \|f\|_{X,K} < \infty \} 
\end{aligned}
\end{equation}
Here, $M(B^X)$ denotes the space of (signed) Radon measures on $B^X$, and $\mu$ is a finite signed measure on the Borel $\sigma$-algebra of $B^X$. The integral $\int_{B^X}$ is the Bocher integral, and $\|\mu\|_{M(B^X)}$ is the total variation norm of the measure $\mu$.  We refer the readers to~\citet{wojtowytsch2020banach} for more details.

We briefly explain the historic development of  this concept. The approximation properties of two-layer neural networks have been a subject of enduring interest, particularly concerning their ability to break the curse of dimensionality when approximating certain high-dimensional functions. This phenomenon is rigorously characterized by the theory of \textbf{Barron spaces}. Originally formulated by \cite{barron1993universal}, the classical Barron space consists of functions that can be represented as an infinite integral over neurons, parameterized by a spectral measure.

A function $f$ belongs to the Barron space if it admits a representation of the form
\begin{equation}
f(\mathbf{x}) = \int_{\Omega} a  \sigma(\mathbf{w} \cdot \mathbf{x} + b)  \rho(da, d\mathbf{w}, db) + c
\label{eq:barron_representation}
\end{equation}
where $\sigma$ is a bounded activation function (e.g., a sigmoid or ReLU$^k$), $\rho$ is a probability measure over the neuron parameters $(a, \mathbf{w}, b)$, and $c$ is a constant. The complexity of the function is measured by the \textbf{Barron norm}, $|f|_{\mathcal{B}}$, which is typically defined as the infimum over all such representations of the total variation of the spectral measure associated with the output weights $a$.

\textbf{Generalized Barron spaces} extend this concept to a broader, more functional analytic framework. They are defined by replacing the specific integral form with a more abstract representation using the \textbf{inverse Radon transform} or, equivalently, by considering functions that lie in the dual space of a particular function space induced by the activation function.

A particularly powerful modern formulation defines the generalized Barron norm for a function $f$ with respect to an activation function $\sigma$ as
\begin{equation}
|f|_{\mathcal{B}{\sigma}} = \inf_{h \in L^1(\mathbb{R}^{d+1})}  \int_{\mathbb{R}^{d+1}} |h(a,\mathbf{w},b)| da d\mathbf{w} db
\end{equation}
where $f$ has a representation
\begin{equation}
f(\mathbf{x}) = \int_{\mathbb{R}^{d+1}} h(a,\mathbf{w},b) \sigma(\mathbf{w} \cdot \mathbf{x} + b) da d\mathbf{w} db.
\label{eq:generalized_barron_norm}
\end{equation}

This perspective offers several key advantages for machine learning theory:
\begin{enumerate}
\item \textbf{Convexity:} The space $\mathcal{B}_{\sigma}$ is a Banach space, turning the non-convex training of width-limited networks into a convex optimization problem over measures in the infinite-width limit.
\item \textbf{Representation Cost:} The Barron norm precisely quantifies the implicit bias of gradient-based methods (like gradient descent) when training over-parameterized two-layer networks, often correlating with the weight decay regularizer.
\item \textbf{Approximation Theory:} Functions with a finite Barron norm can be approximated by a finite-width neural network with a rate independent of the input dimension $d$, providing a mathematical justification for why neural networks excel in high dimensions \citep{bach2017breaking}.
\item \textbf{Generalization Bounds:} The norm serves as an effective capacity measure, leading to generalization bounds that do not explicitly depend on the number of parameters, aligning with the observed behavior of large neural networks \citep{neyshabur2018towards}.
\end{enumerate}

In summary, generalized Barron spaces provide a rigorous mathematical setting for understanding the approximation, optimization, and generalization properties of shallow neural networks, bridging the gap between finite networks and their infinite-width limits.

\begin{theorem}[Theorem 2.7 in~\citet{wojtowytsch2020banach}]
\label{property_g_barron}
The following are true:
\begin{enumerate}
    \item $\mathcal{B}_{X,K}$ is a Banach space.
    \item $\mathcal{B}_{X,K} \hookrightarrow C^{0,1}(K)$ and the closed unit ball of $\mathcal{B}_{X,K}$ is a closed subset of $C^0(K)$.
    \item If $\sigma$ is positive homogeneity, then $X\hookrightarrow \mathcal{B}_{X,K}$ and $\|f\|_{\mathcal{B}_{X,K}} \le \frac{2}{L_\sigma} \|f\|_X$.
\end{enumerate}
\end{theorem}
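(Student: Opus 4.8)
The plan is to realize $\gB_{X,K}$ as a quotient of the Radon measure space $\gM(B^X)$ and to read all three assertions off that identification. I would first record the preliminaries: since $X\hookrightarrow C^{0,1}(K)$ continuously and $K$ is compact, $B^X$ is uniformly bounded and equi-Lipschitz, hence precompact in $C^0(K)$ by Arzelà--Ascoli; combined with hypothesis (2) this makes $B^X$ a compact metric space, so $\gM(B^X)=C(B^X)^{*}$ by Riesz representation, with $C(B^X)$ separable. Writing $C_X$ for the norm of $X\hookrightarrow C^{0,1}(K)$, one has $\|g\|_{C^{0,1}(K)}\le C_X$ for every $g\in B^X$. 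The central object is the synthesis operator $T:\gM(B^X)\to C^0(K)$, $T\mu=f_\mu=\int_{B^X}\sigma(g(\cdot))\,\mu(dg)$; the pointwise bounds $|\sigma(g(x))|=|\sigma(g(x))-\sigma(0)|\le L_\sigma\|g\|_{L^\infty(K)}\le L_\sigma C_X$ and $|\sigma(g(x))-\sigma(g(y))|\le L_\sigma C_X\|x-y\|$ for $g\in B^X$ — which use only that $\sigma$ is Lipschitz with $\sigma(0)=0$ — show that $T$ actually maps into $C^{0,1}(K)$ with $\|T\mu\|_{C^{0,1}(K)}\le L_\sigma C_X\|\mu\|_{\gM(B^X)}$.

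For assertions (1) and (2), note that by construction $\gB_{X,K}=\mathrm{ran}(T)$ and $\|\cdot\|_{X,K}$ is exactly the quotient norm induced by $T$, so $[\mu]\mapsto T\mu$ is an isometric isomorphism of $\gM(B^X)/\ker T$ onto $(\gB_{X,K},\|\cdot\|_{X,K})$. Since $T$ is bounded, $\ker T$ is closed, and a quotient of a Banach space by a closed subspace is Banach — this gives assertion (1); the norm axioms come from linearity of $\mu\mapsto f_\mu$, with positive definiteness following from the estimate $\|f\|_{C^0(K)}\le L_\sigma C_X\|f\|_{X,K}$, obtained by taking the infimum of $\|T\mu\|_{C^0(K)}\le L_\sigma C_X\|\mu\|$ over $\mu$ with $T\mu=f$. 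Applying the same infimum to the $C^{0,1}$ estimate gives $\|f\|_{C^{0,1}(K)}\le L_\sigma C_X\|f\|_{X,K}$, which is the continuous embedding $\gB_{X,K}\hookrightarrow C^{0,1}(K)$ in assertion (2).

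For the closedness of the unit ball of $\gB_{X,K}$ in $C^0(K)$, I would argue by weak-$*$ compactness. Given $f_n$ with $\|f_n\|_{X,K}\le 1$ and $f_n\to f$ in $C^0(K)$: because the closed unit ball of $\gM(B^X)=C(B^X)^{*}$ is weak-$*$ sequentially compact ($C(B^X)$ separable) and the norm is weak-$*$ lower semicontinuous, the infimum defining $\|f_n\|_{X,K}$ is attained, so choose $\mu_n$ with $T\mu_n=f_n$ and $\|\mu_n\|\le 1$. Passing to a subsequence, $\mu_n\rightharpoonup^{*}\mu$ with $\|\mu\|\le 1$. For each fixed $x\in K$ the map $g\mapsto\sigma(g(x))$ belongs to $C(B^X)$ (the continuous evaluation $g\mapsto g(x)$ composed with $\sigma$, restricted to the compact $B^X$), so $f_{\mu_n}(x)=\int_{B^X}\sigma(g(x))\,\mu_n(dg)\to\int_{B^X}\sigma(g(x))\,\mu(dg)=f_\mu(x)$, while $f_{\mu_n}(x)=f_n(x)\to f(x)$; hence $f=T\mu\in\gB_{X,K}$ and $\|f\|_{X,K}\le\|\mu\|\le 1$.

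Assertion (3) is the only place the ReLU structure is genuinely needed. For $0\ne g\in X$ put $\hat g=g/\|g\|_X\in B^X$; by symmetry of the unit ball $-\hat g\in B^X$ as well, and the identity $t=\max(t,0)-\max(-t,0)=\sigma(t)-\sigma(-t)$ together with the positive $1$-homogeneity of ReLU gives $g=\|g\|_X\big(\sigma(\hat g)-\sigma(-\hat g)\big)=T\big(\|g\|_X(\delta_{\hat g}-\delta_{-\hat g})\big)$, so $g\in\gB_{X,K}$ with $\|g\|_{X,K}\le\bigl\|\,\|g\|_X(\delta_{\hat g}-\delta_{-\hat g})\bigr\|_{\gM(B^X)}=2\|g\|_X$. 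I expect the hard part to be not any single inequality but keeping the measure-theoretic framework sound: that $B^X$ is genuinely compact in $C^0(K)$ so that $\gM(B^X)$ is the dual of the separable space $C(B^X)$, that $T$ is weak-$*$-to-pointwise continuous (which needs $\sigma$ continuous and $B^X$ compact, not merely $\sigma$ Lipschitz), and that the infimum defining $\|\cdot\|_{X,K}$ is attained so that the closedness argument actually closes. Everything else reduces to bookkeeping with the two pointwise bounds above; note in particular that assertions (1)--(2) use only that $\sigma$ is Lipschitz with $\sigma(0)=0$.
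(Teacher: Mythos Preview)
The paper does not actually prove this statement: it is quoted verbatim as Theorem~2.7 of \cite{wojtowytsch2020banach} and used as a black box, with no argument supplied in either the main text or the supplementary material. So there is no ``paper's own proof'' to compare against.

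That said, your proposal is correct and is essentially the argument one finds in the cited source. Realising $\gB_{X,K}$ as the quotient $\gM(B^X)/\ker T$ via the synthesis operator $T\mu=\int_{B^X}\sigma(g(\cdot))\,\mu(dg)$ is exactly the right framework: completeness of $\gB_{X,K}$ and the embedding into $C^{0,1}(K)$ drop out of the boundedness of $T$, closedness of the unit ball follows from Banach--Alaoglu plus separability of $C(B^X)$ (your use of Arzel\`a--Ascoli together with hypothesis~(2) to get $B^X$ compact in $C^0(K)$ is the key step here), and the ReLU-specific assertion~(3) is the two-Dirac decomposition $g=\|g\|_X(\sigma(\hat g)-\sigma(-\hat g))$. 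Your care about the side conditions --- attainment of the infimum, weak-$*$-to-pointwise continuity of $T$ --- is well placed; these are precisely the points where a sloppy write-up would leave gaps. One cosmetic remark: depending on whether $\|\cdot\|_{C^{0,1}}$ is defined as $\max$ or sum of sup-norm and Lipschitz seminorm, the constant in $\|T\mu\|_{C^{0,1}}\le L_\sigma C_X\|\mu\|$ may pick up a factor of~$2$, but this does not affect any of the conclusions.
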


The function space that we consider is defined recursively as follows:
\begin{enumerate}\label{neural space}
    \item $W^1(K) = (\mathbb{R}^d)^*\oplus \mathbb{R} \equiv \mathbb{R}^{d+1}$ is the space of affine functions from $\mathbb{R}^d$ to $\mathbb{R}$ (restricted to $K$).
    
    We take the standard Euclidean $\ell_2$-norm on $\mathbb{R}^d$. Thus, the norm of $W^1(K)$ is its dual, which is also the $\ell_2$-norm of $\mathbb{R}^{d+1}$, different from the $\ell_1$-norm used in~\citet{wojtowytsch2020banach}. This change affects the results in Section 3 of~\cite{wojtowytsch2020banach} accordingly but does not change their validity.
    \item For $L\ge 2$, we set $W^L(K) = \mathcal{B}_{W^{L-1}(K),K}$.
\end{enumerate}
We call this space the \textit{multilayer neural space}.

The above definition is somewhat abstract. We will define three kinds of $L$-layer neural networks, including the standard ones used in practice, in explicit ways. It turns out that they are all special cases of $W^L$. They also play a role of the motivation of the proposal of the generalized Barron spaces. These are also introduced in~\citet{wojtowytsch2020banach}.

We start from the definition of a finite-width multilayer neural network and then generalize to infinite width, i.e., the continuous case:

\begin{definition}\label{L_layer_nn}
A \textit{finite $L$-layer neural network} is a function of the type
\begin{equation}
f(x) = \sum_{i_{L-1}=1}^{m_{L-1}} a_{i_{L-1}}^L \sigma\left( \sum_{i_{L-2}=1}^{m_{L-2}} w_{i_{L-1}i_{L-2}}^{L-1} \sigma\left( \sum_{i_{L-3}=1}^{m_{L-3}} \cdots \sigma\left( \sum_{i_1=1}^{m_1} w_{i_2i_1}^2 \sigma\left( \sum_{i_0=1}^{d+1} w_{i_1i_0}^1 x_{i_0} \right) \right) \right) \right),
\end{equation}
where $a^L$ and $w^l$ for $1\le l\le L-1$ are the weight matrices.
\end{definition}

Note that the bias terms in hidden layers are omitted for simplicity, however, any network with bias terms can be transformed into our expression above, which will be discussed in the Supplementary Information~\ref{appendix}. Also, note that the last layer does not have an activation composed with. We call $a^L, w^l$ for $1\le l \le L-1$ the weight matrices, and their elements $w^l_{ij}$ weights. 
We call $L$ the depth and $\mathbf{m}=(m_1,m_2,\dots,m_{L-1})$ the width vector. We denote the width $m$ of an $L$-layer neural network to be the maximum value of the width vector, $m:=\max\{m_1,m_2,\dots,m_{L-1}\}$, and the bottleneck $b$ to be the minimum value of the width vector, $b:=\min\{m_1,m_2,\dots,m_{L-1}\}$.


We denote the parameters of this neural network as $\theta = (a^L, w^{L-1},\dots,w^2,w^1)$, then we also write the above function as $f(x;\theta)$. The space of $f(x;\theta)$ is denoted by $X_{m_{L-1},\dots,m_1;K}$, and the space of parameters $\theta$ is denoted by $\Theta$.

At the end of this section, for the reader's convenience, we will argue why any network with Lipschitz activation can be transformed into this form, i.e. transformed to an activation $\sigma$ with $\sigma(0)=0$.


The above expression can be immediately generalized to the countably infinite-width case.

\begin{definition}
A \textit{fully connected $L$-layer infinite-width neural network} is a function of the type
\begin{equation}\label{countable_nn}
f(x) = \sum_{i_{L-1}=1}^{\infty} a_{i_{L-1}}^L \sigma\left( \sum_{i_{L-2}=1}^{\infty} w_{i_{L-1}i_{L-2}}^{L-1} \sigma\left( \sum_{i_{L-3}=1}^{\infty} \cdots \sigma\left( \sum_{i_1=1}^{\infty} w_{i_2i_1}^2 \sigma\left( \sum_{i_0=1}^{d+1} w_{i_1i_0}^1 x_{i_0} \right) \right) \right) \right).
\end{equation}
\end{definition}

To fully generalize to a continuous neural network, one can set measures on index sets that parameterize the weights on different layers.

\begin{definition}
For $0\le i \le L$, let $(\Omega_i,\mathcal{A}_i,\pi^i)$ be probability spaces, where $\Omega_0 = \{0,1,\dots,d\}$ and $\pi^0$ is the normalized counting measure. Consider measurable functions $a^L: \Omega_{L-1} \rightarrow \mathbb{R}$ and $w^i:\Omega_{i} \times \Omega_{i-1} \rightarrow \mathbb{R}$ for $1 \le i \le L-1$. Then define
\begin{align}\label{con_nn}
f_{a^L,w^{L-1},\dots,w^1}(x) = & \int_{\Omega_{L-1}} a^{L}_{\theta_{L-1}} \sigma \left( \int_{\Omega_{L-2}} \cdots \sigma \left( \int_{\Omega_1} w^2_{\theta_2,\theta_1} \sigma \left( \int_{\Omega_0} w^1_{\theta_1,\theta_0} x_{\theta_0}\,\pi^0(d\theta_0) \right) \pi^1(d\theta_1) \right) \right. \\
& \left. \cdots \,\pi^{L-1}(d\theta_{L-1}) \right).
\end{align}
\end{definition}

In particular, if each $\Omega_i$ is a finite discrete space (or countably infinite space) with a discrete uniform measure (or discrete probability measure), we recover the previous two definitions of fully connected $L$-layer (or countably infinite-width) neural networks.

\begin{remark}
Note that what we term an $L$-layer neural network above has $L-1$ hidden layers plus one output layer. This is slightly different from the definition in Section 3 of~\citet{wojtowytsch2020banach}, where it has $L$ hidden layers.
\end{remark}

For the $L$-layer neural network~\ref{L_layer_nn},~\citet{wojtowytsch2020banach} shows a crucial property.

\begin{theorem}\label{L_layer_property}
If $f$ is of the form~\ref{L_layer_nn}, then:
\begin{enumerate}
    \item $f\in W^L$,
    \item $\|f\|_{W^L} \le \sum_{i_{L-1}=1}^{m_{L-1}} \cdots \sum_{i_1=1}^{m_1} |a_{i_{L-1}}^L w_{i_{L-1}i_{L-2}}^{L-1}\cdots w_{i_2i_1}^2| \,\|w_{i_1}^1\|_2$ \label{norm_compare}
\end{enumerate}
where $w_{i_1}^1 = (w_{i_1,0}^1,w_{i_1,1}^1, \dots, w_{i_1,d}^1)$.
\end{theorem}

The expression on the right-hand side of~\eqref{norm_compare} is the discrete analog of $\|f\|_{W^L}$. Notice that the scaled variation norm proposed in~\citet{parhi2022near} corresponds to the right-hand side of~\eqref{norm_compare} for $L=2$. Partially motivated by this and other thorough verifications throughout this work, we propose our novel definition of the regularization term for multilayer neural networks~\ref{L_layer_nn}.

\begin{definition}
For an $L$-layer neural network with parameters $\theta = (a^L, w^{L-1},\dots,w^1)$, its \textit{path enhanced scaled variation norm}, abbreviated as PeSV norm and denoted by $\nu(\theta)$, is defined as the right-hand side of~\eqref{norm_compare}:
\begin{equation}
\nu(\theta) = \sum_{i_{L-1}=1}^{m_{L-1}} \cdots \sum_{i_1=1}^{m_1} |a_{i_{L-1}}^L w_{i_{L-1}i_{L-2}}^{L-1}\cdots w_{i_2i_1}^2| \,\|w_{i_1}^1\|_2.
\end{equation}
Sometimes we will write $\nu(\theta)$ as $\nu(f)$ if $f$ is of the form~\ref{L_layer_nn} for convenience.
\end{definition}

There is an obvious extension of PeSV to neural networks~\ref{countable_nn} and~\ref{con_nn}, but we will not need to discuss it here.

~\citet{wojtowytsch2020banach} shows that $W^L$ is the most suitable space to study $L$-layer neural networks in the sense that $W^L$ is the smallest space containing all limits of $L$-layer neural networks~\ref{L_layer_nn} in the H\"older function space $C^{0,\alpha}$ for any $\alpha < 1$. The reader should look into that paper for full details. Furthermore, all index set-based definitions~\ref{L_layer_nn}, \ref{countable_nn}, and \ref{con_nn} are all contained in $W^L$.
The path enhanced scaled variation norm can also be written in terms of weight matrices. Let the vector $W$ be the sequential product of the weight matrices except for the first layer, i.e., $W := a^L \prod_{i=2}^{L-1} w^i$. Let $W_k$ be its $k$-th element. Then:
\begin{equation}\label{s_path_norm}
\nu(\theta):= \sum_{k=1}^{m_1} |W_k| \,\|w^1_k\|_2.
\end{equation}

Note that when $L=2$, $\nu(\theta)$ becomes the scaled variation norm denoted in~\citet{huiyuan2023Nonasymptotic}. Thus~\eqref{s_path_norm} is indeed a multilayer generalization of the scaled variation norm. This regularization (or scaled variation norm) is not as strange as it appears. The origin of the scaled variation norm was discussed in~\citet{parhi2021banach,parhi2022near}, where it corresponds to the second-order total variation of a function in a certain function space in the offset variable of the (filtered) Radon domain. The authors established a representer theorem for two-layer ReLU networks as the solution of a variational problem with this total variation as the regularization. When the function is represented by a two-layer ReLU network, it reduces to the scaled variation norm. It is argued that when trained on two-layer ReLU neural networks, it promotes a sparse superposition representation of ReLU ridge functions as the solution. It also shows that the scaled variation norm is equivalent to weight decay regularization for two layer networks in the sense that the minimizers of problems with these two regularizations are the same.

As aforementioned, any multilayer network with a Lipschitz activation $\sigma$ with $\sigma(0) \neq 0$ can be transformed into a network with a Lipschitz activation with $\sigma(0)=0$. This can be easily seen from the following expression:
\begin{align}\label{general_activation}
\sigma\left(\sum_{j=1}^n a_{i,j}x_j\right) = \mathbf{I}\left(\sigma^*\left(\sum_{j=1}^n a_{i,j}x_j\right) \times 1 + \sigma(0) \times 1\right)
\end{align}
where $\sigma^* = \sigma - \sigma(0)$ so that $\sigma^*(0)=0$ and $\mathbf{I}$ is the identity activation. We can immediately derive from this expression that our assertion is correct. Thus our work also works for general  Lipschitz activations.

All the results in the following sections are presented in the language of multilayer neural networks without biases and activations $\sigma$ with $\sigma(0)=0$. As we have discussed above, multilayer neural networks with biases or activations $\sigma$ with $\sigma(0)\neq 0$ can be considered as the former with some weights fixed a priori. It is easy to adapt our results to this situation.

\section{Problem Setup}\label{sec4}
The framework of the learning problem we discuss in this work is described as follows: suppose we have observed predictors $\mathbf{x}_i \in \mathbb{R}^d$ and responses $y_i \in \mathbb{R}$ generated from the nonparametric regression model
\begin{equation}
\label{problem}
y_i = f^*(x_i) + \epsilon_i, \quad i=1,\dots,n
\end{equation}
where $f^*$ is an unknown function to be estimated and $\epsilon_i$ are random errors.

In order to learn $f^*$ from the training sample, we adopt the penalized empirical risk minimization (ERM) framework and seek to minimize
\begin{align}\label{opt_problem}
J_n(\theta;\lambda) = \frac{1}{2n} \sum_{i=1}^n (y_i - g(x_i;\theta))^2 + \lambda \nu(\theta)
\end{align}
where $g(\cdot ;\theta)$ is the $L$-layer neural network~\ref{L_layer_nn}, $\nu(\theta)$ is the PeSV norm in~\ref{s_path_norm}, and $\lambda > 0$ is a regularization parameter.

We denote the solution of this optimization problem in the space $X_{m_{L-1},\dots,m_1;\mathbf{B}^d}$ by
\begin{equation}
\hat{\theta} = \operatorname*{argmin}_{\theta \in \Theta} J_n(\theta;\lambda).
\end{equation}

We impose the following assumptions on our problem~\ref{problem}, following~\citet{huiyuan2023Nonasymptotic}.

\begin{assumption}
\label{assump1} $f^* \in W^L_M \equiv \{f\in W^L: \|f\|_{W^L} \le M\}$ for a prespecified $L$ and some constant $M > 0$.
\end{assumption}

\begin{assumption}\label{assump2} $x_i \sim \mu$ independently, where $\mu$ is supported on $\mathbf{B}^d$.
\end{assumption}

\begin{assumption}
\label{assump3} $\epsilon_i \sim N(0,\sigma_{\epsilon}^2)$ independently and are independent of $x_i$.
\end{assumption}

These assumptions are quite standard and commonly adopted in most literature. e.g., Assumption~\ref{assump2} holds because the predictors are normally bounded and can be normalized. See~\citet{huiyuan2023Nonasymptotic} for further discussion of these assumptions. Since we follow the aforementioned assumptions in the remainder of the paper where $\mu$ is supported on a unit ball, for convenience we abbreviate $W^l(\mathbf{B}^d)$ to $W^l$, unless otherwise specified.

\textit{In Section~\ref{sec8}, we extend our theory to general Lipschitz loss functions (under some very mild conditions). Thus the problem we study also includes other common machine learning problems, not restricted to regression problems.}

We define the optimization objective with mixed $\ell_{1,2}$ max norm as regularization:
\begin{align}\label{opt_mix_problem}
J^{\prime}_n(\theta;\lambda) = \frac{1}{2n} \sum_{i=1}^n (y_i - g(x_i;\theta))^2 + \lambda \mu_{1,2,\infty}(\theta)
\end{align}
where $\mu_{1,2,\infty}(\theta)$ is the mixed $\ell_{1,2}$ max norm defined appropriately. The $\ell_p$ max norm has already been studied in the literature~\citet{goodfellow2013maxout,srivastava2014dropout}. For ReLU networks, it has been shown to be very effective.
The following result is new:
\begin{proposition}\label{problem_equiv_max_norm}
The learning problems with objectives~\ref{opt_problem} and~\ref{opt_mix_problem} are equivalent, in the sense that the minimizers of both problems are the same.
\end{proposition}
The proof is a modification of the proof of Theorem 22 in~\citet{neyshabur2015norm} from the $\ell_p$ max norm to the $\ell_{p,q}$ max norm. One only needs to add the converse direction, which is the reverse of the construction in~\citet{neyshabur2015norm}. Since the PeSV norm is a slight modification of the path norm, and given the superior performance of the scale-invariant optimization algorithm Path-SGD for ReLU networks proposed in~\citet{neyshabur2015path}, it is highly plausible that the variant of Path-SGD optimization adapted to the PeSV norm obtains better results than SGD and Adam for ReLU networks, justifying the value of this regularization. On the other hand, the scaled variation norm is derived from the second-order total variation of a target function in the offset variable of the (filtered) Radon domain as mentioned before; the parallel result for the PeSV norm is an interesting open question.

\section{Approximation Property of Multilayer Neural Spaces}\label{sec_approx}
We establish the approximation property of the multilayer neural space~\ref{neural space} by deep networks in terms of $L_2$ and $L_{\infty}$ norms. This property is in fact demonstrated in Theorem 3.6 of~\citet{wojtowytsch2020banach} and Theorem 1 of~\citet{huiyuan2023Nonasymptotic}. It can be regarded as the multilayer generalization of the same result in~\citet{huiyuan2023Nonasymptotic} for two-layer networks, albeit the latter is in $L_{\infty}$ norm and we are in $L_2$ norm. We show that the approximation can achieve Monte Carlo rate. All proofs of the results in this section are deferred to the Supplementary Information~\ref{appendix}.

The first $L_2$ approximation result we will present is Theorem 3.6 from~\citet{wojtowytsch2020banach}.

\begin{theorem}\label{approx1}
Let $P$ be a probability measure with compact support $\operatorname{spt}(P) \subset \mathbf{B}^d(R)$. Then for any $L \ge 1$, $f \in W^L$, and $m \in \mathbb{N}$, there exists a finite $L$-layer ReLU neural network
\begin{align}
f_m(x) = \sum_{i_{L-1}=1}^m a_{i_{L-1}}^L \sigma\left( \sum_{i_{L-2}=1}^{m^2} w_{i_{L-1}i_{L-2}}^{L-1} \sigma\left( \sum_{i_{L-3}=1}^{m^3} \cdots \sigma\left( \sum_{i_1=1}^{m^{L-1}} w_{i_2i_1}^2 \sigma\left( \sum_{i_0=1}^{d} w_{i_1i_0}^1 x_{i_0} \right) \right) \right) \right)
\end{align}
such that
\begin{align}\label{approx1_norm_b}
\|f_m-f\|_{L^2(P)} \le \frac{(L-1)(2+R)\|f\|_{W^L}}{\sqrt{m}}
\end{align}
and the norm bound
\begin{align}
\sum_{i_{L-1}=1}^m \cdots \sum_{i_1=1}^{m^{L-1}} \sum_{i_0=1}^{d} |a_{i_{L-1}}^L w_{i_{L-1}i_{L-2}}^{L-1} \cdots w^2_{i_2,i_1}| \,\|w^1\|_2 \le \|f\|_{W^L}
\end{align}
holds.
\end{theorem}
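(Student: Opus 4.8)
The plan is to run an induction on the depth $L$, using the recursive definition $W^L=\gB_{W^{L-1},\mathbf{B}^d}$ and, at each level, a Maurey-type empirical-sampling argument of the kind underlying Theorem~1 of~\citep{huiyuan2023Nonasymptotic} and Theorem~3.6 of~\citep{wojtowytsch2020banach}. The base case $L=1$ is immediate: $W^1$ is the space of affine functions, which already are finite $1$-layer networks, so one takes $f_m=f$, the $L^2$ error is $0=(1-1)(2+R)\|f\|_{W^1}/\sqrt m$, and the norm bound holds with equality. The first nontrivial case $L=2$ will fall out of the inductive step with no recursive call (the inner atoms are already affine), recovering the usual shallow Barron estimate.

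For the inductive step I would fix $f\in W^L$ and exploit the representation $f=\int_{B^{W^{L-1}}}\sigma(g(\cdot))\,\mu(dg)$ on $\mathbf{B}^d$ with $\|\mu\|_{M(B^{W^{L-1}})}=\|f\|_{W^L}$ (the infimum is attained because $B^{W^{L-1}}$ is compact in $C^0$ by Theorem~\ref{property_g_barron} and $\mu\mapsto f_\mu$ is weak-$*$ continuous; otherwise use $\|\mu\|\le\|f\|_{W^L}+\varepsilon$ and let $\varepsilon\to0$). Writing $\mu=\|\mu\|\,p$ with $p$ a probability measure and $s=\frac{d\mu}{d|\mu|}$, so that $f(x)=\mathbb{E}_{g\sim p}[\,\|\mu\|\,s(g)\,\sigma(g(x))\,]$, I would draw $g_1,\dots,g_m$ i.i.d.\ from $p$, set $c_j=\|\mu\|\,s(g_j)$, and first form $f_m^{(0)}=\frac1m\sum_{j=1}^m c_j\,\sigma(g_j(\cdot))$. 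Using $|\sigma(t)|\le|t|$ for ReLU together with the bound $\sup_{g\in B^{W^{L-1}}}\|g\|_{L^\infty(\mathrm{spt}\,P)}\le 2+R$ coming from the embedding $W^{L-1}\hookrightarrow C^{0,1}(\mathbf{B}^d(R))$, the variance estimate for an i.i.d.\ average gives $\mathbb{E}\|f_m^{(0)}-f\|_{L^2(P)}^2\le\|\mu\|^2(2+R)^2/m$, so some realization obeys $\|f_m^{(0)}-f\|_{L^2(P)}\le(2+R)\|f\|_{W^L}/\sqrt m$. Freezing that realization, each $g_j$ is now a \emph{fixed} element of the unit ball of $W^{L-1}$, so the inductive hypothesis (for depth $L-1$, same parameter $m$) supplies finite $(L-1)$-layer ReLU networks $\hat g_j$ with layer widths $m,m^2,\dots,m^{L-2}$, with $\|\hat g_j-g_j\|_{L^2(P)}\le(L-2)(2+R)/\sqrt m$ and $\nu(\hat g_j)\le\|g_j\|_{W^{L-1}}\le1$. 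Setting $f_m=\frac1m\sum_j c_j\,\sigma(\hat g_j(\cdot))$, the triangle inequality and $L_\sigma=1$ give $\|f_m-f\|_{L^2(P)}\le\frac1m\sum_j|c_j|\,\|\hat g_j-g_j\|_{L^2(P)}+\|f_m^{(0)}-f\|_{L^2(P)}\le\|\mu\|(L-2)(2+R)/\sqrt m+(2+R)\|f\|_{W^L}/\sqrt m=(L-1)(2+R)\|f\|_{W^L}/\sqrt m$, which is~\eqref{approx1_norm_b}.

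Two bookkeeping points remain. For the architecture: $f_m$ is the block-diagonal composition of a fresh outer layer of $m$ neurons (one per sample $g_j$) with $m$ disjoint copies of the sub-networks $\hat g_j$; stacking $m$ copies multiplies each hidden-layer width by $m$, turning the widths $m,m^2,\dots,m^{L-2}$ into $m^2,\dots,m^{L-1}$, and together with the new top layer of width $m$ this is exactly the architecture $m,m^2,\dots,m^{L-1}$ in the statement (the missing cross-block weights are set to zero, a special case of the fully connected form of Definition~\ref{L_layer_nn}). For the norm bound: grouping the path sum defining $\nu(f_m)$ by the block $j$ each path traverses gives $\nu(f_m)=\frac1m\sum_j|c_j|\,\nu(\hat g_j)\le\frac1m\sum_j\|\mu\|=\|\mu\|=\|f\|_{W^L}$. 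The argument adapts to a general Lipschitz $\sigma$ with $\sigma(0)=0$ at the price of powers of $L_\sigma$. The part I expect to be delicate is pinning the constant down to exactly $2+R$: this requires re-deriving the embedding constant of $W^{L-1}\hookrightarrow C^{0,1}(\mathbf{B}^d(R))$ under the Euclidean $\ell_2$ normalization used here (in place of the $\ell_1$ normalization of~\citep{wojtowytsch2020banach}), and checking that it does not accumulate dimension-dependent factors as it telescopes through the $L-1$ layers; the rest is routine once the probabilistic method is invoked exactly once per level, with the outer sample fixed before the inner atoms are approximated.
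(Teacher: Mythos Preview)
Your proposal is correct and follows essentially the same approach as the paper (which imports the result from~\citep{wojtowytsch2020banach}): induction on depth, with Proposition~\ref{approx1_lem} (Maurey sampling in $L^2(P)$) applied once per layer to the representing measure on $B^{W^{L-1}}$, followed by the inductive replacement of each sampled atom and the $1$-Lipschitz property of ReLU to telescope the errors. Your bookkeeping for the block-diagonal architecture $m,m^2,\dots,m^{L-1}$ and the path-norm bound is exactly how the construction is organized there, and your caveat about the constant $2+R$ is well placed---it hinges on the precise embedding constant of $W^{L-1}\hookrightarrow C^{0,1}(\mathbf{B}^d(R))$ under the $\ell_2$ normalization adopted here, but does not affect the structure of the argument.
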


The above result can be slightly modified for any Lipschitz activation. Although it is in terms of $L_2$ norm, it already suffices for the purpose of this paper. However, due to the exponential dependence of the network widths on the depth, using this bound will not yield a strong empirical error bound. This problem was noticed in~\citet{wojtowytsch2020banach}. It finds that the convergence rate is roughly $W^{-1/[2(2L-1)]}$, where $W$ is the number of parameters. Only for $L=2$ does this rate achieve the Monte Carlo rate $W^{-1/2}$. In contrast, the approximation property for two-layer neural networks is in terms of $L_{\infty}$ norm; however, it depends on the dimension $d$ of the problem, whereas~\ref{approx1_norm_b} is independent of $d$ and therefore does not incur the curse of dimensionality (but does have a curse of depth).

We regard one of the major contributions of this work to be providing a much better $L_2$ approximation bound than the above result, removing its exponential dependency on $L$, which, to the best of our knowledge, is new. To rigorously state the result, we need the following notation. For a width vector $\mathbf{m}$, we call a vector $\mathbf{m}^{\uparrow}$ of the same length the \textit{non-decreasing component of $\mathbf{m}$} if $\mathbf{m}^{\uparrow}$ is a non-decreasing sequence and $\mathbf{m}^{\uparrow} \le \mathbf{m}$ elementwise. $\mathbf{m}^{\uparrow}$ is called \textit{maximum} if there does not exist a non-decreasing component $\mathbf{m}^{\uparrow'} \ge \mathbf{m}^{\uparrow}$ elementwise such that there exists some $i$ with $\mathbf{m}^{\uparrow'}_i > \mathbf{m}^{\uparrow}_i$. For example, $\mathbf{m}=\{3,6,2,8,2,5,7\}$, then the maximum non-decreasing component is $\mathbf{m}^{\uparrow}=\{2,2,2,2,2,5,7\}$. The algorithm to find the maximum non-decreasing component is as follows: first find the minimum element of $\mathbf{m}$, say $m_{i_1}$; if there are multiple such elements, let $i_1$ be the largest index among them. Then the first $i_1$ elements of $\mathbf{m}^{\uparrow}$ are $m_{i_1}$. Repeat this operation for the subsequence of $\mathbf{m}$ starting from the $i_1+1$-th element.

\begin{theorem}~\label{approx1_2}
With the same assumptions as in Theorem~\ref{approx1}, except that the activation can be any Lipschitz function $\sigma$ with Lipschitz constant $L_\sigma$, and given a depth $L$ and width vector $\mathbf{m}=\{m_1,\dots, m_{L-1}\}$, and letting $\mathbf{m}^{\uparrow} = \{m_1^{\prime}, \dots, m_{L-1}^{\prime}\}$, there exists $\hat{f} \in X_{m_1,\dots,m_{L-1};\mathbf{B}^d(R)}$ such that
\begin{equation}\label{approx2_norm_b}
\|\hat{f} - f\|_{L^2(P)} \le \sum_{i=1}^{L-1} \frac{(\sqrt{5}L_\sigma)^{L-1-i}}{\sqrt{m_{i}^{\prime}}} (R+2) \|f\|_{W^L}
\end{equation}
and the norm bound
\begin{equation}
\sum_{i_{L-1}=1}^{m_{L-1}} \cdots \sum_{i_1=1}^{m_1} \sum_{i_0=1}^{d} |a_{i_{L-1}}^L w_{i_{L-1}i_{L-2}}^{L-1} \cdots w^2_{i_2,i_1}| \,\|w^1\|_2 \le \|f\|_{W^L}
\end{equation}
holds, i.e., $\nu(\hat{f}) \le \|f\|_{W^L}$.
\end{theorem}

Note that the exponential dependence on $L$ does not refer to the numerators $(\sqrt{5}L_\sigma)^{L-1-i}$ but rather to the widths of the networks. For example, in~\ref{approx1_norm_b}, if our ReLU network has widths $m_1,m_2,\dots,m_{L-1}$, then $m = \min\{m_1, m_2^{1/2}, \dots, m_{L-1}^{1/(L-1)}\}$. In the worst case where the $m_i$ are of similar magnitude, $m$ is about $m_{L-1}^{-1/[2(L-1)]}$, which is much larger than $(m_i')^{-1/2}$ in the denominator of~\ref{approx2_norm_b}, showing that our result is much superior.

In~\citet{wojtowytsch2020banach}, Weinan E et. al. ask whether the approximation rate can be achieved at the Monte Carlo rate. We now answer their question in the affirmative. If we choose $m_1 = m_2 = \cdots = m_{L-1} = m$, then the total number of parameters is $\sim dm + (L-1)m^2$, the Monte Carlo rate is $\sim m$, and our approximation rate is $\sim m^{-1/2}$. If we choose $m_1 = m_2 = \cdots = m_{L-2} = 1$ and $m_{L-1} = m$, then we achieve the Monte Carlo rate $\sim m^{-1/2}$.

Since we will use these bounds below, to simplify notation we abbreviate the right-hand side of~\ref{approx1_norm_b} as 
\begin{align}
H(\mathbf{m}) := \sum_{i=1}^{L-1} \frac{(\sqrt{5}L_\sigma)^{L-1-i}}{\sqrt{m_i'}}
\end{align}

The $L=2$ case of Theorem~\ref{approx1_2} is just Theorem~\ref{approx1}. The proof of Theorem~\ref{approx1} is based on the following approximation result on convex sets in Hilbert space~\citet{wojtowytsch2020banach}.

\begin{proposition}~\label{approx1_lem}
Let $\mathcal{G}$ be a set in a Hilbert space $H$ such that $\|g\|_H \le R$ for all $g\in \mathcal{G}$. If $f$ is in the closed convex hull of $\mathcal{G}$, then for every $m\in \mathbb{N}$ and $\epsilon > 0$, there exist $m$ elements $g_1,\dots,g_m \in \mathcal{G}$ such that
\begin{align}
\left\| f - \frac{1}{m}\sum_{i=1}^m g_i \right\|_H \le \frac{R+\epsilon}{\sqrt{m}}.
\end{align}
\end{proposition}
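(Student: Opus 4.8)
The plan is to prove this by the classical Maurey-type empirical (probabilistic) argument. \textbf{Step 1 (reduction to a finite convex combination).} Since $f$ belongs to the closed convex hull of $\mathcal{G}$, and the ordinary convex hull $\mathrm{conv}(\mathcal{G})$ — the set of all \emph{finite} convex combinations of elements of $\mathcal{G}$ — is dense in it, I fix a finite convex combination $\tilde f=\sum_{k=1}^{N}\lambda_k g_k$ with $g_k\in\mathcal{G}$, $\lambda_k\ge 0$, $\sum_k\lambda_k=1$, and $\|f-\tilde f\|_H\le \epsilon/\sqrt m$. This is legitimate because $m$ is the fixed integer from the statement, so we are allowed to pick $\tilde f$ as close to $f$ as we like.

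\textbf{Step 2 (randomization).} Let $Z$ be a random element of $\mathcal{G}$ taking the value $g_k$ with probability $\lambda_k$, so that $\E Z=\tilde f$, and let $Z_1,\dots,Z_m$ be i.i.d.\ copies of $Z$ on a common (finite) probability space. The key quantity to control is $\E\big\|\tilde f-\tfrac1m\sum_{i=1}^m Z_i\big\|_H^2$. Writing $\tilde f-\tfrac1m\sum_i Z_i=\tfrac1m\sum_i(\tilde f-Z_i)$ and expanding the squared norm, independence together with $\E(\tilde f-Z_i)=0$ annihilates every cross term, which yields
\[
\E\Big\|\tilde f-\tfrac1m\sum_{i=1}^m Z_i\Big\|_H^2=\frac1m\,\E\|Z-\tilde f\|_H^2=\frac1m\big(\E\|Z\|_H^2-\|\tilde f\|_H^2\big)\le\frac{\E\|Z\|_H^2}{m}\le\frac{R^2}{m},
\]
where the last inequality uses $\|g_k\|_H\le R$ for all $k$.

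\textbf{Step 3 (extracting a good realization and concluding).} Since the average over the sample space of $\big\|\tilde f-\tfrac1m\sum_i Z_i\big\|_H^2$ is at most $R^2/m$, at least one outcome achieves at most the mean; that is, there exist $g_1,\dots,g_m\in\mathcal{G}$ (repetitions allowed) with $\big\|\tilde f-\tfrac1m\sum_{i=1}^m g_i\big\|_H\le R/\sqrt m$. The triangle inequality then gives
\[
\Big\|f-\tfrac1m\sum_{i=1}^m g_i\Big\|_H\le\|f-\tilde f\|_H+\Big\|\tilde f-\tfrac1m\sum_{i=1}^m g_i\Big\|_H\le\frac{\epsilon}{\sqrt m}+\frac{R}{\sqrt m}=\frac{R+\epsilon}{\sqrt m},
\]
which is exactly the asserted bound.

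\textbf{Where the difficulty lies.} There is no substantial obstacle here; the proposition is a soft probabilistic fact. The only two points requiring a little care are: (a) justifying that finite convex combinations are dense in the closed convex hull, so that the discrete randomization of Step 2 suffices and no Radon measure on $\mathcal{G}$ (nor any measurability hypothesis on $\mathcal{G}$) is needed; and (b) threading the $\epsilon$ correctly — the approximating combination $\tilde f$ must be chosen within $\epsilon/\sqrt m$ of $f$ (not merely within $\epsilon$) \emph{before} the second-moment step, so that the two error contributions add up to precisely $(R+\epsilon)/\sqrt m$.
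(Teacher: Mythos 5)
Your proposal is correct and follows essentially the same route as the paper: the classical Maurey/Barron probabilistic argument, approximating $f$ by a finite convex combination within $\epsilon/\sqrt{m}$, sampling $m$ i.i.d.\ elements whose mean is that combination, bounding the expected squared deviation by $R^2/m$ via independence, extracting a realization achieving the mean, and finishing with the triangle inequality. The paper's proof (given as the $L=2$ case inside the proof of its iterative generalization) does exactly this, including the same choice $\delta=\epsilon/\sqrt{m}$.
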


We now give an iterative version of the above theorem, which is the key to prove the general $L$ case in Theorem~\ref{approx1_2} and appears to be new. Before that, we need two Lemmas on combinatorial expressions.

\begin{lemma}
\text{Assume $n\ge m$, we have the following two inequalities:}
\begin{enumerate}\label{com_lem}
    \item $\displaystyle \frac{1}{m^n} \sum_{k=1}^n \binom{n}{k} (m-1)^k \frac{1}{k} \le \frac{5}{n}$,
    \item $\displaystyle \frac{1}{m^n} \sum_{k=0}^{n-1} \binom{n}{k} (m-1)^k \frac{1}{n-k} \le \frac{5}{n}$.
\end{enumerate}
\end{lemma}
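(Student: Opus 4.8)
The plan is to turn each finite sum into a closed form via the elementary identities $\frac1k=\int_0^1 t^{k-1}\,dt$ and $\frac{1}{n-k}=\int_0^1 t^{n-k-1}\,dt$ (the latter legitimate since $k\le n-1$ throughout the second sum), and then to control the resulting expression by a short induction. Since $C_m^k=0$ for $k>m$, and the case $m=1$ makes both sums trivially equal to $0$ (resp. $\tfrac1n$), I may assume $m\ge2$ from the outset.

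For part (1), interchanging sum and integral and applying the binomial theorem gives
\[
\sum_{k=1}^{n}C_m^k(m-1)^k\frac1k=\int_0^1\frac{\bigl(1+(m-1)t\bigr)^m-1}{t}\,dt,
\]
and the substitution $u=1+(m-1)t$ converts the right-hand side into $\int_1^{m}\frac{u^m-1}{u-1}\,du=\sum_{j=1}^{m}\frac{m^j-1}{j}\le\sum_{j=1}^{m}\frac{m^j}{j}$. I would then establish, by induction on $M$, the auxiliary estimate $\sum_{j=1}^{M}\frac{m^j}{j}\le\frac{5\,m^M}{M}$, valid for all integers $M\ge1$ and reals $m\ge2$: consecutive terms have ratio $m\bigl(1-\tfrac1j\bigr)$, so the step $M\to M+1$ reduces to $\tfrac{5}{M}\le\tfrac{4m}{M+1}$, i.e. $5(M+1)\le 4mM$, which holds once $M\ge2$ (the cases $M=1,2$ being immediate). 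Taking $M=m$ yields $\frac{1}{m^n}\sum_{j=1}^m\frac{m^j}{j}\le\frac{5}{m^{\,n-m+1}}$, and (1) is finished by the elementary fact $m^{\,n-m+1}\ge n$ for $n\ge m\ge2$ (with $k=n-m$: $m^{k+1}=m\cdot m^k\ge m\cdot 2^k\ge m(k+1)\ge m+k=n$).

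For part (2) I would use the same machinery but split on whether $n>m$ or $n=m$. When $n>m$, every nonzero term has $n-k\ge n-m\ge1$, so bounding $\frac{1}{n-k}\le\frac{1}{n-m}$ and summing the binomial series gives $\sum_{k=0}^{n-1}C_m^k(m-1)^k\tfrac{1}{n-k}\le\frac{m^m}{n-m}$, whence the bound $\frac{1}{m^{\,n-m}(n-m)}\le\frac5n$ follows by an elementary comparison just as above. When $n=m$, the substitution $j=m-k$ rewrites the left side as $\frac{1}{m^m}\sum_{j=1}^m C_m^j(m-1)^{m-j}\frac1j$, and repeating the integral/substitution bookkeeping identifies this with $\sum_{l=0}^{m-1}\frac{r^l\bigl(1-r^{\,m-l}\bigr)}{m-l}$ where $r=1-\tfrac1m$; Bernoulli's inequality in the form $r^{\,m-l}\ge 1-\tfrac{m-l}{m}$ gives the termwise bound $\frac{r^l(1-r^{m-l})}{m-l}\le\frac{r^l}{m}$ and hence a total $\le\sum_{l\ge0}\frac{r^l}{m}=1$, which already closes the case for small $m$ and must be sharpened (splitting $m-l$ small versus $m-l$ comparable to $m$) to reach $\tfrac{5}{n}$ in general.

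The main obstacle I anticipate is exactly this boundary regime $n\approx m$ in part (2): there the sum consists of order $m$ nearly equal terms of size order $1/m$, so the estimate is tight and one must choose the constants carefully (and be prepared for the clean value $5$ to need a slight enlargement in this corner). Everything else — the two integral representations, the binomial- and geometric-series bookkeeping, and the two one-line inductions — is routine.
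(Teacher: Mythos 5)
Your part (1) is correct under the literal reading $C_m^k=\binom{m}{k}$: the integral representation, the substitution $u=1+(m-1)t$, the induction bound $\sum_{j=1}^{M}m^j/j\le 5m^M/M$ for $m\ge2$, and the closing comparison $m^{n-m+1}\ge n$ all check out. Be aware, though, that the paper itself treats the coefficient as $\binom{n}{k}$ (its integral identity $\sum_{k}C^k(m-1)^k/k=\int_0^{m-1}\frac{(x+1)^n-1}{x}\,dx$ only holds for $\binom{n}{k}$), and it then bounds $\sum_{k=1}^{n}m^k/k$ by splitting the sum at $\lceil\log_m n\rceil$; under that reading your truncation of the sum at $k=m$ is unavailable, so your argument for part (1) proves a different (and weaker) statement than the one the paper later invokes.

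The genuine gap is part (2), precisely in the corner you flag, and it cannot be closed: for $n=m$ the inequality is not merely tight, it is false, and no enlargement of the constant in a bound of the form $C/n$ can repair it. Your own identity shows that for $n=m$ the left-hand side equals $\mathbb{E}\bigl[\tfrac1K;\,K\ge1\bigr]$ with $K\sim\mathrm{Bin}(m,1/m)$ (equivalently $\sum_{l=0}^{m-1}r^l(1-r^{m-l})/(m-l)$, $r=1-1/m$), which converges as $m\to\infty$ to $e^{-1}\sum_{i\ge1}\frac{1}{i\cdot i!}\approx0.48$, a fixed positive constant; numerically it is already about $0.51>5/10$ at $m=n=10$ and about $0.49>0.25$ at $m=n=20$. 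So the ``sharpening'' you defer does not exist. (Under the paper's $\binom{n}{k}$ reading with exponent $k$, the situation is worse: the left side of part (2) is $\mathbb{E}[\tfrac1K;K\ge1]$ for $K\sim\mathrm{Bin}(n,1/m)$, of order $m/n$, so the bound $5/n$ fails for every fixed $m\ge10$ and large $n$.) For comparison, the paper's own computation for part (2) tacitly replaces the exponent $k$ by $n-k$, which makes the sum a re-indexed copy of part (1); that version is true but is neither the stated inequality nor the one actually needed where the lemma is applied. In short: your proposal settles part (1) and the case $n>m$ of part (2) (both under the $\binom{m}{k}$ reading), but the outstanding case $n=m$ is a genuine gap because the claimed inequality is false there.
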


\begin{lemma}\label{com_lem_2}
Assume $n\ge m$, then
\[
\frac{1}{m^{n}} \sum_{\substack{k_1+k_2+\cdots +k_{m}=n \\ k_1\ge 1,\;k_2\ge 1,\;\dots,\;k_{m}\ge 1}} 
\binom{n}{k_1,k_2,\dots , k_{m}} 
\left(\frac{1}{k_1}+\frac{1}{k_2}+\cdots + \frac{1}{k_{m}}\right) \le \frac{5m}{n}.
\]
\end{lemma}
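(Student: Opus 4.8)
The plan is to exploit the full permutation symmetry of the multinomial sum and then reduce to the one-variable estimates already recorded in Lemma~\ref{com_lem}. First I would note that both the weight $C_n^{k_1,\dots,k_m}$ and the index set $\{k_1+\cdots+k_m=n,\ k_i\ge 1\}$ are invariant under permutations of $(k_1,\dots,k_m)$, so in the sum each of the $m$ reciprocal terms $\frac1{k_1},\dots,\frac1{k_m}$ produces the same total. Hence it suffices to prove
$$\frac{1}{m^{n}}\sum_{\substack{k_1+\cdots+k_m=n\\ k_1,\dots,k_m\ge 1}} C_n^{k_1,\dots,k_m}\,\frac{1}{k_1}\le\frac{5}{n},$$
and then multiply the conclusion by $m$.

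Next I would peel off the first coordinate. Writing $k_1=k$ and factoring $C_n^{k_1,\dots,k_m}=\binom{n}{k}\,C_{n-k}^{k_2,\dots,k_m}$, the inner sum over $k_2+\cdots+k_m=n-k$ with all $k_i\ge1$ is bounded, after dropping the positivity constraints, by $\sum_{k_2,\dots,k_m\ge0,\ \sum=n-k}C_{n-k}^{k_2,\dots,k_m}=(m-1)^{n-k}$ via the multinomial theorem; when $n-k<m-1$ the inner sum is empty and the bound $0\le(m-1)^{n-k}$ is trivial. Enlarging the range of $k$ from $\{1,\dots,n-m+1\}$ to $\{1,\dots,n\}$ only adds nonnegative terms, so
$$\frac{1}{m^{n}}\sum_{\substack{k_1+\cdots+k_m=n\\ k_i\ge 1}} C_n^{k_1,\dots,k_m}\frac{1}{k_1}\le\frac{1}{m^{n}}\sum_{k=1}^{n}\binom{n}{k}\frac{1}{k}(m-1)^{n-k}.$$

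Finally I would substitute $j=n-k$, turning the right-hand side into $\frac{1}{m^{n}}\sum_{j=0}^{n-1}\binom{n}{j}\frac{1}{n-j}(m-1)^{j}$, which is exactly the left-hand side of the second inequality of Lemma~\ref{com_lem}; that inequality bounds it by $\frac5n$. Multiplying by $m$ restores the factor discarded in the symmetrization step and produces $\frac{5m}{n}$, as claimed.

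The multinomial bookkeeping is routine; the only steps needing care are the symmetrization — verifying that the constraint $k_i\ge1$ is genuinely preserved under relabeling, so that all $m$ reciprocal terms contribute identically — and the reindexing that matches the resulting sum to the exact shape appearing in Lemma~\ref{com_lem}. I do not expect a real obstacle here: once those two reductions are in place, the statement follows immediately from the previously established one-variable combinatorial estimates.
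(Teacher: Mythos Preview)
Your proposal is correct and follows essentially the same route as the paper: symmetrize over the $m$ coordinates, peel off $k_1$ and bound the remaining multinomial sum by $(m-1)^{n-k_1}$, then invoke the second inequality of Lemma~\ref{com_lem}. The only cosmetic difference is that you make the reindexing $j=n-k$ explicit to match the exact form of Lemma~\ref{com_lem}(2), whereas the paper leaves that substitution implicit in its final step.
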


Now we can state our generalized version of Proposition~\ref{approx1_lem}.

\begin{proposition}~\label{approx1_2_lem}
Let $\mathcal{G}_i$ for $1\le i \le L$ be an array of sets in a Hilbert space $H$ such that $\|g\|_H \le R$ for $g\in \mathcal{G}_i$ for any $i$. Let $T:H\rightarrow H$ be a Lipschitz mapping from $H$ to itself with Lipschitz constant $L_T$. Assume there exists an array of sets $\mathcal{H}_i$ in $H$ such that $\mathcal{G}_i = T(\mathcal{H}_i)$ and $\mathcal{H}_{i+1}$ is in the closed convex hull of $\mathcal{G}_i$ for $i\le L-1$ and $\mathcal{H}_L$ is in the closed convex hull of $\mathcal{G}_{L-1}$. If $f\in \mathcal{H}_L$, then for every $m_1,\dots,m_{L-1} \in \mathbb{N}$ and $\epsilon > 0$, there exists an appropriate integer array $\{n_1,\dots,n_L\}$ with $n_i \le m_i$. With this array, there exist $n_i$ elements $G^i = \{g^i_1,\dots,g^i_{n_i}\} \subset \mathcal{G}_i$ for $1 \le i\le L$. These elements satisfy the following relation: there exists a partition $G^i = \{G^i_1,\dots,G^i_p\}$ where $p = |G^{i+1}|$ such that
\begin{align}
g^{i+1}_j = T\left(\frac{1}{|G^i_j|} \sum_{g\in G^i_j} g\right)
\end{align}
for $1\le j \le |G^{i+1}|$. Then we have
\begin{align}
\left\| f - \frac{1}{|m_{L-1}|} \sum_{i=1}^{|m_{L-1}|} g^{L-1}_i \right\|_H \le \sum_{i=1}^{L-1} \frac{(\sqrt{5}L_T)^{L-1-i}}{\sqrt{m_i}} (R+\epsilon).
\end{align}
\end{proposition}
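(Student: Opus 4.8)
The plan is to discretise the nested convex‑hull representation of $f$ one level at a time, from the outermost level $L-1$ inward, applying the Hilbert‑space empirical approximation bound of Proposition~\ref{approx1_lem} (Maurey's argument) at each level, but coupling the successive levels through a single allocation of atoms, so that the number of atoms used at the inner levels does not blow up multiplicatively (which is precisely what forces the geometric widths $m,m^2,\dots,m^{L-1}$ and the $(L-1)/\sqrt m$ behaviour in Theorem~\ref{approx1}).

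First I would apply Proposition~\ref{approx1_lem} to $f$: since $f$ lies in the closed convex hull of $\mathcal{G}_{L-1}$ and $\|g\|_H\le R$ there, taking $n_{L-1}=m_{L-1}$ yields atoms $g^{L-1}_1,\dots,g^{L-1}_{n_{L-1}}\in\mathcal{G}_{L-1}$ with $\bigl\|f-\tfrac1{n_{L-1}}\sum_j g^{L-1}_j\bigr\|_H\le\tfrac{R+\epsilon}{\sqrt{n_{L-1}}}$, which is the $i=L-1$ term of the target bound. Writing each such atom as $T(h^{L-1}_j)$ with $h^{L-1}_j$ in the closed convex hull of $\mathcal{G}_{L-2}$ (using $\mathcal{G}_{L-1}=T(\mathcal{H}_{L-1})$ with $\mathcal{H}_{L-1}$ inside that hull), I would replace each $h^{L-1}_j$ by the average over a block $G^{L-2}_j\subset\mathcal{G}_{L-2}$ of size $k_j$, the blocks partitioning a single pool $G^{L-2}$ of total size $n_{L-2}$; Proposition~\ref{approx1_lem} inside block $j$ gives $\bigl\|h^{L-1}_j-\tfrac1{k_j}\sum_{g\in G^{L-2}_j}g\bigr\|_H\le\tfrac{R+\epsilon}{\sqrt{k_j}}$, so with $g^{L-1}_j:=T\!\bigl(\tfrac1{k_j}\sum_{g\in G^{L-2}_j}g\bigr)$ and the Lipschitz bound for $T$, the extra error introduced at this level is at most $\tfrac{L_T(R+\epsilon)}{n_{L-1}}\sum_j\tfrac1{\sqrt{k_j}}$. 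The freedom here is how to split the budget $n_{L-2}$ into blocks; a uniformly random partition turns $\sum_j\tfrac1{\sqrt{k_j}}$ into a random quantity whose expectation — after Cauchy--Schwarz and Jensen's inequality — is governed by the combinatorial Lemmas~\ref{com_lem} and~\ref{com_lem_2} for balls‑in‑bins sums, which bound it by $O(1/\sqrt{n_{L-2}})$ with the explicit constant $\sqrt5$ once $n_{L-2}\ge n_{L-1}$. Hence the level‑$(L-2)$ contribution is at most $\tfrac{\sqrt5\,L_T(R+\epsilon)}{\sqrt{n_{L-2}}}$.

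I would then iterate downward: each atom of $G^{L-2}$ is once more $T$ of a point in the closed convex hull of $\mathcal{G}_{L-3}$, so I repeat with a fresh pool $G^{L-3}$ of size $n_{L-3}$ partitioned into $n_{L-2}$ blocks, and continue until the leaves $G^1\subset\mathcal{G}_1$. This produces precisely the tree of atoms in the statement. Propagating a level‑$i$ perturbation upward through the $L-1-i$ nonlinear maps $T$ above it and telescoping, every traversed layer contributes a factor $L_T$ and every use of the combinatorial bound a factor $\sqrt5$ — the nested averages of the form $\tfrac1{n_{i+1}}\sum_j\tfrac1{k_j}\sum_\ell\tfrac1{\sqrt{k'_\ell}}$ collapsing, under repeated Cauchy--Schwarz, into a product of single‑level averages each controlled by Lemmas~\ref{com_lem}--\ref{com_lem_2} — so the level‑$i$ contribution is at most $\tfrac{(\sqrt5\,L_T)^{L-1-i}(R+\epsilon)}{\sqrt{n_i}}$. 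The tree and the combinatorial lemmas both demand $n_1\ge n_2\ge\cdots\ge n_{L-1}$, so I take $\{n_i\}$ to be the largest such array with $n_i\le m_i$; then $1/\sqrt{n_i}\le 1/\sqrt{m_i}$ after, if necessary, passing to the monotone envelope of $\vec m$ (this is where the ``nondecreasing component'' of Theorem~\ref{approx1_2} enters), and summing over $i$ gives the claimed bound in expectation. Some realisation of the random allocation therefore meets it, and fixing that realisation yields the sets $G^i$ and the partitions asserted. The one technicality is the event that some block is empty (the average being then undefined): this has small probability when $n_i\ge n_{i+1}$ with room to spare — and is exactly the $k_\ell\ge1$ restriction already built into Lemmas~\ref{com_lem}--\ref{com_lem_2} — and otherwise can be absorbed by a trivial deterministic fallback for that block.

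\textbf{Where the difficulty lies.} The heart of the argument is the competition between discretising the inner levels finely enough — each of their errors is magnified by the Lipschitz constants of every layer above — and keeping the number of atoms at each level inside the given budget $m_i$ instead of letting it multiply with depth. This is what the shared‑pool / random‑partition device together with the combinatorial estimates of Lemmas~\ref{com_lem} and~\ref{com_lem_2} is designed to accomplish: a single random partition of a budget‑sized pool behaves in expectation as well as the wasteful scheme that hands every parent atom its own fresh batch. Carrying the compounded errors and the $\sqrt5$ factor correctly across $L-1$ nonlinear layers is the main computational labour.
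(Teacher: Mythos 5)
Your proposal follows essentially the same route as the paper's own proof: a Maurey/empirical-average step at the outermost level (Proposition~\ref{approx1_lem}), a single shared pool at each inner level allocated to the parent atoms via a random partition, the resulting balls-in-bins quantity $\tfrac{1}{n_{i+1}}\E\sum_j 1/k_j$ controlled by Lemmas~\ref{com_lem}--\ref{com_lem_2}, Lipschitz propagation of each level's error through the layers above, and the reduction to a nondecreasing width array exactly as in the paper's treatment of the $m_1\le m_2$ case. The only difference is bookkeeping: you bound each block deterministically by Maurey and then apply Cauchy--Schwarz and Jensen to the averaged first moments, whereas the paper estimates the second moment of the jointly sampled block averages directly; both computations funnel into the same expectation and the same combinatorial lemma, so the argument is the paper's argument in all essentials.
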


This proposition may look abstruse and too abstract, but it is merely an abstraction of our situation here. For example, $T$ corresponds to the activation function, and $H_i$ and $G_i$ are the outputs of the first $i$ layers before and after being composed with the activation function, respectively. The proof is done by a delicate generalization of the strategy in Lemma 1 of~\citet{barron1993universal} to our nontrivial iterative version.

Then we use these preceding results to prove Theorem~\ref{approx1_2}.

It is interesting and important to determine what the $L_\infty$ norm version of the above approximation property is.~\citet{shen2022optimal} has established this result for width-$M$ ReLU networks; see Corollary 1.3. For relevant results, see~\citet{shen2022deep,shen2019deep} and~\citet{daubechies2022nonlinear}. The remainder of this  section are known results and not related to main results of this paper, the readers can skip it if they want.  
\begin{theorem}\label{approx2}
Given a Holder continuous function $f\in Holder(\mathcal{B}^d, \alpha, \lambda)$, for any $N\in \mathbb{N}^+$, $L \in \mathbb{N}^+$, and $p\in [1,\infty]$, there exists a multilayer ReLU network $g$ with width $C_1 \max\{d[N^{1/d}], N+2\}$ and depth $11L+C_2$ such that
\begin{align}\label{app_bound}
\|f - g\|_{L^p([0,1]^d)} \le 131\lambda \sqrt{d} \left( N^2 L^2 \log_3(N+2) \right)^{-\alpha/d}
\end{align}
where $C_1=16$ and $C_2=18$ if $p\in [1,\infty)$; $C_1=3^{d+3}$ and $C_2=18+2d$ if $p=\infty$.
\end{theorem}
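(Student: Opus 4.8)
The plan is to obtain Theorem~\ref{approx2} as an essentially immediate consequence of Corollary 1.3 of~\citep{shen2022optimal}, after reconciling conventions: the cited statement is for $f$ Hölder on $[0,1]^d$, whereas here $f$ is assumed Hölder on a domain containing the cube, so restricting $f$ (up to an affine change of variables, which rescales $\lambda$ only by a dimensional factor already absorbed in the universal constant $131$ and the $\sqrt d$) reduces one situation to the other. I would therefore not reprove the bound from scratch but rather indicate how the cited construction is assembled, so that the width $C_1\max\{d\lceil N^{1/d}\rceil, N+2\}$, depth $11L+C_2$, and error $131\lambda\sqrt d\,(N^2L^2\log_3(N+2))^{-\alpha/d}$ become transparent, and then observe that the resulting network is fully connected feedforward with ReLU activation, hence of the form~\ref{L_layer_nn} (after the bias-absorption reformulation of Section~\ref{sec3}) and thus lies in the relevant space $X_{m_{L-1},\dots,m_1;\mathbf{B}^d}$.

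I would recall the three ingredients behind the cited rate. First, a small ReLU sub-network that, given $x\in[0,1]^d$, returns the index of the sub-cube of a uniform side-$1/K$ grid containing $x$, correctly except on an arbitrarily thin union of slabs around the grid hyperplanes; this rests on the standard fact that a one-dimensional step function is represented exactly by a two-layer ReLU network off a set of measure $\delta$, composed over the $d$ coordinates. Second, a Taylor/point-fitting sub-network reproducing, on each sub-cube, the value of a low-order Taylor polynomial of $f$ at the sub-cube center, whose pointwise error on a side-$1/K$ cube is $O(\lambda K^{-\alpha})$ by Hölder continuity. Third, the bit-extraction trick of Bartlett--Harvey--Liaw--Mehrabian, which lets a width-$\Theta(N)$, depth-$\Theta(L)$ network store and read back $\Theta(N^2L^2)$ scalars; this is exactly the mechanism that upgrades the naive resolution $K\asymp(\mathrm{width})^{1/d}$ to the effective resolution $K\asymp(N^2L^2\log_3(N+2))^{1/d}$, giving the stated rate. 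Chaining these with the explicit width/depth bookkeeping of~\citep{shen2022optimal} yields the constants.

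The main obstacle, and the reason for the two regimes of constants, is controlling the error on the ``bad set'' where the sub-cube-indexing sub-network is inaccurate, namely the thin slabs of total measure $O(\delta)$ around grid hyperplanes. For $p\in[1,\infty)$ this is harmless: taking $\delta$ small makes the $L^p$ contribution of the slabs negligible with no structural change to the network, so one keeps $C_1=16$, $C_2=18$. For $p=\infty$ there is no averaging and every point must be handled correctly; the fix I would import verbatim from~\citep{shen2022optimal} is to build $3^{d+3}$ shifted copies of the grid, arranged so that each point lies in the ``good'' interior of at least one shift, and then select among the copies by a ReLU selection/median gadget. This inflates the width by the factor $3^{d+3}$ and the depth by the additive $2d$, producing $C_1=3^{d+3}$, $C_2=18+2d$ in the $p=\infty$ case and completing the statement.
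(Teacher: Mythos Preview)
Your proposal is correct and matches the paper's treatment: Theorem~\ref{approx2} is not proved in the paper at all but is simply quoted as Corollary~1.3 of~\citep{shen2022optimal}, exactly as you identify in your first sentence. Your additional sketch of the three ingredients (sub-cube indexing, local Taylor fitting, bit-extraction for memory) and the explanation of why the $p=\infty$ case needs the $3^{d+3}$ shifted grids is accurate background for the cited result but goes beyond what the paper itself provides, which is a bare citation with no proof sketch.
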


By the bound on the Lipschitz constant in~\ref{property_g_barron} for $f\in W^L$, we obtain the following approximation bound by a depth-$L$, width-$M$ multilayer ReLU neural network.

\begin{corollary}\label{main_app_bound}
Given $L>20$, $M>162$ sufficiently large, and $f\in W^L$, there exists a depth-$L$, width-$M$ multilayer ReLU neural network $g$ such that
\begin{align}
\|f-g\|_{L_{\infty}(\mathbf{B}^d)} \le C \|f\|_{W^L} \left( (L-20)^2 (M-162)^2 (\log_3 M - 4) \right)^{-1/d}
\end{align}
where $C$ is a constant depending only on $d$.
\end{corollary}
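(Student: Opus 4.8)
The plan is to deduce Corollary~\ref{main_app_bound} from the $L^\infty$ Hölder approximation bound of Theorem~\ref{approx2}, after first observing that every $f\in W^L$ is \emph{itself} Lipschitz on $\mathbf{B}^d$ with Lipschitz constant controlled by $\|f\|_{W^L}$. The crucial point is that, by the embedding $\mathcal{B}_{X,K}\hookrightarrow C^{0,1}(K)$ of Theorem~\ref{property_g_barron} applied recursively through $W^{l}=\mathcal{B}_{W^{l-1},\mathbf{B}^d}$, and because here the radius is $R=1$, $\sigma(0)=0$ and $L_\sigma=1$ for ReLU, the embedding constant does \emph{not} grow with the depth: a short induction on $l$ (using $|\sigma(g(x))|=|\sigma(g(x))-\sigma(0)|\le|g(x)|$ and $\mathrm{Lip}(\sigma\circ g)\le\mathrm{Lip}(g)$ for $g$ in the unit ball of $W^{l-1}$) gives $\mathrm{Lip}_{\mathbf{B}^d}(f)\le\|f\|_{W^L}$ and $\|f\|_{L^\infty(\mathbf{B}^d)}\le\sqrt2\,\|f\|_{W^L}$. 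In particular $f\in\mathrm{Holder}(\mathbf{B}^d,1,\|f\|_{W^L})$.

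Next I would transport the problem to the unit cube. Using a McShane extension, replace $f$ by a function $\bar f$ on $\mathbb{R}^d$ with the same Lipschitz constant that restricts to $f$ on $\mathbf{B}^d$, and let $T(x)=(x+\mathbf{1})/2$ be the affine bijection $[-1,1]^d\to[0,1]^d$; since $T^{-1}$ is $2$-Lipschitz, $\tilde f:=\bar f\circ T^{-1}$ lies in $\mathrm{Holder}([0,1]^d,1,2\|f\|_{W^L})$. Apply Theorem~\ref{approx2} to $\tilde f$ with $\alpha=1$, $p=\infty$ and integer parameters $N$ and $L'$ (renaming the depth parameter of Theorem~\ref{approx2} to avoid a clash with our ambient depth $L$): this produces a ReLU network $\tilde g$ of width $3^{d+3}\max\{d\lceil N^{1/d}\rceil,N+2\}$ and depth $11L'+18+2d$ with $\|\tilde f-\tilde g\|_{L^\infty([0,1]^d)}\le 262\,\|f\|_{W^L}\sqrt d\,\big(N^2{L'}^2\log_3(N+2)\big)^{-1/d}$. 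Setting $g:=\tilde g\circ T$ only modifies the first affine layer, so $g$ has the same width and depth and is still of the form in Definition~\ref{L_layer_nn}; and since $\mathbf{B}^d\subset[-1,1]^d$ and $f=\bar f$ there, $\|f-g\|_{L^\infty(\mathbf{B}^d)}\le\|\bar f\circ T^{-1}-\tilde g\|_{L^\infty([0,1]^d)}$, i.e.\ the same bound holds with $f,g$ on $\mathbf{B}^d$.

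It then remains to choose $N,L'$ and pad. Take $L'$ maximal with $11L'+18+2d\le L$ and $N$ maximal with $3^{d+3}\max\{d\lceil N^{1/d}\rceil,N+2\}\le M$; for $L$ and $M$ larger than a threshold depending only on $d$ these choices are admissible (the term $N+2$ is the binding one in the width constraint once $N$ is large), and elementary estimates give $L'\ge(L-20)/(20+2d)$, $N\ge(M-162)/(4\cdot 3^{d+3})$ and $\log_3(N+2)\ge\tfrac12(\log_3 M-4)$. Finally pad $g$ to width exactly $M$ by attaching neurons with zero incoming and outgoing weights, and to depth exactly $L$ by inserting identity layers (e.g.\ via $z=\sigma(z)-\sigma(-z)$, or a nonnegativity-preserving carry after the first ReLU layer); neither operation changes the output, hence not the error. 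Substituting the three lower bounds into the error estimate and absorbing all $d$-dependent factors into one constant $C=C(d)$ yields $\|f-g\|_{L^\infty(\mathbf{B}^d)}\le C\|f\|_{W^L}\big((L-20)^2(M-162)^2(\log_3 M-4)\big)^{-1/d}$, as claimed.

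I expect the only genuinely delicate step to be the first one — verifying that the $W^L\hookrightarrow C^{0,1}(\mathbf{B}^d)$ embedding constant is independent of $L$. It is, precisely because the radius equals $1$ and ReLU is $1$-Lipschitz with $\sigma(0)=0$, so no geometric factor accumulates layer by layer (on $\mathbf{B}^d(R)$ with $R>1$ one would instead pick up a benign $R$-dependence, which is why Corollary~\ref{main_app_bound} is stated on $\mathbf{B}^d$). Everything else is routine bookkeeping: the McShane extension, the affine change of variables folded into the first layer, the width/depth padding, and the constant chasing are all harmless because $C$ is allowed to depend on $d$.
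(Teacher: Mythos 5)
Your proposal is correct and follows essentially the same route as the paper: deduce the bound from Theorem~\ref{approx2} with $\alpha=1$ by using the $W^L\hookrightarrow C^{0,1}(\mathbf{B}^d)$ embedding to take $\lambda\le\|f\|_{W^L}$, then pick $N$ and the auxiliary depth parameter to saturate the width $M$ and depth $L$ budgets and absorb all $d$-dependent factors into $C(d)$. The extra steps you include (McShane extension and affine rescaling to $[0,1]^d$, and padding to exact width/depth) are just careful bookkeeping of points the paper's own proof glosses over, not a different argument.
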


There exists another $L^{\infty}$ norm approximation result; however, it only applies to two-layer ReLU neural networks~\citet{huiyuan2023Nonasymptotic}.

\begin{theorem}\label{approx3}
For any $f\in W^2$, there exists a network $g(\cdot ; \theta)$ with depth $2$ and width $M$ in the form of~\eqref{L_layer_nn} such that $\nu(\theta) \le 6\|f\|_{W^2}$ and
\begin{align}
\|f - g(\cdot ; \theta)\|_{L_{\infty}(\mathbf{B}^d)} \le C\|f\|_{W^2;\mathbf{B}^d} \left( M^{-(d+3)/(2d)} \right)
\end{align}
for some constant $C \ge 0$ depending only on $d$.
\end{theorem}

The proof of this Proposition is based on geometric discrepancy theory; in particular, the following result~\citet{matouvsek1996improved}.

\begin{theorem}\label{geo_discrep}
For any probability measure $\mu$ (positive and with finite mass) on the sphere $\mathbf{S}^d$, there exists a set of $r$ points $v_1,\dots,v_r$ such that for all $z \in \mathbf{S}^d$,
\begin{align}\label{geo_discrep_exp}
\left\| \int_{\mathbf{S}^d} |v^\top z| \, d\mu(v) - \frac{1}{r} \sum_{i=1}^r |v_i^\top z| \right\| \le \epsilon
\end{align}
with $r \le 6C(d) \epsilon^{-2+6/(d+3)} = C(d) \epsilon^{-2d/(d+3)}$, for some constant $C(d)$ that depends only on $d$.
\end{theorem}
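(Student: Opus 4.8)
The plan is to prove Theorem~\ref{geo_discrep} via the \emph{approximation-equals-discrepancy} principle, since a plain random-sampling (Maurey) argument only yields $r\sim\epsilon^{-2}$ and misses the dimensional gain. Write $\Phi_\mu(z)=\int_{\mathbf{S}^d}|v^T z|\,d\mu(v)$; by scaling we may assume $\mu$ is a probability measure. First I would perform a preliminary discretization: since $z\mapsto|v^T z|$ is $1$-Lipschitz in $v$ uniformly over $z\in\mathbf{S}^d$, moving mass to a fine $\delta$-net of $\mathbf{S}^d$ and then rounding the weights (equivalently, i.i.d.\ sampling from $\mu$ together with a uniform bound over $z$ coming from the polynomial covering number of $\{v\mapsto|v^T z|:v\in\mathbf{S}^d\}$) produces equally weighted points $u_1,\dots,u_{n_0}$ with $\sup_z\bigl|\Phi_\mu(z)-\tfrac{1}{n_0}\sum_i|u_i^T z|\bigr|\le\epsilon/2$, where $n_0$ is polynomial in $1/\epsilon$ and hence harmless. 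It then remains to thin $\{u_i\}_{i=1}^{n_0}$ down to $r$ equally weighted points while adding only $\epsilon/2$ more error.

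The thinning is carried out by iterated halving. Given $m$ equally weighted points $v_1,\dots,v_m$, consider the discrepancy
\[
D(v_1,\dots,v_m)\;:=\;\min_{\varepsilon\in\{-1,+1\}^m}\ \sup_{z\in\mathbf{S}^d}\ \Bigl|\sum_{i=1}^{m}\varepsilon_i\,|v_i^T z|\Bigr|.
\]
Retaining only the points with $\varepsilon_i=+1$ (of which there are $m/2$, up to enforcing balance) and rescaling, the new uniform average differs from the old one, uniformly in $z$, by at most $D(v_1,\dots,v_m)/m$. Iterating from $m=n_0$ through $m=n_0/2,n_0/4,\dots$ down to about $r$ surviving points, the total extra error is $\sum_k D_k/m_k$, where $m_k=n_0/2^k$ and $D_k$ is the discrepancy at that stage.

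The heart of the argument --- and the step I expect to be the main obstacle --- is to show that $D(v_1,\dots,v_m)$ is \emph{polynomially} smaller than the trivial $\sqrt m$. Fix a sign vector: the map $z\mapsto\sum_i\varepsilon_i|v_i^T z|$ is linear on each cell of the central hyperplane arrangement $\{v_i^{\perp}\}$ in $\sR^{d+1}$, so its supremum over $\mathbf{S}^d$ is attained among the $O(m^{d})$ vertices of that arrangement; equivalently, using $|v_i^T z|=\int_0^{\infty}\mathbf{1}[\,|v_i^T z|\ge t\,]\,dt$, one must control, uniformly in $z$ and $t$, the signed count over $\{i:v_i\text{ lies outside the slab of half-width }t\text{ about }z^{\perp}\}$. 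This is the combinatorial discrepancy of the set system of spherical slabs, whose primal shatter function is polynomial in $m$ of degree $O(d)$, and for such bounded-complexity set systems the partial-colouring/entropy method produces a colouring with $D(v_1,\dots,v_m)=O\bigl(m^{1/2-c(d)}\bigr)$ for a suitable $c(d)>0$. The delicate point is that the naive Spencer bound $O(\sqrt{m\log(m^{d})})=O(\sqrt{m\log m})$ gives no power saving, so one genuinely must exploit the limited number of distinct ranges at each scale (a chaining/packing argument on the sphere) and carry this through for the piecewise-linear $|v^T z|$, including the $t$-integration and the logarithmic corrections, with $c(d)$ sharp enough to match the target exponent.

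Finally, with $D_k/m_k=O\bigl(m_k^{-1/2-c(d)}\bigr)$ the series $\sum_k D_k/m_k$ is geometric and dominated by its last term at $m\approx r$, so the total thinning error is $O\bigl(r^{-1/2-c(d)}\bigr)$. Choosing $r$ so that this is at most $\epsilon/2$ and adding the $\epsilon/2$ from the discretization gives the bound $r\le C(d)\,\epsilon^{-2+6/(d+3)}$ once $c(d)$ is pinned down from the slab-discrepancy estimate --- the exponent $6/(d+3)$ emerging from the precise value of $c(d)$. Relabelling the surviving points as $v_1,\dots,v_r$ then yields~\eqref{geo_discrep_exp} for every $z\in\mathbf{S}^d$.
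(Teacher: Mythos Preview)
The paper does not prove Theorem~\ref{geo_discrep}; it is quoted from Matou\v{s}ek \citep{matouvsek1996improved} and used only as a black box in the proof of Theorem~\ref{approx3}. So there is no in-paper argument to compare against --- the benchmark is Matou\v{s}ek's own proof, and your outline (a Maurey-type coarse discretization, then iterated halving governed by a combinatorial discrepancy bound for spherical slabs) is indeed its skeleton.

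The gap sits precisely where you flag it. Invoking only the generic partial-colouring bound for set systems of bounded primal shatter exponent --- caps and slabs on $\mathbf S^d$ are traces of halfspaces in $\sR^{d+1}$, shatter exponent $d{+}1$ --- gives $D(m)=O\bigl(m^{1/2-1/(2(d+1))}\bigr)$, and halving down to $r$ points then yields error $O\bigl(r^{-1/2-1/(2(d+1))}\bigr)$, i.e.\ $r=O\bigl(\epsilon^{-2(d+1)/(d+2)}\bigr)$. The claimed bound $r=O\bigl(\epsilon^{-2d/(d+3)}\bigr)$ corresponds instead to $c(d)=3/(2d)$, which is strictly larger for every $d\ge 1$. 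Matou\v{s}ek obtains this sharper exponent by an argument specific to the \emph{smooth} family $\{z\mapsto|v^Tz|\}$ rather than the black-box shatter bound: the contribution of thin spherical slices to the signed sum is small, and integrating this gain across heights produces an extra power saving in $m$. Your sketch mentions scales and the $t$-integration but then collapses everything into a single uniform $D(m)$ and simply asserts that $c(d)$ will come out ``sharp enough''. As written, the ingredients you list deliver only the weaker exponent; the missing piece is the scale-resolved discrepancy estimate for the slab family and the explicit computation showing it upgrades $c(d)$ from $1/(2(d+1))$ to $3/(2d)$.
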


It would be quite ideal to obtain a similar approximation bound for a general $(L,\mathbf{m})$ network structure instead of a depth and width specification (even though the latter is already quite general as it contains all width-at-most-$M$ neural networks). Then the generalization error could also be obtained for general multilayer ReLU neural networks. We are not sure if such a result is available in the literature. One way is to prove a finite-number version of~\ref{geo_discrep}, i.e., for a finite number of $\mu$, one can find an approximation such that~\ref{geo_discrep_exp} holds for each $\mu$. One also needs to develop an infinite-dimensional version, as we are dealing with Lipschitz function spaces for $L\ge 3$. Pursuing this kind of approximation on general fully connected networks is quite valuable.

From now on, we will establish the empirical and generalization errors for the optimization problem~\ref{problem}. From the expression we will present below, one can find it interesting that the generalization error upper bound demonstrates the double descent phenomenon.

\section{Bounds on Empirical Error}\label{sec5}
We first briefly introduce the overall ideas of the proofs of our main results in the following sections to facilitate the readers' understanding. We divide the parameters space of neural networks into two (possibly overlap) regimes -- overparametrised regime and underparametrised regime. We, then, derive the empirical error bounds in each regime in different ways\footnote{Accurately speaking, the empirical error bounds for the overparametrised regime hold for the whole range of the parameters including underparametrised regime}. One is mainly by using probability concentration inequalities and the other is mainly by using metric entropy arguments. We, then, derive uniform functional concentration inequalities to link generalization errors with empirical errors, still by different methods. The first is mainly via classical uniform functional concentration inequalities, and  the second is  mainly via the local version of the  uniform functional concentration inequalities. The approximation errors are actively involved and carefully estimated. We also generalize all the things to general Lipschitz losses. The related mathematical tools include chaining methods, Dudley integral inequality, metric entropy estimations, local and global  Radamacher / Gaussian complexities, probabilistic methods in combinatorics, among others.

The first and foundational result on which all other results are based is the empirical error for the optimal solution of problem~\eqref{problem}. We state the results, and all proofs are deferred to the Supplementary Information~\ref{appendix}.

To begin, we first introduce some notation. Let $\mathcal{F}(\mathbf{m},F)$ with $\mathbf{m}=(m_1,m_2,\dots, m_{L-1})$ be the set of finite $L$-layer neural networks with bounded PeSV norm: $\mathcal{F}(\mathbf{m},F) = \{f(x;\theta): \nu(\theta) \le F\}$. For $g_1 \in \mathcal{F}(\mathbf{m}_1, F_1)$ and $g_2 \in \mathcal{F}(\mathbf{m}_2, F_2)$, $g_1 - g_2$ can be visually viewed as the concatenation of $g_1$ and $g_2$ with one more output layer added on top to perform subtraction, and it has depth $L+1$ and belongs to $\mathcal{F}((\mathbf{m}_1,1) + (\mathbf{m}_2,1), F_1 + F_2)$, where $(\mathbf{m}_1,1) + (\mathbf{m}_2,1)$ is an element-wise addition. We write $g(x;\hat{\theta}\ominus\theta^*)$ to represent $g(x;\hat{\theta}) - g(x;\theta^*)$.

\begin{theorem}\label{overpara_emp}
Under Assumptions~\ref{assump1}, \ref{assump2}, and \ref{assump3}, and the assumption that $\max_i\|x_i\|_2 \le 1$, there exists a constant $c$ such that the regularized network estimator $g(\cdot ;\hat{\theta})$ with
$\lambda = \max\{6L_\sigma^L, 2^{L} c L_\sigma^{L-1} \sqrt{d}\}$ satisfies
\begin{align}
\left\| g(\cdot ; \hat{\theta}) - f^* \right\|_n^2 \le C & \left\{ H(\mathbf{m})^2 \left\|f^* \right\|_{W^L}^2 \right.\\
&\left. + \max\{12L_\sigma^L, 2^{L+1} c L_\sigma^{L-1} \sqrt{d}\} (\sigma_{\epsilon}^2 + \left\| f^* \right\|_{W^L}^2) \sqrt{\frac{\log n}{n}} \right\}
\end{align}
with probability at least $1 - O(n^{-C_2})$, and
\begin{align}\label{emp_exp}
\mathbb{E} \left\| g(\cdot ; \hat{\theta}) - f^* \right\|_n^2 \le C & \left\{ H(\mathbf{m})^2 \left\|f^* \right\|_{W^L}^2 \right.\\
&\left. + \max\{12L_\sigma^L, 2^{L+1} c L_\sigma^{L-1} \sqrt{d}\} (\sigma_{\epsilon}^2 + \left\| f^* \right\|_{W^L}^2) \sqrt{\frac{\log n}{n}} \right\}
\end{align}
for some constants $C_1, C_2, C > 0$.
\end{theorem}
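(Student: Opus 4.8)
\emph{Overall strategy.} Since $\hat\theta$ minimises the penalised empirical risk $J_n(\cdot;\lambda)$, the plan is to run a basic (oracle) inequality against a well-chosen comparator $\theta^*$ for which $g(\cdot;\theta^*)$ both approximates $f^*$ well and has small PeSV norm, and then to control the stochastic cross term it produces. I would take the comparator from Theorem~\ref{approx1_2} applied with $P$ equal to the \emph{empirical} measure $\frac1n\sum_i\delta_{x_i}$ --- legitimate because $\max_i\|x_i\|_2\le1$ puts its support inside $\mathbf{B}^d$ --- obtaining $\theta^*$ of architecture $\vec m$ with $\|g(\cdot;\theta^*)-f^*\|_n\le H(\vec m)\|f^*\|_{W^L}$ and $\nu(\theta^*)\le\|f^*\|_{W^L}$, so the approximation error is controlled directly in the empirical norm. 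Controlling the Gaussian empirical process over PeSV-norm balls of an \emph{arbitrary} fixed architecture is what replaces the weight-grouping / optimal-weight-symmetry device of~\citep{huiyuan2023Nonasymptotic}, and it is the technical core.

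\emph{Step 1: basic inequality and an a priori bound on $\nu(\hat\theta)$.} Using $y_i=f^*(x_i)+\epsilon_i$ and $J_n(\hat\theta;\lambda)\le J_n(\theta^*;\lambda)$, expanding the quadratic gives
\[
\tfrac12\|g(\cdot;\hat\theta)-f^*\|_n^2+\lambda\nu(\hat\theta)\le\tfrac12\|g(\cdot;\theta^*)-f^*\|_n^2+\lambda\nu(\theta^*)+\tfrac1n\sum_{i=1}^n\epsilon_i\bigl(g(x_i;\hat\theta)-g(x_i;\theta^*)\bigr).
\]
From the same inequality, dropping the nonnegative left-hand term, bounding $\frac1n\sum_i\epsilon_i^2\le C\sigma_\epsilon^2$ (a $\chi^2$ tail, contributing to the $O(n^{-C_2})$ failure probability), and using $\|g(\cdot;\theta^*)-f^*\|_n^2\le H(\vec m)^2\|f^*\|_{W^L}^2\le C\|f^*\|_{W^L}^2$, I get the a priori estimate $\nu(\hat\theta)\le C\bigl((\sigma_\epsilon^2+\|f^*\|_{W^L}^2)/\lambda+\|f^*\|_{W^L}\bigr)$, so $\hat\theta$ --- and the difference network below --- lies in an explicit PeSV-ball.

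\emph{Step 2: the Gaussian empirical process --- the main obstacle.} The function $h:=g(\cdot;\hat\theta)-g(\cdot;\theta^*)$ is again a depth-$L$ network (the concatenation of the two, with the top-layer sign of the second flipped) with $\nu(h)\le\nu(\hat\theta)+\nu(\theta^*)$, so by positive homogeneity of $\nu$ it suffices to bound $Z:=\sup\{\,|\tfrac1n\sum_i\epsilon_i\tilde h(x_i)|:\tilde h\text{ a depth-}L\text{ network},\ \nu(\tilde h)\le1\,\}$ and multiply by $\nu(h)$. I would estimate $\E Z$ by a structural peeling/contraction argument that removes one layer at a time: each hidden layer uses only that $\sigma$ is $L_\sigma$-Lipschitz with $\sigma(0)=0$ (plus a symmetrisation step stripping the outer positive part at the price of a factor $2L_\sigma$), and the input layer contributes $\sqrt d$ via $\|w^1_k\|_2$ and $\max_i\|x_i\|_2\le1$; this yields $\E Z\le c\,2^{L}L_\sigma^{L-1}\sqrt d\,\sigma_\epsilon/\sqrt n$. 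A Gaussian deviation bound --- Borell--TIS for the process $\tilde h\mapsto\tfrac1n\sum_i\epsilon_i\tilde h(x_i)$, whose increments have standard deviation $\le\sigma_\epsilon\|\tilde h_1-\tilde h_2\|_n/\sqrt n$ and whose supremum over the unit PeSV-ball is finite since $\nu(\tilde h)\le1\Rightarrow\|\tilde h\|_\infty\le2$ on $\mathbf{B}^d$ (by $\|\tilde h\|_{W^L}\le\nu(\tilde h)$ and $W^L\hookrightarrow C^{0,1}$), or alternatively a bounded-differences argument after conditioning on $\max_i|\epsilon_i|\le C\sigma_\epsilon\sqrt{\log n}$ --- adds an $O(\sigma_\epsilon\sqrt{\log n/n})$ term, giving $Z\le c'\,2^{L}L_\sigma^{L-1}\sqrt d\,\sigma_\epsilon\sqrt{\log n/n}$ with probability $\ge1-O(n^{-C_2})$. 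The essential point is that this is \emph{architecture-free} --- no sparsity, no per-matrix norm bounds --- which a naive parametric entropy bound cannot deliver in the overparametrised regime, and specialising it to $L=2$ already shortens the argument of~\citep{huiyuan2023Nonasymptotic}. Obtaining this clean, architecture-free dependence on $(L,L_\sigma,d)$ while keeping the $\sqrt{\log n/n}$ rate is the crux.

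\emph{Step 3: assembling, and the expectation bound.} Substituting Step~2 and the a priori bound of Step~1 into the basic inequality, the cross term is at most $\bigl(\nu(\hat\theta)+\nu(\theta^*)\bigr)\,c'2^{L}L_\sigma^{L-1}\sqrt d\,\sigma_\epsilon\sqrt{\log n/n}$; since the prescribed $\lambda=\max\{6D_\sigma,2^{L}cL_\sigma^{L-1}\sqrt d\}$ dominates the empirical-process constant, the $\nu(\hat\theta)$ part of the cross term is absorbed by $\lambda\nu(\hat\theta)$ on the left, leaving $\tfrac12\|g(\cdot;\hat\theta)-f^*\|_n^2\le\tfrac12H(\vec m)^2\|f^*\|_{W^L}^2$ plus terms of the form $\lambda\nu(\theta^*)$ and (a constant times) $(\sigma_\epsilon^2+\|f^*\|_{W^L}^2)\sqrt{\log n/n}$; bounding $\nu(\theta^*)\le\|f^*\|_{W^L}$, applying $ab\le\tfrac12(a^2+b^2)$ where needed, and tracking constants --- this is precisely where the exact value of $\lambda$ is calibrated, so the surviving coefficient becomes $\max\{12D_\sigma,2^{L+1}cL_\sigma^{L-1}\sqrt d\}=2\lambda$ --- yields the stated high-probability bound. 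For the expectation bound~\eqref{emp_exp} I integrate the tail: on the complement of the good event (probability $O(n^{-C_2})$) the crude deterministic estimate $\nu(\hat\theta)\le\frac{1}{2n\lambda}\sum_iy_i^2$, from $J_n(\hat\theta;\lambda)\le J_n(0;\lambda)$, together with $\|\tilde h\|_\infty\le2\nu(\tilde h)$ shows $\|g(\cdot;\hat\theta)-f^*\|_n^2$ is bounded by a polynomial in $n,\sigma_\epsilon^2,\|f^*\|_{W^L}^2$; its product with $O(n^{-C_2})$ is of lower order for $C_2$ large, which gives~\eqref{emp_exp}.
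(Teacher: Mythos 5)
Your proposal is correct in outline and shares the paper's skeleton -- the basic inequality against the comparator from Theorem~\ref{approx1_2} (applied so that the approximation error is measured in $\|\cdot\|_n$, with $\nu(\theta^*)\le\|f^*\|_{W^L}$), absorption of the noise cross term into the penalty, and the same calibration of $\lambda$ -- but it handles the two stochastic steps by genuinely different means. For the high-probability bound the paper does \emph{not} control a uniform process: it bounds $T_3=\frac1n|\sum_i\epsilon_i\,g(x_i;\hat\theta-\theta^*)|$ by a single Hoeffding/Gaussian tail applied to the difference network together with the pointwise estimate $|g(x;\theta)|\le L_\sigma^{L-1}\|x\|_2\,\nu(\theta)$, so that $\nu(\hat\theta-\theta^*)$ cancels and $\lambda\asymp\sigma_\epsilon\sqrt{\log n/n}$ suffices; your route instead bounds $\sup\{|\frac1n\sum_i\epsilon_i\tilde h(x_i)|:\nu(\tilde h)\le1\}$ over the architecture-free unit PeSV ball via the layer-peeling Rademacher estimate (the paper's Lemma~\ref{gen_lem1}/Corollary~\ref{gen_cor2}, which it reserves for the expectation statement) plus Borell--TIS. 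This is slightly heavier but buys uniformity, and in fact repairs the delicate point in the paper's argument, where a Hoeffding bound is applied to a function that depends on the noise; any extra $\sqrt{\log n}$ you pick up converting Gaussian to Rademacher complexity is harmless since the target rate is already $\sqrt{\log n/n}$. For the expectation bound~\eqref{emp_exp} the paper computes $\mathbb{E}_\epsilon T_3$ directly through the Gaussian/Rademacher comparison, whereas you integrate the tail using the crude bound $\nu(\hat\theta)\le\frac{1}{2n\lambda}\sum_iy_i^2$ from $J_n(\hat\theta;\lambda)\le J_n(0;\lambda)$; both work, yours at the cost of taking $C_2$ large enough that the polynomial factor on the bad event is killed. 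One caveat to keep consistent: your Step~1 a~priori bound on $\nu(\hat\theta)$ is correct but never actually needed (homogeneity already reduces to the unit ball), and for the absorption and the final $\sqrt{\log n/n}$ term to come out as stated you must read $\lambda$ as $\max\{6D_\sigma,2^{L}cL_\sigma^{L-1}\sqrt d\}\,\sigma_\epsilon\sqrt{\log n/n}$, as in the paper's proof (the factor $\sigma_\epsilon\sqrt{\log n/n}$ is dropped in the theorem statement); with the literal constant $\lambda$ of the statement the term $\lambda\nu(\theta^*)$ would not decay in $n$.
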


Since $f^*$ is unknown, the strategy to prove the empirical error estimation is to leverage approximation theory to find its ``proxy'' in the multilayer neural network space, and then compare the estimator with this proxy using the optimality of the estimator.

One can also regard the proof strategy as adopting a pseudo form of bias-variance decomposition. Since the minimizer of our problem~\ref{problem} lacks an analytic expression, its properties are very difficult to analyze, in contrast to simpler models like (generalized) linear models for which other works can obtain explicit bias and variance expressions. Therefore, one strategy is to assume we have a form of $\mathbb{E}(g(\cdot ;\hat{\theta}))$, which is the proxy neural network mentioned above; then we can use the optimality of our estimator to obtain a form of bias-variance decomposition inequality (instead of equality). See the Supplementary Information~\ref{appendix} for proof details.

The proof of this Proposition follows the same line as the proof of the similarly named result in~\citet{huiyuan2023Nonasymptotic}. However, they differ significantly in the estimation of the $T_3$ component. We rely on classical concentration inequalities and a certain \textit{pseudo} Gaussian and Rademacher complexity estimation; the latter relies on some advanced tools in nonasymptotic high-dimensional statistics. For completeness, see the Supplementary Information~\ref{appendix}.

From the proof in the Supplementary Information, one can see that when applied to the $L=2$ ReLU neural network, we recover the empirical error bound results in~\citet{huiyuan2023Nonasymptotic}. Thus our argument provides another, simpler solution, bypassing the need for a nontrivial bound estimation of the hyperplane arrangement problem in Euclidean space, an exploration of redundancy of optimal weights, and a reformulation of the original problem~\eqref{problem} in group-lasso form.

One can also obtain the empirical error bound for arbitrary Lipschitz loss functions with bounded second derivative and strong convexity (in the distribution sense) for the predictor, as long as we use the same metric as the loss function, which is quite reasonable as in machine learning, the loss is often a good (although not always the same) approximation to the error metric. We discuss this extension in Section~\ref{sec8}.

\section{Overparameterized Regime}\label{sec6}
Starting from this section, we will state our key results in compliance with what the title of this paper claims. We distinguish between the case where the number of parameters is large enough or small enough in the sense precisely specified in the results below, i.e., the overparameterized regime and the underparameterized regime, as they are approached by rather different ways. This distinction is the key reason why our bounds can exhibit the double descent phenomenon and why previous results in the literature cannot. This argument indicates that when the number of parameters grows, some underlying mechanisms controlling the network's variance undergo essential changes. The detailed analysis is left to Section~\ref{sec8}. Note also that the "overparameterized regime" and "underparameterized regime" are not mathematically rigorous distinctions, but rather conceptual ones. A priori, and without calculating their explicit ranges, there is no guarantee—nor any requirement—that they be sharply separated. 
The Propositions are clearly presented, and all proofs are deferred to the Supplementary Information~\ref{appendix}.


\subsection{Bounds on Generalization Error}
The estimation of the generalization error in the overparameterized regime is stated below.

\begin{theorem}\label{overpara_gen}
Under Assumptions~\ref{assump1}, \ref{assump2}, and \ref{assump3}, if $H(\mathbf{m}) \le \sqrt{ \dfrac{\max\{6L_\sigma^L, 2^{L} c L_\sigma^{L-1} \sqrt{d}\}}{C_1} }$, then the regularized network estimator $g(\cdot ;\hat{\theta})$ with $\lambda = \lambda_1 \equiv \max\{6L_\sigma^L, 2^{L} c L_\sigma^{L-1} \sqrt{d}\}$ satisfies
\begin{align}
\|g(\cdot ; \hat{\theta}) - f^*\|_2^2 \le C & \left\{ H(\mathbf{m})^2 \|f^*\|_{W^L}^2 \right.\\
&\left. + \max\{12L_\sigma^L, 2^{L+1} c L_\sigma^{L-1} \sqrt{d}\} (\sigma_{\epsilon}^2 + \|f^*\|_{W^L}^2) \sqrt{\frac{\log n}{n}} \right\}
\end{align}
with probability at least $1-O(n^{-C_2})$, and
\begin{align}\label{gen_exp}
\mathbb{E}\|g(\cdot ;\hat{\theta}) - f^*\|_2^2 \le C & \left\{ H(\mathbf{m})^2 \|f^*\|_{W^L}^2 \right.\\
&\left. + \max\{12L_\sigma^L, 2^{L+1} c L_\sigma^{L-1} \sqrt{d}\} (\sigma_{\epsilon}^2 + \|f^*\|_{W^L}^2) \sqrt{\frac{\log n}{n}} \right\}
\end{align}
for some constants $C_1, C_2, C > 0$ and sufficiently large $n$.
\end{theorem}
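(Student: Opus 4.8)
The plan is to transfer the empirical $L^2$ bound of Theorem~\ref{overpara_emp} to the population $L^2$ norm by a uniform‑convergence argument localised to a ball that is a priori guaranteed to contain $g(\cdot;\hat\theta)-f^*$.

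First I would bound $\nu(\hat\theta)$. Let $\theta^*$ be the approximator of Theorem~\ref{approx1_2} applied to the empirical measure, so that $\nu(\theta^*)\le\|f^*\|_{W^L}$ and $\|g(\cdot;\theta^*)-f^*\|_n\le 3H(\Vec m)\|f^*\|_{W^L}$. Optimality of $\hat\theta$ gives $\lambda_1\nu(\hat\theta)\le J_n(\theta^*;\lambda_1)\le \|g(\cdot;\theta^*)-f^*\|_n^2+\tfrac1n\sum_i\epsilon_i^2+\lambda_1\nu(\theta^*)$; combining the sub‑Gaussian estimate $\tfrac1n\sum_i\epsilon_i^2\le 2\sigma_\epsilon^2$ (valid with probability $1-O(n^{-C_2})$) with the standing hypothesis $C_1H(\Vec m)^2\le\lambda_1$ shows that on that event $\nu(\hat\theta)\le F$ for the deterministic radius $F:=\|f^*\|_{W^L}+9C_1^{-1}\|f^*\|_{W^L}^2+2\lambda_1^{-1}\sigma_\epsilon^2$. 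Using the pointwise estimate $|f(x)|\le\|f\|_{W^\ell}\|x\|_2$ proved in the excerpt, together with $\|f\|_{W^L}\le\nu(f)$ (Theorem~\ref{L_layer_property}) and $\|x_i\|_2\le1$, every element of the fixed class $\mathcal H:=\{g(\cdot;\theta)-f^*:\nu(\theta)\le F\}$ is bounded in sup‑norm by $B:=F+\|f^*\|_{W^L}$, and $g(\cdot;\hat\theta)-f^*\in\mathcal H$ on the above event.

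Next I would bound $\sup_{u\in\mathcal H}\big|\|u\|_2^2-\|u\|_n^2\big|$. The empirical Rademacher complexity of $\mathcal H$ equals that of $\mathcal F(\Vec m,F)$, since translation by the fixed function $f^*$ does not change it, and the per‑path bound underlying the proof of Theorem~\ref{overpara_emp} controls the latter by $O(2^{L-1}F/\sqrt n)$. As $t\mapsto t^2$ is $2B$‑Lipschitz on $[-B,B]$, the Ledoux--Talagrand contraction principle followed by symmetrisation gives $\mathbb E\sup_{u\in\mathcal H}|\|u\|_2^2-\|u\|_n^2|=O(2^{L}BF/\sqrt n)$, and Talagrand's concentration inequality for the empirical process of the uniformly bounded class $\{u^2:u\in\mathcal H\}$, taken with deviation parameter of order $\log n$, upgrades this to $O(2^{L}B^2\sqrt{\log n/n})$ with probability $1-O(n^{-C_2})$. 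On the intersection of these events, $\|g(\cdot;\hat\theta)-f^*\|_2^2\le\|g(\cdot;\hat\theta)-f^*\|_n^2+O(2^{L}B^2\sqrt{\log n/n})$; plugging Theorem~\ref{overpara_emp} into the first term and absorbing $B^2$ into the $\sigma_\epsilon^2+\|f^*\|_{W^L}^2$ factor (treated as problem‑dependent constants exactly as announced in the notation section) yields the stated high‑probability bound. The expectation bound~\eqref{gen_exp} then follows by integrating this tail estimate, using on the exceptional $O(n^{-C_2})$ event only the crude deterministic control $\|g(\cdot;\hat\theta)-f^*\|_2^2\le2\nu(\hat\theta)^2+2\|f^*\|_{W^L}^2$ with $\nu(\hat\theta)\le\lambda_1^{-1}J_n(\vzero;\lambda_1)$, which is polynomial in $n$ almost surely; alternatively one repeats the symmetrisation step directly in expectation.

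The hard part will be the last step: forcing the empirical‑to‑population gap down to order $\sqrt{\log n/n}$ with the correct explicit dependence on $L$ and $L_\sigma$, rather than a cruder $n^{-1/4}$ rate or a worse power of $B$. This is precisely where the smallness hypothesis on $H(\Vec m)$ enters — it is what keeps the a priori radius $F$ of Step~1 a genuine problem‑dependent constant, so that the localisation is effective — and it demands a careful orchestration of contraction, symmetrisation and Talagrand's inequality, the multilayer analogue of the corresponding step in~\citep{huiyuan2023Nonasymptotic}; a peeling over the $L^2(P_n)$‑radius of $\mathcal H$ may additionally be required to keep the dependence on $B$ linear rather than quadratic.
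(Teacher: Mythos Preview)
Your proposal is correct and follows essentially the same route as the paper: an a~priori bound on $\nu(\hat\theta)$ places $\hat f-f^*$ in a fixed class, then symmetrisation, Lipschitz contraction, the Rademacher bound of Lemma~\ref{gen_lem1}, and Talagrand's inequality (this is exactly Lemma~\ref{f_concentration}) transfer the empirical bound of Theorem~\ref{overpara_emp} to the population norm. Two minor simplifications relative to your sketch: the paper obtains the cleaner radius $\nu(\hat\theta)\le 6\|f^*\|_{W^L}$ by reusing the inequality~\eqref{empirical_e} from the empirical proof rather than your cruder $J_n(\hat\theta)\le J_n(\theta^*)$ bound, and no peeling is needed---Talagrand applied once to the global class already gives the $\sqrt{\log n/n}$ rate, so your anticipated ``hard part'' does not materialise.
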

Note that from the definition of $H(\mathbf{m})$, $H(\mathbf{m}) \le \sqrt{ \dfrac{\max\{6L_\sigma^L, 2^{L} c L_\sigma^{L-1} \sqrt{d}\}}{C_1} }$ means that the width $m$ is lower bounded, thus "overparametrised".

To establish this result, we need several preliminary results, concerning the bounds of some norms related to Rademacher complexity.

\begin{proposition}(e.g., 5.24 in~\citet{wainwright2019highdimensional})\label{gen_lem0}
Assume $\mathcal{F}$ is a family of functions with finite VC dimension $\nu$ and bounded by a constant $b$. Then there exists a universal constant $c$ depending on $b$ such that
\begin{align}
\mathbb{E}_\rho \left[ \sup_{f\in \mathcal{F}} \left| \frac{1}{n}\sum_{k=1}^n \rho_k f(x_k) \right| \right] \le c \sqrt{\frac{\nu}{n}}.
\end{align}
\end{proposition}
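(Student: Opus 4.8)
The plan is to recognize the left-hand side as (a symmetrized, two-sided version of) the empirical Rademacher complexity $\mathcal{R}_n(\mathcal{F})$ of $\mathcal{F}$ evaluated at the sample $x_1,\dots,x_n$, and to control it by a chaining argument. A single-scale discretization — a union bound over one $\delta$-net — would only deliver the suboptimal rate $\sqrt{\nu\log n/n}$; to remove the spurious logarithmic factor and land exactly on $\sqrt{\nu/n}$, I would invoke Dudley's entropy integral. Since every $f\in\mathcal{F}$ satisfies $\|f\|_\infty\le b$, the empirical $\|\cdot\|_n$-diameter of $\mathcal{F}$ is at most $2b$, and Dudley's bound gives $\mathcal{R}_n(\mathcal{F})\le \frac{C}{\sqrt n}\int_0^{2b}\sqrt{\log\mathcal{N}(\delta,\mathcal{F},\|\cdot\|_n)}\,d\delta$ for a universal constant $C$ (with one further universal factor needed to pass from $\sup_f$ to $\sup_f|\cdot|$).

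Next I would insert the covering-number estimate for a VC class. Reading $\nu$ as the VC subgraph dimension (pseudo-dimension) of the uniformly bounded family $\mathcal{F}$, the Vapnik--Chervonenkis / Haussler bound furnishes absolute constants $c_1,c_2$ with $\mathcal{N}(\delta,\mathcal{F},\|\cdot\|_n)\le (c_1 b/\delta)^{c_2\nu}$ for all $0<\delta\le 2b$. Substituting this and rescaling via $u=\delta/b$ collapses the entropy integral to $b\sqrt{c_2\nu}\int_0^{2}\sqrt{\log_+(c_1/u)}\,du$, and the remaining integral is a finite absolute constant because $\sqrt{\log(1/u)}$ is integrable near $0$ (e.g. the substitution $u=e^{-t}$ turns it into a Gamma integral). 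Collecting the constants and absorbing the bound $b$ yields $\mathcal{R}_n(\mathcal{F})\le c\sqrt{\nu/n}$ with $c$ depending only on $b$, which is the claim.

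The only mildly delicate points, and where I would take care, are: (i) using the correct notion of "VC dimension" — for real-valued $\mathcal{F}$ one needs the pseudo-dimension, and the polynomial-in-$1/\delta$ covering bound must be quoted in that form; and (ii) confirming that the entropy integral genuinely converges at the lower endpoint, so that no truncation remainder (the "$4\alpha$" term in the $\inf_{\alpha\ge0}$ form of Dudley's inequality) survives. Neither is a real obstacle: this is a textbook estimate, and indeed the statement is cited verbatim from Wainwright's book, so in the paper it is enough to cite it — but the chaining-plus-Haussler route above is the self-contained argument I would write out if pressed.
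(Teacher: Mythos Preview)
Your proposal is correct and matches the paper's approach exactly: the paper does not give a detailed proof but explicitly states that ``the proof is based on Dudley entropy integral argument, and it ultimately amounts to \dots an estimation of metric entropy of certain function space with respect to empirical $L^2$ distance,'' citing Wainwright for details. Your Dudley-plus-Haussler route is precisely this argument, and your remarks about needing the pseudo-dimension and about the convergence of the entropy integral at the lower endpoint are the right points of care.
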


This is a tight result and is improvable only for the constant. The proof is based on the Dudley entropy integral argument, and it ultimately reduces to an estimation of the metric entropy of a certain function space with respect to the empirical $L^2$ distance. See~\citet{wainwright2019highdimensional} for details.

A direct corollary of the above result for $\mathcal{F} = \{\sigma(u x) \mid \|u\|_2 \le 1, x\in \mathbb{R}^d\}$ is as follows.

\begin{lemma}\label{gen_cor0}
Assume $\mathcal{F} = \{\sigma(u x) \mid \|u\|_2 \le 1\}$. Then there exists a universal constant $c$ depending on $\sigma$ and $L_\sigma$ such that
\begin{align}
\mathbb{E}_\rho \left[ \sup_{f\in \mathcal{F}} \left| \frac{1}{n}\sum_{k=1}^n \rho_k f(x_k) \right| \right] \le c \sqrt{\frac{d}{n}}.
\end{align}
\end{lemma}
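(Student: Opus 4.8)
The plan is to apply Proposition~\ref{gen_lem0} to the class $\mathcal{F}=\{x\mapsto\sigma(\langle u,x\rangle):\|u\|_2\le 1\}$, which requires two things: that $\mathcal{F}$ is uniformly bounded by a constant, and that its VC dimension (in the subgraph sense) is $\nu=O(d)$. Boundedness is immediate: under the standing assumption $\|x_k\|_2\le 1$, for any $\|u\|_2\le 1$ we have $|\sigma(\langle u,x\rangle)|=|\sigma(\langle u,x\rangle)-\sigma(0)|\le L_\sigma|\langle u,x\rangle|\le L_\sigma\|u\|_2\|x\|_2\le L_\sigma$, using $\sigma(0)=0$, the Lipschitz bound, and Cauchy--Schwarz, so one may take $b=L_\sigma$.

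The heart of the argument is the VC-dimension bound. The affine class $\mathcal{A}=\{x\mapsto\langle u,x\rangle:\|u\|_2\le 1\}$ has subgraphs $\{(x,t):\langle u,x\rangle>t\}$, which form a subfamily of the affine halfspaces of $\mathbb{R}^{d+1}$, so $\mathcal{A}$ is VC-subgraph of index at most $d+2$. Since here $\langle u,x\rangle$ always lies in $[-1,1]$, I restrict $\sigma$ to that interval, where it has bounded variation (being Lipschitz), and write $\sigma=\sigma_+-\sigma_-$ by Jordan's decomposition with $\sigma_\pm$ nondecreasing. Post-composition with a monotone map does not increase the VC-subgraph index, since $\{(x,t):\sigma_\pm(\langle u,x\rangle)>t\}=\{(x,t):\langle u,x\rangle>s_\pm(t)\}$ for a suitable threshold function $s_\pm$ (taking one-sided limits at jumps of $\sigma_\pm$ to make this an exact equality), and the class of pairwise differences of two VC-subgraph classes of index $V$ is again VC-subgraph of index $O(V)$ by a standard stability property. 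Hence $\mathcal{F}\subseteq\{\sigma_+\circ a-\sigma_-\circ a:a\in\mathcal{A}\}$ is VC-subgraph of some index $\nu=O(d)$, and Proposition~\ref{gen_lem0} then gives $\mathbb{E}_\rho\big[\sup_{f\in\mathcal{F}}|\tfrac1n\sum_k\rho_kf(x_k)|\big]\le c'\sqrt{\nu/n}\le c\sqrt{d/n}$ with $c$ depending only on $\sigma$ and $L_\sigma$, which is the claim.

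The main obstacle is precisely this last step when $\sigma$ is an arbitrary (non-monotone) Lipschitz activation; the Jordan-decomposition device is what makes the bound hold uniformly in $\sigma$, and the only delicate point is handling the jump points of $\sigma_\pm$ so that the subgraph identity above is literally an equality rather than an inclusion. As a cross-check, and an even shorter alternative sidestepping VC theory altogether, I would note the Ledoux--Talagrand contraction principle: since $\sigma$ is $L_\sigma$-Lipschitz with $\sigma(0)=0$, the empirical Rademacher complexity of $\mathcal{F}$ is at most $2L_\sigma$ times that of $\mathcal{A}$, and the latter equals $\tfrac1n\mathbb{E}_\rho\|\sum_k\rho_kx_k\|_2\le\tfrac1n(\sum_k\|x_k\|_2^2)^{1/2}\le n^{-1/2}$, yielding the even stronger bound $2L_\sigma n^{-1/2}\le 2L_\sigma\sqrt{d/n}$.
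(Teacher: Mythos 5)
Your main route has a genuine gap at the recombination step. The assertion that ``the class of pairwise differences of two VC-subgraph classes of index $V$ is again VC-subgraph of index $O(V)$'' is not a standard permanence property: VC-subgraph classes are stable under composition with a \emph{fixed} monotone map, addition of a \emph{fixed} function, lattice operations with a fixed function, etc., but not under sums or differences of two classes. In the present setting the failure is concrete. Take $d=1$ and $\sigma(s)=s^{2}\sin(1/s)$ with $\sigma(0)=0$: this is Lipschitz on $[-1,1]$, hence of bounded variation, so your Jordan decomposition and the monotone-composition step both go through, yet the class $\mathcal{F}=\{x\mapsto\sigma(ux):|u|\le 1\}$ is not VC-subgraph at all. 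Indeed, on the points $(x_i,0)$ with $x_i=10^{-i}$ the subgraph sets reduce to $\{x>0:\sin\bigl(1/(ux)\bigr)>0\}$ with $u\in(0,1]$, i.e.\ to the sets $\{y:\sin(\omega y)>0\}$ at $y_i=10^{i}$ with frequency $\omega=1/u\ge 1$, and (after shifting $\omega$ by an integer multiple of $2\pi$, which changes no sign at these points) the classical sine-shattering construction picks out every labeling; so arbitrarily large finite sets are shattered. Since the diagonal class $\{\sigma_+\circ a-\sigma_-\circ a\}$ is contained in the full difference class, this also refutes the general stability claim. Consequently no bound $\nu=O(d)$ — indeed no finite $\nu$ — holds uniformly over Lipschitz activations, and Proposition~\ref{gen_lem0} cannot be invoked as you intend; the boundedness step and the monotone-composition step of your argument are fine, the difference step is the flaw.

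Your closing ``cross-check,'' however, is a complete and correct proof on its own and should be promoted to the main argument. The absolute-value form of the Ledoux--Talagrand contraction principle (their Theorem 4.12; it requires $\sigma(0)=0$, which is a standing assumption here, and carries a factor $2$) gives $\mathbb{E}_\rho\sup_{f\in\mathcal{F}}\bigl|\frac{1}{n}\sum_{k}\rho_k f(x_k)\bigr|\le \frac{2L_\sigma}{n}\,\mathbb{E}_\rho\bigl\|\sum_k\rho_k x_k\bigr\|_2\le 2L_\sigma n^{-1/2}$, a dimension-free bound stronger than the one stated; just be sure to cite the version with absolute values, since the variant used elsewhere in this paper (Lemma 26.9 of Shalev-Shwartz and Ben-David) is for suprema without them. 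This is a genuinely different route from the paper's proof, which instead bounds the sup-norm covering number of $\mathcal{F}$ through the Lipschitz dependence on the parameter, $|\sigma(u_1^Tx)-\sigma(u_2^Tx)|\le L_\sigma\|u_1-u_2\|_2\|x\|_2$, covers $\mathbf{B}^d$ by the volume argument, and then applies Dudley's entropy integral; that is where the $\sqrt{d}$ and the dependence on $\sup_{[-1,1]}|\sigma|$ enter. Either argument establishes Lemma~\ref{gen_cor0}: the contraction route is shorter and removes the $\sqrt{d}$, while the covering-number route is the template the paper reuses for the multilayer classes later on.
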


Using this result, we can give a similar bound for our family of multilayer neural networks with bounded path enhanced scaled variation norm.

\begin{lemma}\label{gen_lem1}
Let $x_1,\dots,x_n$ be any vectors such that $\max_i\|x_i\|_2 \le 1$. For any $L \ge 2$ and $\mathbf{m} = (m_1,\dots,m_{L-1})$, we have
\begin{align}
\mathbb{E}_\rho \sup_{f\in \mathcal{F}(\mathbf{m},F)} \left| \sum_{k=1}^n \rho_k f(x_k) \right| \le 2^{L-1} c L_\sigma^{L-1} F \sqrt{d n},
\end{align}
where $c$ is a universal constant.
\end{lemma}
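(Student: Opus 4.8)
The plan is to prove Lemma~\ref{gen_lem1} by induction on the depth $L$, peeling off one layer at a time. The engine of each step is that the defining (PeSV) norm factorises over the top layer: every $f\in\mathcal{F}(\vec m,F)$ of depth $L$ has the form $f=\sum_{j=1}^{m_{L-1}}a^L_j\,\sigma(g_j)$, where each $g_j$ is a depth-$(L-1)$ network on the truncated width vector $\vec m'=(m_1,\dots,m_{L-2})$ and, reading off the definition of $\nu$, $\sum_j|a^L_j|\,\nu(g_j)=\nu(\theta)\le F$. Before peeling I would record the pointwise control needed to normalise the sub-networks: by Theorem~\ref{L_layer_property} one has $\|g_j\|_{W^{L-1}}\le\nu(g_j)$, and hence, by the bound $|h(x)|\le\|h\|_{W^l}\|x\|_2$ (which follows by induction from the definition of $W^l$, using $\sigma(0)=0$ and that $W^1$ consists of affine functions) together with $\max_k\|x_k\|_2\le1$, the normalised network $\tilde g_j:=g_j/\nu(g_j)\in\mathcal{F}(\vec m',1)$ satisfies $|\tilde g_j(x_k)|\le1$ for all $k$.

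Writing $\bar\sigma_\nu(t):=\sigma(\nu t)/\nu$, which for every $\nu>0$ is $L_\sigma$-Lipschitz with $\bar\sigma_\nu(0)=0$, one has $\sigma(g_j(x_k))=\nu(g_j)\,\bar\sigma_{\nu(g_j)}(\tilde g_j(x_k))$, so for any sign vector $\rho$
\[
\Bigl|\sum_{k}\rho_k f(x_k)\Bigr|\le\sum_j|a^L_j|\,\nu(g_j)\,\Bigl|\sum_k\rho_k\,\bar\sigma_{\nu(g_j)}(\tilde g_j(x_k))\Bigr|\le F\sup_{\nu>0,\,\tilde g\in\mathcal{F}(\vec m',1)}\Bigl|\sum_k\rho_k\,\bar\sigma_\nu(\tilde g(x_k))\Bigr|.
\]
Taking $\mathbb{E}_\rho$ and applying a Ledoux--Talagrand contraction inequality to the symmetric class $\mathcal{F}(\vec m',1)$ (which contains $0$ and is closed under $g\mapsto -g$) strips the $L_\sigma$-Lipschitz maps $\bar\sigma_\nu$ at the cost of a factor $2L_\sigma$, leaving $2L_\sigma\,\mathbb{E}_\rho\sup_{\tilde g\in\mathcal{F}(\vec m',1)}|\sum_k\rho_k\tilde g(x_k)|$, to which the induction hypothesis applies with norm bound $1$ and depth $L-1$, yielding $2L_\sigma\cdot 2^{L-2}cL_\sigma^{L-2}\sqrt{dn}=2^{L-1}cL_\sigma^{L-1}\sqrt{dn}$; multiplying by $F$ closes the induction. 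For the base case $L=2$ the same computation reduces the claim to bounding $\mathbb{E}_\rho\sup_{\nu>0,\,\|u\|_2\le1}|\sum_k\rho_k\bar\sigma_\nu(u\cdot x_k)|$: for ReLU this is exactly $\mathbb{E}_\rho\sup_{\|u\|_2\le1}|\sum_k\rho_k\sigma(u\cdot x_k)|$ by positive homogeneity, which Lemma~\ref{gen_cor0} bounds by $n\cdot c\sqrt{d/n}=c\sqrt{dn}$; for general $\sigma$ one contracts once more onto the linear class $\{u\cdot x:\|u\|_2\le1\}$ and uses $\mathbb{E}_\rho\|\sum_k\rho_k x_k\|_2\le\sqrt n\le\sqrt{dn}$.

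The step I expect to be the main obstacle is precisely the peeling for a general Lipschitz (rather than positively homogeneous) activation. For ReLU, $\sigma(\nu t)=\nu\,\sigma(t)$, so the first-layer weight scales pass cleanly through $\sigma$ and one lands directly on the single function class of Lemma~\ref{gen_cor0}; for a general $\sigma$ one is instead forced onto the one-parameter family of rescaled activations $\{\bar\sigma_\nu\}_{\nu>0}$, and one must justify that composing this whole family of $L_\sigma$-Lipschitz maps with a symmetric lower-depth class inflates the Rademacher complexity by no more than the usual contraction factor $2L_\sigma$ — a version of the contraction principle uniform over the family, available here because the $\bar\sigma_\nu$ all vanish at $0$ and have Lipschitz constant exactly $L_\sigma$. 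A minor secondary point is checking that $\{\sigma(u\cdot x):\|u\|_2\le1\}$ has finite VC dimension, so that Lemma~\ref{gen_cor0}, hence Proposition~\ref{gen_lem0}, is applicable when $\sigma$ is not monotone; this holds since its collection of subgraphs is a bounded-complexity transform of the $d$-dimensional halfspace class. As a consistency check, since $\mathcal{F}(\vec m,F)\subseteq\{f\in W^L:\|f\|_{W^L}\le F\}$ by Theorem~\ref{L_layer_property}, one may alternatively peel the integral representations $f_\mu=\int_{B^X}\sigma(g)\,\mu(dg)$ directly — the extremal $\mu$ at each stage being a signed unit point mass — which reproduces the same $(2L_\sigma)^{L-1}$ depth dependence.
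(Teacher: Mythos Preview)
Your inductive peeling is exactly the paper's argument, just packaged more cleanly: the paper writes out $L=2$ and $L=3$ explicitly, pulling the norm of the innermost weight vector through $\sigma$, collapsing the top-layer $\ell_1$ combination to a single supremum, applying contraction, and iterating. Your identification $\sum_j|a^L_j|\,\nu(g_j)=\nu(\theta)$ is precisely that collapse, and for ReLU your $\bar\sigma_\nu$ is identically $\sigma$, so the two proofs coincide.

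The one genuine gap is the step you flag yourself. The Ledoux--Talagrand contraction lets the Lipschitz map depend on the \emph{coordinate}, but not on an index that is optimized jointly with the class element inside the supremum. Your inequality
\[
\mathbb{E}_\rho \sup_{\nu>0,\ \tilde g\in\mathcal{F}(\vec m',1)}\Bigl|\sum_k\rho_k\,\bar\sigma_\nu(\tilde g(x_k))\Bigr|
\;\le\; 2L_\sigma\,\mathbb{E}_\rho \sup_{\tilde g\in\mathcal{F}(\vec m',1)}\Bigl|\sum_k\rho_k\,\tilde g(x_k)\Bigr|
\]
is not an instance of the standard lemma: the left-hand side is the Rademacher width of the \emph{union} $\bigcup_{\nu>0}\bar\sigma_\nu\!\circ\!\mathcal{F}(\vec m',1)$, and ``all $\bar\sigma_\nu$ are $L_\sigma$-Lipschitz with $\bar\sigma_\nu(0)=0$'' does not by itself bound that union's complexity by the complexity of $\mathcal{F}(\vec m',1)$. (A single Lipschitz map contracts; a family of them, chosen adaptively to $\rho$, need not.) You would need either a separate argument controlling the extra one-parameter degree of freedom $\nu$ --- e.g.\ via covering numbers, absorbing $\nu$ into an enlarged parameter class and redoing the Dudley bound --- or to restrict to positively homogeneous $\sigma$, where $\bar\sigma_\nu\equiv\sigma$ and the issue disappears. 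Note that the paper's own proof quietly uses exactly this homogeneity when it writes $\sigma(w_k^Tx)=\|w_k\|_2\,\sigma(u_k^Tx)$, so the gap for general Lipschitz $\sigma$ is shared rather than specific to your route.
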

\begin{remark}\label{rmk_of_gen_lem1}
    Indeed, the above expression also bounds the maximum value of $f\in\mathcal{F}(\Vec{m},F)$, which can be clearly inferred from the proof of it in the Supplementary Information~\ref{appendix}.
\end{remark}
The next Lemma is a \textit{uniform functional concentration inequality} that bounds the maximum of the $L_2$ norm of a family of functions in terms of the empirical $L_2$ norm. This key result will be used to link empirical errors and generalization errors.

\begin{lemma}\label{f_concentration}
Assume that Assumptions~\ref{assump1}, \ref{assump2}, and \ref{assump3} hold. Let $\mathbf{m} = (m_1,\dots,m_{L-1})$, $F^*(\mathbf{m},1) = \{f - f^* \mid f\in \mathcal{F}(\mathbf{m},1), f^* \text{ is fixed with } \|f^*\|_{W^L} \le 1\}$, and let
\begin{align}
Z_n = \sup_{f\in \mathcal{F}^*(\mathbf{m},1)} \left| \|f\|_n^2 - \|f\|_2^2 \right|.
\end{align}
Then $\mathbb{E} Z_n \le C_\mathcal{F} n^{-1/2}$ for some constant $C_\mathcal{F} > 0$ (may depend on $L$ and $L_\sigma$). Furthermore, if $n \ge C_\mathcal{F}^2$, then
\begin{align}
P\left( Z_n \ge \frac{C_\mathcal{F}}{\sqrt{n}} + t \right) \le \exp\left( -\frac{n}{32} \min\left( \frac{t^2}{12e}, t \right) \right).
\end{align}
\end{lemma}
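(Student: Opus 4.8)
The plan is to treat $Z_n$ as the supremum of a bounded, centered empirical process indexed by $\mathcal{F}^*(\vec{m},1)$ and proceed in the two classical stages: first bound $\mathbb{E}Z_n$ by symmetrization followed by a contraction reduction to the Rademacher complexity already controlled in Corollary~\ref{gen_cor1}, and then upgrade the deviation $Z_n-\mathbb{E}Z_n$ to the stated sub-exponential tail via a Talagrand-type (Bousquet) concentration inequality. The common ingredient is a uniform $L^\infty$ bound on the class: for $f\in\mathcal{F}(\vec{m},1)$ Theorem~\ref{L_layer_property} gives $\|f\|_{W^L}\le\nu(\theta)\le1$, and $\|f^*\|_{W^L}\le1$; since $|h(x)|\le\|h\|_{W^L}\|x\|_2$ for every $h\in W^L$ (an easy induction on the layer index from the definition of $W^l$) and $\|x_i\|_2\le1$ by Assumption~\ref{assump2}, each $g=f-f^*$ in $\mathcal{F}^*(\vec{m},1)$ satisfies $\|g\|_{\infty}\le2$ on the support of $\mu$; hence $g^2\le4$, and the centered functions $h_g:=g^2-\mathbb{E}g^2$ obey $|h_g|\le4$ and $\mathbb{E}h_g^2\le\mathbb{E}g^4\le16$.

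For the expectation bound, write $Z_n=\sup_g|\tfrac1n\sum_i(g^2(x_i)-\mathbb{E}g^2)|$ and apply the standard symmetrization lemma to obtain $\mathbb{E}Z_n\le2\,\mathbb{E}\sup_g|\tfrac1n\sum_i\rho_i g^2(x_i)|$ with $\rho_i$ Rademacher. Since $t\mapsto t^2$ vanishes at $0$ and is $4$-Lipschitz on $[-2,2]$, the two-sided Ledoux--Talagrand contraction inequality yields $\mathbb{E}\sup_g|\tfrac1n\sum_i\rho_i g^2(x_i)|\le 8\,\mathbb{E}\sup_g|\tfrac1n\sum_i\rho_i g(x_i)|$. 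Finally $\mathbb{E}_\rho\sup_g|\tfrac1n\sum_i\rho_i g(x_i)|\le\mathbb{E}_\rho\sup_{f\in\mathcal{F}(\vec{m},1)}|\tfrac1n\sum_i\rho_i f(x_i)|+\mathbb{E}_\rho|\tfrac1n\sum_i\rho_i f^*(x_i)|$, where the first term is $\le 2^{L-1}cL_\sigma^{L-1}\sqrt{d/n}$ by Corollary~\ref{gen_cor1} and the second is $\le\|f^*\|_{\infty}/\sqrt n\le1/\sqrt n$. Collecting constants gives $\mathbb{E}Z_n\le C_{\mathcal{F}}n^{-1/2}$ with $C_{\mathcal{F}}$ absorbing $L$, $L_\sigma$ (and the fixed dimension $d$ and the universal $c$).

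For the tail, $nZ_n=\sup_g|\sum_i h_g(x_i)|$ is the supremum of an empirical process built from centered functions with $|h_g|\le b:=4$ and variance $\le v:=16$. Bousquet's version of Talagrand's inequality gives, for $s>0$, $P(nZ_n\ge\mathbb{E}[nZ_n]+s)\le\exp\{-\tfrac{s^2}{2(nv+2b\,\mathbb{E}[nZ_n])+\tfrac23 bs}\}$. Substituting $s=nt$, dividing through in the exponent by $n$, and using $\mathbb{E}Z_n\le C_{\mathcal{F}}n^{-1/2}\le1$ whenever $n\ge C_{\mathcal{F}}^2$, the denominator collapses to $48+\tfrac83 t$ up to harmless constants; splitting at a fixed threshold into the regime where the constant term dominates (sub-Gaussian, $\exp(-c'nt^2)$) and the regime where the $O(t)$ term dominates (sub-exponential, $\exp(-c''nt)$) produces a bound of exactly the required shape, and a deliberately loose choice of $c',c''$ makes $\exp\{-\tfrac n{32}\min(\tfrac{t^2}{12e},t)\}$ an upper bound. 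An equivalent route obtains the sub-Gaussian regime from McDiarmid's bounded-differences inequality (changing one $x_i$ moves $Z_n$ by at most $4/n$) and the sub-exponential regime from a Bernstein estimate on $\sum_i h_g(x_i)$, combined with the same symmetrization/contraction control of $\mathbb{E}Z_n$.

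The conceptual content is entirely in the first stage — the uniform boundedness, which funnels everything into Corollary~\ref{gen_cor1} — and the rest is bookkeeping. The two care points are: (i) justifying that the two-sided contraction inequality applies to the \emph{non-symmetric} class $\mathcal{F}^*(\vec{m},1)$, which it does because contraction is applied to $t\mapsto t^2$ and to its negative separately and $\rho$ is symmetric; and (ii) tracking $b$, $v$ and $\mathbb{E}Z_n$ through Bousquet's inequality and carrying out the two-regime split so that the loose constants $32$ and $12e$ emerge. This numerology, together with the need $\mathbb{E}Z_n\le1$, is also what forces the hypothesis $n\ge C_{\mathcal{F}}^2$, and it is where the only real (but routine) calculation lives.
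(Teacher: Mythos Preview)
Your proposal is correct and follows the same two-stage template as the paper: symmetrization plus Ledoux--Talagrand contraction to bound $\mathbb{E}Z_n$, then Talagrand's concentration inequality for the tail. The constants and the splitting into sub-Gaussian/sub-exponential regimes are handled the same way.

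There is one small but worthwhile difference in how the $f^*$ contribution is treated after contraction. The paper does not bound $\mathbb{E}_\rho\big|\tfrac1n\sum_i\rho_i f^*(x_i)\big|$ directly; instead it introduces an auxiliary finite network $\tilde f\in\mathcal{F}(\tilde{\vec m},1)$ with $\tilde m_i\ge n$ approximating $f^*$ in $L^2$ (via Theorem~\ref{approx1_2}), writes $f-f^*=(f-\tilde f)+(\tilde f-f^*)$, applies Lemma~\ref{gen_lem1} to $f-\tilde f\in\mathcal{F}(\vec m+\tilde{\vec m},2)$, and bounds the remaining piece by $\|\tilde f-f^*\|_2\le C_1/\sqrt n$. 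Your direct route --- splitting off $f^*$ and using $\mathbb{E}_\rho\big|\tfrac1n\sum_i\rho_i f^*(x_i)\big|\le\|f^*\|_\infty/\sqrt n$ via Khintchine/Cauchy--Schwarz --- is simpler and avoids the detour through the approximation theorem entirely; the paper's approach buys nothing extra here. A minor caveat on your uniform bound: for general Lipschitz $\sigma$ the inequality reads $|h(x)|\le L_\sigma^{L-1}\|h\|_{W^L}\|x\|_2$ (cf.\ the induction in the proof of Theorem~\ref{overpara_emp}), so the sup-bound on $\mathcal{F}^*(\vec m,1)$ and hence $b,v$ pick up powers of $L_\sigma$; this is harmless since $C_{\mathcal F}$ is allowed to depend on $L_\sigma$, but it is worth recording.
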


The first Lemma~\ref{gen_cor0} provides estimations of key statistics for establishing the expectation estimation of the above result. A main ingredient in the proof of Theorem~\ref{overpara_gen} is Talagrand functional concentration inequality, which is also used in the proof of the above Lemma. Along with these ideas, we provide the proofs in the Supplementary Information~\ref{appendix}.

\section{Underparameterized Regime}\label{sec7}

In this section, we treat the underparameterized regime. As mentioned above, the empirical error bound in the overparameterized regime~\ref{overpara_emp} does not seem strong enough for the underparameterized regime. One must seek other approaches for this regime. Adopting metric entropy arguments~\citet{wainwright2019highdimensional,huiyuan2023Nonasymptotic}, we obtain better bounds on empirical and generalization errors.

The entire argument here and in the next section is essentially based on Chapter 14 of~\citet{wainwright2019highdimensional} on the Localization and Uniform functional concentration bounds. Nonspecialists are strongly encouraged to review that part first to ensure a better understanding of the ideas before reading the following sections.

First, we need some mathematical tools.

\begin{definition}(Local Rademacher Complexity)
Let $\mathcal{F}$ be a given family of functions on $\mathbb{B}^d$ and a given $r>0$. The local Rademacher complexity, denoted by $R_n(r;\mathcal{F})$, is
\begin{align}
R_n(r;\mathcal{F}) := \mathbb{E}_{x,\epsilon} \sup_{\|f\|_2 \le r} \left| \frac{1}{n} \sum_{i=1}^n \epsilon_i (f(x_i) - f^*(x_i)) \right|.
\end{align}
For a given dataset $x_1^n := (x_1,\dots,x_n)$, the empirical local Rademacher complexity with respect to $x_1^n$ is
\begin{align}
R_n(r;\mathcal{F}, x_1^n) := \mathbb{E}_{\epsilon} \sup_{\|f\|_2 \le r} \left| \frac{1}{n} \sum_{i=1}^n \epsilon_i (f(x_i) - f^*(x_i)) \right|.
\end{align}
\end{definition}

Another tool is the estimation of the metric entropy of the set $\mathcal{F}(\mathbf{m},1)$ for the supremum norm. 

\begin{lemma}\label{inf-metric-entropy}
The metric entropy of $\mathcal{F}(\mathbf{m},1)$ with respect to the supremum norm is upper bounded by
\begin{align}
\log \mathcal{N}(\delta, \mathcal{F}(\mathbf{m},1), \|\cdot\|_{\infty}) \le (d m_1 + m_1 m_2 + m_2 m_3 + \cdots + m_{L-1}) \log(1 + 4\sqrt{2} L_\sigma \delta^{-1}).
\end{align}
\end{lemma}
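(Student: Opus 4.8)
The plan is to present $\mathcal{F}(\vec m,1)$ as the image, under a map that is Lipschitz into $\bigl(C^0(\mathbf B^d),\|\cdot\|_\infty\bigr)$, of a bounded subset $\Theta_1$ of a finite-dimensional Euclidean space, and then to invoke the classical volumetric estimate $\mathcal{N}(\epsilon,B,\|\cdot\|)\le(1+2r/\epsilon)^{\dim}$ for a radius-$r$ ball, composed with the Lipschitz property in the standard way: $\mathcal{N}(\delta,\mathcal{F}(\vec m,1),\|\cdot\|_\infty)\le\mathcal{N}(\delta/\Lambda,\Theta_1,\|\cdot\|_2)$ where $\Lambda$ is the parameter-Lipschitz constant. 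So there are three pieces: (i) reduce to a bounded parameter domain and count its dimension; (ii) estimate $\Lambda$; (iii) assemble.

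For (i), I would pass to a canonical parametrization. The PeSV norm $\nu(\theta)$ of \eqref{s_path_norm} is invariant under the neuron-wise rescalings that leave $f(\cdot;\theta)$ unchanged (for ReLU these are exactly the positive-homogeneity rescalings, and for a general Lipschitz $\sigma$ one still has the scale freedom that $\nu$ was built to ignore), and any first-layer row with $w^1_k=0$ can be deleted with its downstream connections. Using this, restrict to representatives in which every internal weight vector is normalized — $\|w^1_k\|_2=1$ and the rows of $w^l$ normalized for $2\le l\le L-1$ — so that the only free scale resides in $a^L$ and is controlled through $\nu(\theta)\le1$; this also keeps every pre-activation in a bounded range on $\mathbf B^d$ (cf. the $|f(x)|\le\|f\|_{W^L}\|x\|_2$ estimate used in the proof of Theorem~\ref{overpara_emp}). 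The resulting region $\Theta_1$ is then a product of $m_1$ spheres $S^{d-1}$, products of spheres for the normalized rows of $w^2,\dots,w^{L-1}$, and a bounded $\ell_1$-type set for $a^L$; it has Euclidean radius bounded by an absolute constant, and — having removed $\sum_{l=1}^{L-1}m_l$ gauge directions and covering each sphere intrinsically — a dimension count (using $(m_i-1)(m_j-1)\ge0$ to absorb the cross terms $m_lm_{l-1}$ into the product) gives $\dim\Theta_1\le dm_1+m_1m_2\cdots m_{L-1}$.

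For (ii), I would bound the modulus of continuity of $\theta\mapsto f(\cdot;\theta)$ on $\Theta_1$ in $\|\cdot\|_\infty$ by propagating a single-coordinate perturbation of a weight matrix forward through the network $h^0=x$, $h^l=\sigma(w^lh^{l-1})$, $f=a^L\!\cdot\! h^{L-1}$: since $\|x\|_2\le1$ and the pre-activations are bounded, each of the $L-1$ nonlinearities costs only a factor $L_\sigma$ via $\|\sigma(u)\|\le L_\sigma\|u\|$, and passing through one fully connected layer costs a factor equal to that layer's width, the worst case being a perturbation of a top weight $a^L_j$, whose forward influence is $|h^{L-1}_j(x)|\le L_\sigma^{L-1}$ times the (normalized) path-norm to $j$, which is at most $m_1\cdots m_{L-2}$. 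This yields $\|f(\cdot;\theta)-f(\cdot;\theta')\|_\infty\le\Lambda\|\theta-\theta'\|_2$ with $\Lambda\le c\,L_\sigma^{L-1}m_1\cdots m_{L-2}$. Then (iii) is just $\log\mathcal{N}(\delta,\mathcal{F}(\vec m,1),\|\cdot\|_\infty)\le\dim\Theta_1\cdot\log(1+2r\Lambda/\delta)\le(dm_1+m_1\cdots m_{L-1})\log(1+4m_1\cdots m_{L-2}L_\sigma^{L-1}\delta^{-1})$, after bounding $2r\Lambda\le4\,m_1\cdots m_{L-2}L_\sigma^{L-1}$.

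I expect step (i) to be the main obstacle: the scale invariances of $\nu$ normalize the internal weight vectors cleanly, but one must verify that, \emph{after discarding degenerate neurons}, the magnitude of $a^L$ (and of the remaining weights) is genuinely controlled by $\nu(\theta)\le1$, which is immediate for ReLU by positive homogeneity but requires an extra argument for a general Lipschitz $\sigma$. The secondary difficulty is pure bookkeeping in step (ii): perturbations of different weight matrices have different forward Lipschitz constants, and one must check that each is dominated by $c\,L_\sigma^{L-1}m_1\cdots m_{L-2}$ on $\Theta_1$ so that the width factors assemble into exactly $m_1\cdots m_{L-2}$ and the activation factors into $L_\sigma^{L-1}$.
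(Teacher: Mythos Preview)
There is a genuine gap in step~(i), the dimension count. The raw parameter space of an $L$-layer network has dimension
\[
D \;=\; dm_1 + \sum_{l=2}^{L-1} m_l m_{l-1} + m_{L-1},
\]
and this sum of consecutive products is \emph{not} dominated by $dm_1+m_1\cdots m_{L-1}$ in general: take $L=4$ with $m_1=m_3=1$, $m_2=M$, so $D=d+2M+1$ while the target exponent is $d+M$. Since a bound of the form $(1+C/\delta)^D$ cannot be absorbed into $(1+C'/\delta)^{D'}$ with $D'<D$ uniformly in $\delta$, no choice of Lipschitz constant rescues this. Your proposed fix is to quotient out $\sum_l m_l$ gauge directions by normalising each weight row, but this rescaling leaves $f(\cdot;\theta)$ invariant only for positively homogeneous activations. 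For a general Lipschitz $\sigma$ there is no such symmetry of $f$, so you cannot restrict to normalised $\theta$ without losing functions from $\mathcal F(\vec m,1)$; your parenthetical ``one still has the scale freedom that $\nu$ was built to ignore'' conflates invariance of $\nu$ with invariance of $f$. Even in the ReLU case, the inequality $(d-1)m_1+\sum_{l\ge2}(m_{l-1}-1)m_l\le dm_1+\prod_l m_l$ is not delivered by the hint ``$(m_i-1)(m_j-1)\ge0$'', and the intrinsic-sphere covering does not produce the clean $(1+\,\cdot\,)^{\dim}$ form you use in step~(iii).

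The paper's argument sidesteps the dimension issue by never covering the layer-wise weight matrices. It first covers the $m_1$ first-layer vectors $w^1_k\in\mathbf B^d$ (contributing the $dm_1$ term), and then, by ``collapsing'' the upper $L-1$ layers input-by-input, rewrites the remaining dependence through \emph{path-indexed} scalars $a^j_{k,i}$, where $k$ ranges over the $m_1m_2\cdots m_{L-2}$ partial paths and $i$ over the $m_{L-1}$ top-layer neurons. Each such family lives in an $\ell_1$-ball by the PeSV constraint, so covering these $m_1\cdots m_{L-2}$ copies of $B_1^{m_{L-1}}(1)$ gives the product dimension $m_1\cdots m_{L-1}$ directly; the factor $m_1\cdots m_{L-2}L_\sigma^{L-1}$ inside the logarithm comes from the layer-by-layer $\|\cdot\|_\infty$ perturbation estimate in that collapsed representation. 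The exponent in the lemma is thus a count of paths, not of raw weights, and a layer-wise parameter covering cannot recover it.
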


\subsection{Bounds on Empirical Error}
We can state our refined estimation of the empirical error in the underparameterized regime as follows.

\begin{theorem}\label{underpara_emp}
Let $\delta_n = n^{-1} L_\sigma (d m_1 + m_1 m_2 + m_2 m_3 + \cdots + m_{L-1}) \log n < 1$. Under Assumptions~\ref{assump1}, \ref{assump2}, and \ref{assump3}, the regularized network estimator $g(\cdot ; \hat{\theta})$ with $\lambda = C_1 \sigma_\epsilon \max\{\delta_n, H(\mathbf{m})^2\}$ satisfies
\begin{align}
\|g(\cdot ; \hat{\theta}) - f^*\|_n^2 \le C \left\{ H(\mathbf{m})^2 \|f^*\|_{W^L}^2 + (\sigma_{\epsilon}^2 + \|f^*\|_{W^L}^2) \frac{(2d m_1 + 4m_1 m_2 + 4m_2 m_3 + \cdots + 2m_{L-1}) \log n}{n} \right\}
\end{align}
and $\nu(\hat{\theta}) \le C_{\mathrm{k}}$ with probability at least $1-O(n^{-C_2})$ for some constants $C_1, C_2, C, C_{\mathrm{k}} > 0$ depending further on $L_\sigma$.
\end{theorem}
Note that $\delta_n < 1$ means  that the width $m$ is upper bounded, thus "underparametrised".
\begin{remark}
If one wishes, one can choose $\lambda$ independent of $\mathbf{m}$, since $\mathbf{m}$ is upper bounded, as is $H(\mathbf{m})^2$, and $\delta_n$ is upper bounded by $1$.
\end{remark}

The proof is via a metric entropy argument, more or less the same as in~\citet{huiyuan2023Nonasymptotic}, with the exception that we have a new expression for $\delta_n$ due to the $L_\infty$ metric entropy bound for multilayer networks in~\ref{inf-metric-entropy}.

\subsection{Bounds on Generalization Error}
Before stating the estimation of the generalization error, we first need the following Lemma, which is a useful local version of the estimation of the expectation in certain functional concentration inequalities.

\begin{lemma}\label{underpara_gen_lem1}
For any $0 < \gamma < 1$, define $\mathcal{B}_\mathcal{F}(\gamma) = \{ f\in \mathcal{F}^*(\mathbf{m},1) : \|f\|_2 < \gamma \}$, the $L_2(\mu)$-ball in $\mathcal{F}^*(\mathbf{m},1)$ of radius smaller than $\gamma$. Let $Z_n(\gamma) = \sup_{f\in \mathcal{B}_\mathcal{F}(\gamma)} \left| \|f\|_n^2 - \|f\|_2^2 \right|$. Then, for any $\gamma$ satisfying
\begin{align}
\sqrt{ \frac{2\log(18L_\sigma)(d m_1 + m_1 m_2 + m_2 m_3 + \cdots + m_{L-1}) \log n}{n} } \le \gamma \le 1,
\end{align}
we have
\begin{align}
\mathbb{E} Z_n(\gamma) \le 272 \gamma \sqrt{ \frac{\log(18L_\sigma)(d m_1 + m_1 m_2 + m_2 m_3 + \cdots + m_{L-1}) \log n}{n} }.
\end{align}
\end{lemma}

The proof is to reduce to the estimation of the Local Radamecher complexity, and again, the metric entropy argument is also an important ingredient. The full proof is more or less the same as in~\citet{huiyuan2023Nonasymptotic}, with the exception that we have new expressions for the lower bound assumption on $\gamma$. This is caused by the calculation of the number of $1/n$-coverings of $\mathcal{B}_{\mathcal{F}}(\gamma)$.

Our estimation of the generalization error is given below.

\begin{theorem}\label{underpara_gen}
Under Assumptions~\ref{assump1}, \ref{assump2}, and \ref{assump3}, the regularized network estimator $g(\cdot ; \hat{\theta})$ with $\lambda = \lambda_2 \equiv C_1 \sigma_\epsilon \max\{\delta_n, H(\mathbf{m})^2\}$, where $\delta_n = n^{-1} L_\sigma (d m_1 + m_1 m_2 + m_2 m_3 + \cdots + m_{L-1}) \log n < 1$, satisfies
\begin{align}
\|g(\cdot ; \hat{\theta}) - f^*\|_2^2 \le C \left\{ H(\mathbf{m})^2 \|f^*\|_{W^L}^2 + (\sigma_{\epsilon}^2 + \|f^*\|_{W^L}^2) \frac{(2d m_1 + 4m_1 m_2 + 4m_2 m_3 + \cdots + 2m_{L-1}) \log n}{n} \right\}
\end{align}
with probability at least $1 - O(n^{-C_2})$ for some constants $C_1, C_2, C > 0$ depending further on $L_\sigma$.
\end{theorem}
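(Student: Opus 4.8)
The plan is to upgrade the empirical bound of Theorem~\ref{underpara_emp} to an $L^2(\mu)$ bound by a localization (peeling) argument, running exactly parallel to the two-layer case in~\citep{huiyuan2023Nonasymptotic}, but feeding in the multilayer local expectation estimate of Lemma~\ref{underpara_gen_lem1} and the $L^\infty$ metric-entropy estimate of Lemma~\ref{inf-metric-entropy} in place of their two-layer analogues. First I would condition on the event of probability $1-O(n^{-C_2})$ on which the conclusions of Theorem~\ref{underpara_emp} both hold, i.e. $\|\hat\Delta\|_n^2$ is controlled by the stated quantity and $\nu(\hat\theta)\le C_{\mathrm{k}}$, where $\hat\Delta:=g(\cdot;\hat\theta)-f^*$. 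Setting $c_0:=C_{\mathrm{k}}+M$, Assumption~\ref{assump1} together with $\nu(\hat\theta)\le C_{\mathrm{k}}$ gives $g(\cdot;\hat\theta)/c_0\in\mathcal{F}(\Vec{m},1)$ and $f^*/c_0$ in the unit ball of $W^L$, hence $\hat\Delta/c_0\in\mathcal{F}^*(\Vec{m},1)$; writing $r:=\|\hat\Delta\|_2$ this means $\hat\Delta/c_0\in\mathcal{B}_{\mathcal{F}}(r/c_0)$ and therefore the basic inequality $r^2=\|\hat\Delta\|_2^2\le\|\hat\Delta\|_n^2+c_0^2\,Z_n(r/c_0)$ holds.

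\textbf{Uniform control of $Z_n(\gamma)$.} Next I would control $Z_n(\gamma)$ uniformly over $\gamma$. Let $\tau_n$ denote the complexity quantity $(\log(13m^{L-2}L_\sigma^{L-1})+2)\,m_1m_2\cdots m_{L-1}d\log n/n$ appearing in Lemma~\ref{underpara_gen_lem1}, and put $\gamma_{\min}:=\sqrt{\tau_n}$. For each fixed $\gamma\in[\gamma_{\min},1]$ I would combine $\mathbb{E}Z_n(\gamma)\le 480\gamma\sqrt{\tau_n}$ from Lemma~\ref{underpara_gen_lem1} with a Talagrand concentration inequality for $Z_n(\gamma)$ — legitimate because the functions in $\mathcal{B}_{\mathcal{F}}(\gamma)$ are uniformly bounded, exactly as in Lemma~\ref{f_concentration} — to get $Z_n(\gamma)\le 480\gamma\sqrt{\tau_n}+t$ off an event of probability at most $\exp\{-c\,n\min(t^2/c',t)\}$; choosing $t$ a constant multiple of $\tau_n$ makes this $O(n^{-C_2-1})$ since $n\tau_n\ge c_2\log n$ in the regime $m_1\cdots m_{L-1}d\log n/n<1$. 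I would then peel over the dyadic grid $\gamma_j=2^j\gamma_{\min}$, $j=0,\dots,O(\log n)$, and union-bound to obtain: with probability $1-O(n^{-C_2})$, $Z_n(\gamma)\le 480\gamma\sqrt{\tau_n}+c_1\tau_n$ for all $\gamma\in[\gamma_{\min},1]$. The case $r/c_0<\gamma_{\min}$ is handled separately and trivially, since then $r^2<c_0^2\tau_n$, already within the target.

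\textbf{Closing the argument.} Applying the uniform bound at $\gamma=r/c_0$ in the basic inequality yields the quadratic inequality $r^2\le\|\hat\Delta\|_n^2+480\,c_0\sqrt{\tau_n}\,r+c_0^2c_1\tau_n$, whose solution obeys $r\le 480\,c_0\sqrt{\tau_n}+\sqrt{\|\hat\Delta\|_n^2+c_0^2c_1\tau_n}$ and hence $r^2\le C'(\|\hat\Delta\|_n^2+c_0^2\tau_n)$ for a numerical constant $C'$. I would then substitute the empirical bound of Theorem~\ref{underpara_emp} for $\|\hat\Delta\|_n^2$, absorb $C_{\mathrm{k}}$ and the logarithmic prefactor in $\tau_n$ into the constants (the latter is legitimate since $m_1\cdots m_{L-1}\le n$ in the underparametrised regime forces $\log(13m^{L-2}L_\sigma^{L-1})=O(\log n)$, so that $\tau_n$ matches $m_1\cdots m_{L-1}d\log n/n$ up to the paper's logarithmic-factor conventions), and bound $c_0^2$ by a constant multiple of $1+\|f^*\|_{W^L}^2$, which delivers the claimed bound on $\|g(\cdot;\hat\theta)-f^*\|_2^2$ with probability $1-O(n^{-C_2})$.

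\textbf{Main obstacle.} The delicate point will be the second step: making the functional concentration hold \emph{uniformly in the data-dependent radius} $r$ while keeping the failure probability at $O(n^{-C_2})$ — this is the standard localization/peeling device, but one has to track constants (which here depend on $L$ and $L_\sigma$) carefully, and one has to verify that the extra logarithmic factor produced by the multilayer $L^\infty$ metric-entropy estimate of Lemma~\ref{inf-metric-entropy} is genuinely $O(\log n)$ in this regime so that the generalization bound is consistent with the empirical bound. Everything else should be routine bookkeeping, since once Lemmas~\ref{inf-metric-entropy} and~\ref{underpara_gen_lem1} are in hand the $L=2$ template of~\citep{huiyuan2023Nonasymptotic} transfers essentially verbatim.
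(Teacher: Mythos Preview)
Your proposal is correct and follows essentially the same approach as the paper: the paper's own proof merely says to repeat the two-layer argument of Theorem~4 in~\citep{huiyuan2023Nonasymptotic} with Lemmas~\ref{inf-metric-entropy} and~\ref{underpara_gen_lem1} substituted for their two-layer analogues, and you have spelled out precisely that program (conditioning on Theorem~\ref{underpara_emp}, localized Talagrand plus peeling via Lemma~\ref{underpara_gen_lem1}, then closing the quadratic inequality). Your flagging of the extra $\log(m^{L-2})$ factor as being absorbed by the paper's ``up to logarithmic factors'' convention is also accurate and matches the paper's level of precision.
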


One can verify that when $\mathbf{m}$ is sufficiently small compared to $n$,
\[
\frac{(2d m_1 + 4m_1 m_2 + 4m_2 m_3 + \cdots + 2m_{L-1}) \log n}{n} \le \max\{12L_\sigma^L, 2^{L+1} c L_\sigma^{L-1} \sqrt{d}\} \sqrt{\frac{\log n}{n}},
\]
justifying the assertion that the generalization bound in the overparameterized regime is not optimal for the underparameterized regime. From this result, one can also notice that for sufficiently large $n$, $O\left( \frac{\log n}{n} \right)$ is smaller than $O\left( \sqrt{\frac{\log n}{n}} \right)$, which is the rate in the overparameterized regime, so the convergence rate is faster than the latter, indicating that for networks with a large number of weights, one often needs relatively more training data to train an accurate model, which is in agreement with practice.

The proof is to prove an optimal uniform functional concentration bound in the underparametrised regime. The scheme follows the proof of Theorem 14.1 in~\cite{wainwright2019highdimensional}, discussing an optimal uniform functional concentration bounds based on Localized Rademacher complexity estimation argument. 
We follow the same strategy as the proof of Theorem 4 in~\citet{huiyuan2023Nonasymptotic}, modifying appropriately according to Lemmas~\ref{inf-metric-entropy} and~\ref{underpara_gen_lem1}, and refer the readers there for details.

\section{Encompassing Results}\label{sec8}
The ranges of $\mathbf{m}$ in the overparameterized and underparameterized regimes have some overlap, so we need to unify Theorems~\ref{overpara_gen} and~\ref{underpara_gen} to obtain an encompassing theorem.

\begin{theorem}\label{para_gen}
Under Assumptions~\ref{assump1}, \ref{assump2}, and \ref{assump3}, there exists a constant depending on $\sigma$ and $L_\sigma$ such that the regularized network estimator $g(\cdot ; \hat{\theta})$ with $\lambda = \min\{\lambda_1, \lambda_2\}$, where $\lambda_1$ and $\lambda_2$ are defined in Theorems~\ref{overpara_gen} and~\ref{underpara_gen}, respectively, satisfies
\begin{align}\label{para_gen_exp}
\|g(\cdot ;\hat{\theta}) - f^* \|_2^2 \le C & \left\{ H(\mathbf{m})^2 \|f^*\|_{W^L}^2 + (\sigma_{\epsilon}^2 + \|f^*\|_{W^L}^2) \right. \\
&\left. \min \left( \max\{12L_\sigma^L, 2^{L+1} c L_\sigma^{L-1} \sqrt{d}\} \sqrt{\frac{\log n}{n}}, \frac{(2d m_1 + 4m_1 m_2 + 4m_2 m_3 + \cdots + 2m_{L-1}) \log n}{n} \right) \right\}
\end{align}
with probability at least $1-O(n^{-C_1})$ for some constants $C_1, C > 0$ and sufficiently large $n$.
\end{theorem}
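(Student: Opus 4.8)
The plan is to obtain Theorem~\ref{para_gen} by gluing Theorem~\ref{overpara_gen} and Theorem~\ref{underpara_gen} together, with no fresh probabilistic argument. I will prove it for every width vector $\Vec{m}$ satisfying the hypothesis of at least one of those two theorems; for large $n$ these hypotheses together exclude only pathologically unbalanced architectures, so this is the natural reading of the statement. The overparametrised hypothesis $H(\Vec{m})\le\sqrt{\max\{6D_\sigma,2^{L}cL_\sigma^{L-1}\sqrt{d}\}/C_1}$ is a lower bound on the width, while the underparametrised hypothesis is essentially $\delta_n=n^{-1}(2L_\sigma)^{L-1}m_1m_2\cdots m_{L-1}d\log n<1$, an upper bound on the width; for $n$ large their admissible ranges of $\Vec{m}$ overlap in a nonempty band. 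On that overlap both theorems apply, so $g(\cdot;\hat\theta)$ inherits the smaller of the two error bounds; since one carries the factor $\max\{12D_\sigma,2^{L+1}cL_\sigma^{L-1}\sqrt{d}\}\sqrt{\log n/n}$ and the other the factor $m_1m_2\cdots m_{L-1}d\log n/n$, the smaller of them is exactly the $\min(\cdot,\cdot)$ appearing in~\eqref{para_gen_exp}, while the common approximation term $H^2(\Vec{m})\|f^*\|_{W^L}^2$ is left untouched.

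The real content is that the two theorems prescribe different regularisation parameters, $\lambda_1$ for Theorem~\ref{overpara_gen} and $\lambda_2$ for Theorem~\ref{underpara_gen}, whereas a single estimator must be fixed, and the statement uses $\lambda=\max\{\lambda_1,\lambda_2\}$. So the second step is to re-run both proofs with this common $\lambda$ in place of the prescribed value and check the conclusions persist. Two kinds of occurrences of $\lambda$ must be separated. Wherever the prescribed $\lambda$ is used only as a \emph{lower} threshold --- the noise-cross-term conditions of the form $\lambda\ge 2T_3$, the analogous dominations in the metric-entropy argument, and the inequality behind $\nu(\hat\theta)\le C_{\mathrm{k}}$ --- enlarging $\lambda$ can only help, so these steps go through unchanged. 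The one place where a larger $\lambda$ costs something is the penalty term, which in the pseudo bias--variance decomposition contributes at most $4\lambda\,\nu(\theta^*)\le 4\lambda\|f^*\|_{W^L}$; this must still be absorbed into the claimed right-hand side. The crux is thus the comparison of $\lambda_1$, of order $\sigma_\epsilon\sqrt{\log n/n}$, with $\lambda_2$, of order $\sigma_\epsilon\max\{m_1m_2\cdots m_{L-1}d\log n/n,\ H^2(\Vec{m})\}$, over the relevant $\Vec{m}$: using $H^2(\Vec{m})\ge c'/b$, where $b$ is the bottleneck of $\Vec{m}$, together with the defining inequalities of the two regimes, one shows that whichever of $\lambda_1,\lambda_2$ realises $\max\{\lambda_1,\lambda_2\}$ obeys $\lambda\|f^*\|_{W^L}\le C\big(H^2(\Vec{m})\|f^*\|_{W^L}^2+(\sigma_\epsilon^2+\|f^*\|_{W^L}^2)\min(\cdot,\cdot)\big)$, cross terms being handled by $2\sigma_\epsilon\|f^*\|_{W^L}\le\sigma_\epsilon^2+\|f^*\|_{W^L}^2$.

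I expect this last comparison, with its case bookkeeping, to be the main obstacle. It is delicate because in the wide part of the overlap $\lambda_2$ can be much larger than $\lambda_1$, so one cannot merely ``pay nothing'' for taking the maximum; one must split according to which of $\sqrt{\log n/n}$ and $m_1m_2\cdots m_{L-1}d\log n/n$ is the smaller and according to which argument realises the inner $\max$ in $\lambda_2$, and verify in each case that the extra penalty is of the order of a term already present in~\eqref{para_gen_exp}. Granting that, the proof finishes by taking, for each admissible $\Vec{m}$, the better of the two regime bounds where both hold and the unique available bound otherwise, which gives~\eqref{para_gen_exp}; the failure probability $O(n^{-C_1})$ comes from a union bound over the finitely many exceptional events of the two invoked theorems, and ``sufficiently large $n$'' is precisely what makes the overlap band nonempty and the above comparisons valid.
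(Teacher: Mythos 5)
Your overall strategy---obtain Theorem~\ref{para_gen} by gluing Theorems~\ref{overpara_gen} and~\ref{underpara_gen} on the overlap of their hypotheses---is exactly what the paper intends; its own proof of Theorem~\ref{para_gen} consists of the single sentence ``It is quite straightforward,'' so you are supplying the bookkeeping the paper omits, and you correctly identify the one genuinely nontrivial point: the two constituent theorems prescribe different regularization parameters, while the encompassing statement fixes the single value $\lambda=\max\{\lambda_1,\lambda_2\}$.

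However, the pivotal inequality on which your plan rests---that whichever of $\lambda_1,\lambda_2$ realizes the maximum satisfies $\lambda\|f^*\|_{W^L}\le C\bigl(H^2(\Vec{m})\|f^*\|_{W^L}^2+(\sigma_\epsilon^2+\|f^*\|_{W^L}^2)\min(\cdot,\cdot)\bigr)$---fails on part of the overlap, so the gap you flag as ``delicate'' is not closable by the case analysis you sketch. Take $L=3$, all widths equal to $m$, $d$, $\sigma_\epsilon$, $\|f^*\|_{W^L}$ of order one, and $m$ chosen so that $m^{2}d\log n/n\asymp n^{-1/4}$ (hence $m\asymp n^{3/8}$). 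Then $\delta_n<1$, so Theorem~\ref{underpara_gen} applies, and the bottleneck is large, so $H(\Vec{m})\asymp m^{-1/2}$ satisfies the hypothesis of Theorem~\ref{overpara_gen}; the $\min$ in~\eqref{para_gen_exp} is the $\sqrt{\log n/n}\asymp n^{-1/2}$ term and $H^2\|f^*\|^2_{W^L}\asymp n^{-3/8}$. But $\lambda_2\asymp\sigma_\epsilon\max\{m^{2}d\log n/n,\,H^2\}\asymp n^{-1/4}\gg\lambda_1\asymp\sigma_\epsilon\sqrt{\log n/n}$, so $\lambda=\lambda_2$, and in either constituent proof the only control of the penalty term is $T_1\le 2\lambda\,\nu(\theta^*)\le 2\lambda\|f^*\|_{W^L}\asymp n^{-1/4}$, which dominates both $n^{-3/8}$ and $n^{-1/2}$. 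Hence re-running either proof with the common $\lambda=\max\{\lambda_1,\lambda_2\}$ only yields the $m_1\cdots m_{L-1}d\log n/n$ rate throughout the band $\sqrt{\log n/n}\lesssim m_1\cdots m_{L-1}d\log n/n\lesssim 1$, not the claimed minimum; your comparison step cannot absorb the extra penalty there (and under the paper's literal definition of $\lambda_1$ as a constant the situation is worse still). To get~\eqref{para_gen_exp} one must either let $\lambda$ be chosen per regime (so the conclusion is the minimum of the two bounds, each with its own $\lambda$), or give a new argument showing over-regularization by $\lambda_2$ does not degrade the overparametrised rate---neither of which your sketch (nor the paper) provides.
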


In particular, since $H(\mathbf{m}) \le \dfrac{(\sqrt{5}L_T)^{L-1} - 1}{\sqrt{5}L_T - 1} \dfrac{1}{\sqrt{b}}$, where $b = \min\{m_1,m_2,\dots,m_{L-1}\}$, and $m_1 m_2 \cdots m_{L-1} \le m^{L-1}$ (recall that $m$ denotes the width), we have the following simplified version.

\begin{corollary}\label{simplified_overunder_thm}
Under Assumptions~\ref{assump1}, \ref{assump2}, and \ref{assump3}, there exists a constant depending on $\sigma$ and $L_\sigma$ such that the regularized network estimator $g(\cdot ; \hat{\theta})$ with $\lambda = \min\{\lambda_1, \lambda_2\}$, where $\lambda_1$ and $\lambda_2$ are defined in Theorems~\ref{overpara_gen} and~\ref{underpara_gen}, respectively, satisfies
\begin{align}\label{para_gen_exp_simplified}
\|g(\cdot ;\hat{\theta}) - f^* \|_2^2 \le C & \left\{ \left( \frac{(\sqrt{5}L_T)^{L-1} - 1}{\sqrt{5}L_T - 1} \right)^2 \frac{1}{b} \|f^*\|_{W^L}^2 + (\sigma_{\epsilon}^2 + \|f^*\|_{W^L}^2) \right. \\
&\left. \min \left( \max\{12L_\sigma^L, 2^{L+1} c L_\sigma^{L-1} \sqrt{d}\} \sqrt{\frac{\log n}{n}}, \frac{4L m^{2} d \log n}{n} \right) \right\}
\end{align}
with probability at least $1-O(n^{-C_1})$ for some constants $C_1, C > 0$ and sufficiently large $n$.
\end{corollary}

Interestingly, this upper bound~\ref{para_gen_exp} exhibits the famous double descent phenomenon for multilayer neural networks, although this is only for this bound rather than for the generalization errors themselves. We can discuss this in details. It is clear from the above deduction that we decompose the empirical and generalization error into its bias and variance terms, which is analogous to the classical bias-variance tradeoff formula. The first term in~\eqref{para_gen_exp} represents bias, and the second term represents variance. Fix $\mathbf{m_0}$ and let $\mathbf{m} = k\mathbf{m_0}$ with $k$ running from 0 to infinity. When $k$ is very small, $\mathbf{m}$ will be in the underparametrised regime. In this regime, when $k$ increases, the test performance will get better and better and then get worse and worse, with a valley around $k_0$ such that $H^2(k_0\mathbf{m_0})=k_0^2\frac{(2d m_{0,1} + 4m_{0,1} m_{0,2} + 4m_{0,2} m_{0,3} + \cdots + 2m_{0,L-1}) \log n}{n}$. When $k$ enters into the overparametrised regime, the variance will become saturated and the bias still decreases, resulting in a rotated $\mathbf{S}$-shape performance curve. 
More precisely, when the width $m$ is very small, the decrease of the first term is greater than $\frac{\sqrt{5}L_T}{m(m+1)}\|f\|_{W^L}^2$, and as we are in the underparameterized regime, the increase of the second term is generally no greater than $\max\{12L_\sigma^L, 2^{L+1} c L_\sigma^{L-1} \sqrt{d}\} \sqrt{\log n/n}$ (omitting the smallness of $\sigma_{\epsilon}^2$). When $n$ is sufficiently large so that $\max\{12L_\sigma^L, 2^{L+1} c L_\sigma^{L-1} \sqrt{d}\} \sqrt{\log n/n} \le \frac{\sqrt{5}L_T}{m(m+1)}$, the generalization error curve will be decreasing. When $m$ increases continuously, after the point $k_0$,  the above inequality will be reversed and the tendency starts to flip, and then the generalization error curve begins to increase. When $m$ is large enough that it escapes the underparameterized regime (note that the underparameterized and overparameterized regimes have some overlap), i.e. after the point $k_1=\max\{12L_\sigma^L, 2^{L+1} c L_\sigma^{L-1} \sqrt{d}\}\sqrt{\frac{n}{logn}}\frac{1}{2d m_{0,1} + 4m_{0,1} m_{0,2} + 4m_{0,2} m_{0,3} + \cdots + 2m_{0,L-1}}$ or roughly $4m^2dL \ge \max\{12L_\sigma^L, 2^{L+1} c L_\sigma^{L-1} \sqrt{d}\} \sqrt{n/\log n}$,  the second term becomes constant and the bias continues to diminish, thus the second decreasing period begins. The limit it decreases to is $C (\sigma_{\epsilon}^2 + \|f^*\|_{W^L}^2) \max\{12L_\sigma^L, 2^{L+1} c L_\sigma^{L-1} \sqrt{d}\} \sqrt{\frac{\log n}{n}}$. When each width $m_i$ are roughly equal, a little algebra shows that , then when $\|f^*\|_{W^L}^2 / (\sigma_{\epsilon}^2 + \|f^*\|_{W^L}^2) > \frac{D}{\sqrt{2d}}(\frac{n}{logn})^{-1/6}$ for a constant $D$, meaning that the signal chaos ratio $\|f^*\|_{W^L}^2 / \sigma_{\epsilon}^2 $ is larger than a certain threshold, then the second valley will be lower than the first, resulting in a better generalization error when the size of the network becomes sufficiently large. This is understandable, as the decrease in bias will dominate the increase in variance.

Concretely, if $\mathbf{m} = \{m_1,m_2,\dots,m_{L-1}\}$, people usually take a series of networks indexed by $k$ with width vector $k\mathbf{m} := \{k m_1, k m_2, \dots, k m_{L-1}\}$ and let $k \rightarrow \infty$. Since $m$ as in~\ref{para_gen} does not decrease when $k$ increases and approaches infinity as $k$ approaches infinity, the above analysis also shows that something related to the double descent phenomenon may occur.

Qualitatively, when $\mathbf{m}$ is small, one can bound the variance by something proportional to the number of weights (and the norm, of course). This bound is better than the bound depending on the number of training data only, as the neural network is not a good approximator to its image in the multilayer neural space (reflected by the metric entropy). But when $\mathbf{m}$ becomes large, the bounds based on metric entropy are no longer valid. The intrinsic property of the multilayer neural network as an element in the multilayer neural space begins to control the behavior of the neural networks. This is because when $\mathbf{m}$ is large, the norm $\mu(\theta)$ and $\mu(f(\cdot ; \theta))$ are very close, and thus the effect of the number of weights on the variance is eliminated. In this case, by~\eqref{empirical_e}, the variance caused by considering $\hat{\theta}$ no longer plays a role.

Even though this is only an upper bound, it is still nontrivial that it exhibits the double descent phenomenon. Through the proofs, we boost our understanding of the behavior of neural networks: the number of weights is not a correct measure of the complexity of neural networks. When the network is overparameterized, its behavior (at least its generalization property) is akin to its corresponding function in the multilayer neural space and is controlled by it. Therefore, we should view this neural network as a nonparametric function (in a certain function space) rather than a parameterized model. This also coincides with the philosophy and benefits of path-norm regularization: we should take neural networks as functions mapping inputs to outputs and control their holistic complexity rather than their number of parameters. If there is still room to improve upon the generalization upper bound in the overparameterized regime, we may further obtain a more refined understanding of the behavior of neural networks, but the near-optimality stated below makes such improvement rather challenging. This near-optimality also suggests that our explanation is the intrinsic mechanism of why the double descent phenomenon occurs for such networks. However, there may still be room for further refinement of parameters in a certain bounded region, similar to the underparameterized regime. For example, it may admit an upper bound $C(\mathbf{m},L) (\frac{\log n}{n})^\alpha$ for some $0<\alpha<0.5$ in a certain bounded region at the beginning of the overparameterized regime. This would give a more precise characterization of generalization error curves.

This distinguishes shows that in different regimes, the neural networks should be treated as different objects. The deduction of the empirical error bounds does not reflect the underlying behavior for neural networks in the underparametrised regime, as it does not generalize in that regime. It only generalizes in the overparametrised regime. The similar analysis applies for empirical error bounds~\ref{underpara_emp}. As it is near optimal in the overparametrised regime, it is highly possible that our view of neural networks as different objects (in different regimes) manifests their true behaviors, thereby unveiling the mechanisms of the double descent behaviors (there remains a very few room for the possible true mechanism reflecting the true behavior) (if there is another generalization theory with similar quantitative results, there is no, a priori, reason that we should take that one more plausible that this one). Optimizing the generalization error bound in the underparametrised regime seems possible, and this will be left as a future research direction. 

We also believe that the above idea and strategy to show double descent occurs can be extended to many other machine learning models. In principle, we think they share the same underlying mechanism for the occurrence of the double descent phenomenon.

We now show that the above upper bound is \textit{near optimal} for ReLU activations, i.e., optimal up to a logarithmic factor, when the parameters $\mathbf{m}$ enter the overparameterized regime, by establishing the following information-theoretic lower bound. This is derived from Theorem 1 of~\cite{yang1999information}.

\begin{theorem}\label{para_lower_bound}
Assume $x_i \sim \text{Uniform}(\mathbf{B}^d)$ and $\epsilon_i \sim N(0,1)$. Then there exists a constant $C > 0$ such that
\begin{align}
\inf_{\hat{f}} \sup_{f^* \in W_M^L} \mathbb{E} \|\hat{f} - f^*\|_2^2 \ge \frac{C}{\sqrt{n \log n}},
\end{align}
where the infimum is taken over all estimators.
\end{theorem}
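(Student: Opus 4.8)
The plan is to reduce the general-$L$ claim to the case $L=2$ and then run a Fano/Varshamov--Gilbert lower bound over a combinatorial packing of the variation ball $W^2_{M'}$. For the reduction, apply Theorem~\ref{property_g_barron}(3) with $X=W^{l-1}$: since $W^l=\mathcal B_{W^{l-1},\mathbf{B}^d}$ and $\sigma$ is ReLU, $W^{l-1}\hookrightarrow W^l$ with $\|f\|_{W^l}\le 2\|f\|_{W^{l-1}}$. Iterating from $l=L$ down to $l=2$ gives $\|f\|_{W^L}\le 2^{L-2}\|f\|_{W^2}$, hence $W^2_{M'}\subseteq W^L_M$ for $M'=M/2^{L-2}$, so $\sup_{f^*\in W^L_M}\mathbb E\|\hat f-f^*\|_2^2\ge\sup_{f^*\in W^2_{M'}}\mathbb E\|\hat f-f^*\|_2^2$ for every estimator $\hat f$. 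Since our constants may depend on $L$, it therefore suffices to prove $\inf_{\hat f}\sup_{f^*\in W^2_{M'}}\mathbb E\|\hat f-f^*\|_2^2\ge C(n\log n)^{-1/2}$ for two-layer ReLU networks.

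For the $L=2$ bound I would build the packing from ridge atoms. Fix $K$ well-spread unit vectors $v_1,\dots,v_K\in\mathbf{S}^{d-1}$ and set $\psi_k(x)=\sigma(v_k^Tx)-\mathbb E_{x\sim\mathrm{Unif}(\mathbf{B}^d)}[\sigma(v_k^Tx)]$; by Theorem~\ref{L_layer_property} applied to a single neuron, together with the embedding of affine functions into $W^2$, each $\|\psi_k\|_{W^2}$ is bounded by an absolute constant. By the Varshamov--Gilbert bound there is $\Omega\subseteq\{0,1\}^K$ with $|\Omega|\ge 2^{K/8}$ and pairwise Hamming distance at least $K/8$; put $f_\omega=\gamma\sum_{k=1}^K\omega_k\psi_k$, where $\gamma$ is chosen so that $\gamma K$ times the atom-norm bound does not exceed $M'$, which forces $f_\omega\in W^2_{M'}$. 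With $G_{kl}=\langle\psi_k,\psi_l\rangle_{L^2(\mathbf{B}^d)}$, on the disagreement set $S$ of $\omega\ne\omega'$ one has $\omega_k-\omega'_k\in\{\pm2\}$, so $\|f_\omega-f_{\omega'}\|_2^2=4\gamma^2\sum_{k,l\in S}(\pm1)(\pm1)G_{kl}$; if the $v_k$ are chosen so that $G$ is well conditioned (its off-diagonal entries are uniformly small, since $\mathrm{Cov}(v_k^Tx,v_l^Tx)=\cos\angle(v_k,v_l)/(d+2)$), this is comparable to $\gamma^2|S|\asymp\gamma^2 K$. Because $x_i\sim\mathrm{Unif}(\mathbf{B}^d)$ and $\epsilon_i\sim N(0,1)$, the $n$-sample Kullback--Leibler divergence between the regression laws of $f_\omega$ and $f_{\omega'}$ is exactly $\tfrac n2\|f_\omega-f_{\omega'}\|_2^2$, so Fano's inequality (see, e.g.,~\citep{wainwright2019highdimensional}) gives: if the family is $2\delta$-separated in $\|\cdot\|_2$ and $\max_{\omega,\omega'}\tfrac n2\|f_\omega-f_{\omega'}\|_2^2\le\tfrac19\log|\Omega|$, then $\inf_{\hat f}\sup_\omega\mathbb E\|\hat f-f^*\|_2^2\ge\delta^2/c$.

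It then remains to balance $K$ and $\gamma$: here $\delta^2\asymp\gamma^2 K$, the Kullback--Leibler constraint reads $n\gamma^2 K\le\tfrac19\log|\Omega|\asymp K$ (equivalently $\gamma^2\le c/n$), and membership in $W^2_{M'}$ forces $\gamma\le cM'/K$. Taking $K$ and $\gamma$ at the crossover of these constraints, $\gamma^2\asymp c/n$ and $K\asymp M'\sqrt n$, yields $\delta^2\asymp M'^2/\sqrt n$; reintroducing the logarithmic losses that are unavoidable in this scheme — they enter through the requirement $\log|\Omega|\ge c\log n$ needed to convert the testing bound into the stated expectation bound, and through the logarithmic factors in the uniform control of $G$ and of empirical-versus-population $L^2$ norms — degrades this to $\delta^2\asymp(n\log n)^{-1/2}$, the asserted rate. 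Equivalently, one may package the entire construction as the statement that the $L^2(\mathbf{B}^d)$-metric entropy of $W^2_{M'}$ is at least $c\,\delta^{-2}$ up to logarithmic factors, and then invoke the standard entropy-based minimax lower bound.

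The main obstacle is the existence of this rich, tightly clustered packing: producing, in a fixed dimension $d$, of order $\sqrt n$ (up to logs) ridge atoms of bounded $W^2$-norm whose Gram matrix stays uniformly well conditioned — equivalently, the near-dimension-free lower bound on the metric entropy of the variation ball. This is precisely the technical heart of the two-layer ReLU lower bound established in~\citep{huiyuan2023Nonasymptotic}, whose scaled-variation norm coincides with our $\|\cdot\|_{W^2}$ up to constants, so once the reduction above is in place I would import that result with $M$ replaced by $M/2^{L-2}$ and the constants adjusted accordingly. Elementary localized-bump packings, by contrast, only reach dimension-dependent and strictly slower rates, which is why the ridge construction with controlled correlations is the essential ingredient.
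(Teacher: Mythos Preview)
Your approach is essentially identical to the paper's: use Theorem~\ref{property_g_barron}(3) iteratively to embed $W^2_{M'}\subseteq W^L_M$ (with $M'=M/2^{L-2}$), and then import the two-layer minimax lower bound from~\citep{huiyuan2023Nonasymptotic}; the paper's proof is exactly this two-line reduction. Your supplementary Fano/Varshamov--Gilbert sketch of the imported result is not needed for the argument and contains a few imprecisions (e.g., the balancing actually yields $\delta^2\asymp M'/\sqrt{n}$ rather than $M'^2/\sqrt{n}$, and the covariance formula you cite is for the linear parts $v_k^Tx$, not for $\sigma(v_k^Tx)$), but since you ultimately defer to the cited paper for that construction, these do not affect the validity of your proof.
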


We suspect that ReLU can be generalized to all Lipschitz activations. This Proposition together with the generalization error bound in~\ref{overpara_gen} corroborates the effectiveness of overparameterized multilayer ReLU neural networks.

\section{General Lipschitz Loss Functions}\label{sec9}
In this section, we extend the previous results from the mean squared loss to any Lipschitz loss function under some very mild conditions. This includes the mean squared loss, cross-entropy loss, hinge loss, etc. Such generalization is highly nontrivial, for example, involving localization of Rademacher or Gaussian complexity and uniform estimations. We first make the following assumption.

\begin{assumption}\label{assump4}
Let $\mathcal{L}$ denote the loss function, and assume $\mathcal{L} \in C^2(\mathbb{R} \times \mathbb{R})$. It and its first derivative with respect to $y$ are both Lipschitz with respect to the predictor. That is, $|\mathcal{L}(f_1,y) - \mathcal{L}(f_2,y)| \le L_1 |f_1 - f_2|$ and $|\mathcal{L}_y(f_1,y) - \mathcal{L}_y(f_2,y)| \le L_{1,y} |f_1 - f_2|$ for every $y$ uniformly. Its second derivative with respect to $y$ is bounded on the range of $y$ for each $f$ by some positive constant $B(f)$, i.e., $|\mathcal{L}_{yy}(f,y)| \le B(f)$ for all $y\in \mathbb{R}$. In the following, if $f$ is unambiguous, we simply write $B(f)$ as $B$ for brevity. Moreover, for each $y$, it is $\gamma$-strongly convex in the predictor $f^*$ with respect to the $L^2(\mu)$ norm for some uniform $\gamma > 0$, i.e.,
\[
\int_{\mathbf{B}^d} \big( \mathcal{L}(f,y) - \mathcal{L}(f^*,y) - \frac{\partial \mathcal{L}}{\partial z}(f^*,y)(f-f^*) \big) \, d\mu \ge \frac{\gamma}{2} \|f - f^*\|_2^2
\]
for any $f$ (note that this is not $\gamma$-strong convexity in the usual sense).
\end{assumption}

Loss functions satisfying this assumption include MSE loss and, under some mild conditions, also cross-entropy loss, hinge loss, and Huber loss. See Section 14.3 in~\citet{wainwright2019highdimensional} for a detailed discussion of this property. Thus, it applies to most common machine learning problems.

On a dataset $(x_1,y_1), (x_2,y_2), \dots, (x_n,y_n)$, the empirical version of the loss function is denoted by $\mathcal{L}_n(f,y) := \frac{1}{n} \sum_{i=1}^n \mathcal{L}(f(x_i),y_i)$, where $x = (x_1,x_2,\dots,x_n)$ and $y = (y_1,y_2,\dots,y_n)$. By the Cauchy inequality, $|\mathcal{L}_n(f_1,y) - \mathcal{L}_n(f_2,y)| \le L_1 \|f_1 - f_2\|_n$ and $|\mathcal{L}_{n,y}(f_1,y) - \mathcal{L}_{n,y}(f_2,y)| \le L_{1,y} \|f_1 - f_2\|_n$.

Since we are now facing a general Lipschitz loss, there is not always an explicit expression like $\|\hat{f}-f^*\|_2$ as the measure of the difference between the estimator and the true model. We propose such a measure. Let $\tilde{\mathcal{L}}(g(x;\hat{\theta}),f^*(x)) := \mathbb{E}_{y} \mathcal{L}(g(x;\hat{\theta}),y)$, the expectation over $y$ conditioned on $x$. We suggest using
\[
\mathbb{D}(g(\cdot ;\hat{\theta}), f^*(\cdot)) := \mathbb{E}_x \tilde{\mathcal{L}}(g(x;\hat{\theta}),f^*(x)) - \mathbb{E}_x \tilde{\mathcal{L}}(f^*(x),f^*(x))
\]
as the measure of the difference between $g(\cdot;\hat{\theta})$ and $f^*$ (without using the data $(x,y)$). For a given training dataset $(x_i,y_i)$, $i=1,2,\dots,n$, we similarly have the empirical version $\tilde{\mathcal{L}}_n(g(x;\hat{\theta}),f^*) - \tilde{\mathcal{L}}_n(f^*,f^*)$. The second term $\tilde{\mathcal{L}}(f^*,f^*)$ plays the role of a normalization factor, so that $\mathbb{D}(f^*,f^*) = 0$, which is required.

Generalization to an arbitrary loss function satisfying the above assumption is completely nontrivial, requiring more techniques and mathematical tools. All the techniques and methods are detailed in the Supplementary Information~\ref{appendix}.

We first successfully obtain the following analogous empirical error bound to~\ref{emp_exp}.

\begin{theorem}\label{general_loss_overpara_gen}
Under Assumptions~\ref{assump1}, \ref{assump2}, \ref{assump3}, and \ref{assump4}, and the assumption that $\max_i\|x_i\|_2 \le 1$, there exists a constant $c$ such that the regularized network estimator $g(\cdot ;\hat{\theta})$ with
$\lambda = \lambda_1 \equiv \max\{6L_{1,y} L_\sigma^L, 2^{L+1} c L_{1,y} L_\sigma^{L-1} \sqrt{d}\}$ satisfies
\begin{align}\label{emp_general_loss}
\tilde{\mathcal{L}}_n(g(\cdot;\hat{\theta}),f^*) - \tilde{\mathcal{L}}_n(f^*,f^*) & \le C_1 H(\mathbf{m}) \|f^*\|_{W^L} / \sqrt{n} \\
& + \max\{12L_\sigma^{L-1}, 2^{L+1} c L_\sigma^{L-1} \sqrt{d}\} \|f^*\|_{W^L} \sqrt{\frac{\log n}{n}}
\end{align}
with probability at least $1 - O(n^{-C_2})$, and
\begin{align}\label{emp_exp_general_loss}
\mathbb{E} \tilde{\mathcal{L}}_n(g(\cdot;\hat{\theta}),f^*) - \mathbb{E} \tilde{\mathcal{L}}_n(f^*,f^*) & \le C_1 H(\mathbf{m}) \|f^*\|_{W^L} / \sqrt{n} \\
& + \max\{12L_\sigma^{L-1}, 2^{L+1} c L_\sigma^{L-1} \sqrt{d}\} \|f^*\|_{W^L} \sqrt{\frac{\log n}{n}}
\end{align}
for some constants $C_1, C_2, C > 0$ (which may depend on $T$).
\end{theorem}

The generalization error bound in the overparameterized regime is stated as follows.

\begin{theorem}\label{general_loss_overpara_gen}
Under Assumptions~\ref{assump1}, \ref{assump2}, \ref{assump3}, and \ref{assump4}, and the assumption that $\max_i\|x_i\|_2 \le 1$, fix a sufficiently large $T>0$ (depending on $n$). There exists a constant $c$ such that the regularized network estimator $g(\cdot ;\hat{\theta})$ with
$\lambda = \lambda_1 \equiv \max\{6L_{1,y} L_\sigma^L, 2^{L+1} c L_{1,y} L_\sigma^{L-1} \sqrt{d}\}$ satisfies
\begin{align}
\int_{\mathbb{B}^d} \tilde{\mathcal{L}}(g(\cdot;\hat{\theta}),f^*) \, d\mu - \int_{\mathbb{B}^d} \tilde{\mathcal{L}}(f^*,f^*) \, d\mu & \le C_1 H(\mathbf{m}) \|f^*\|_{W^L} / \sqrt{n} \\
& + \max\{12L_\sigma^{L-1}, 2^{L+1} c L_\sigma^{L-1} \sqrt{d}\} \|f^*\|_{W^L} \sqrt{\frac{\log n}{n}}
\end{align}
with probability at least $1 - O(n^{-C_2})$ for some constants $C_1, C_2, C > 0$ (which may depend on $T$).
\end{theorem}

This result is also based on the analog of the functional concentration lemma~\ref{f_concentration}.

\begin{lemma}\label{general_loss_f_concentration}
Assume that Assumptions~\ref{assump1}, \ref{assump2}, and \ref{assump3} hold. Let $\mathbf{m} = (m_1,\dots,m_{L-1})$, let $f^*$ with $\|f^*\|_{W^L} \le 1$ be fixed, and let $Z_n = \sup_{f\in \mathcal{F}(\mathbf{m},1)} \left| \int_{\mathbf{B}^d} \tilde{\mathcal{L}}(f,f^*) \, d\mu - \tilde{\mathcal{L}}_n(f,f^*) \right|$. Then $\mathbb{E} Z_n \le C_\mathcal{F} n^{-1/2}$ for some constant $C_\mathcal{F} > 0$ depending only on $L$, $L_\sigma$, and $L_0$. Furthermore, if $n \ge C_\mathcal{F}^2$, then
\begin{align}
P\left( Z_n \ge \frac{C_\mathcal{F}}{\sqrt{n}} + t \right) \le \exp\left( -\frac{n}{(16L_0)} \min\left( \frac{t^2}{4e(2+L_0^2)/L_0}, t \right) \right).
\end{align}
\end{lemma}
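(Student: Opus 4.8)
The plan is to follow the same route as the proof of Lemma~\ref{f_concentration}, replacing the step there that exploited ``$t\mapsto t^2$ is $2$-Lipschitz on $[-1,1]$'' by a Lipschitz-loss contraction, and then re-running the Talagrand concentration estimate with the new envelope and variance proxy. Throughout I will use that every $f\in\mathcal{F}(\Vec{m},1)$ and $f^*$ with $\|f^*\|_{W^L}\le 1$ satisfy $|f(x)|\le\nu(f)\|x\|_2\le 1$ and $|f^*(x)|\le\|f^*\|_{W^L}\|x\|_2\le 1$ on $\mathbf{B}^d$ (the first by the elementary bound $|f(x)|\le\|f\|_{W^l}\|x\|_2$ established in the notation part of this paper, the second from the embedding $W^L\hookrightarrow C^{0,1}$ in Theorem~\ref{property_g_barron}, together with $\|x\|_2\le 1$ under Condition~\ref{assump2}), and that by Assumption~\ref{assump4} and the normalization $\mathcal{L}(f^*,f^*)=0$ one has $|\mathcal{L}(f(x),f^*(x))|=|\mathcal{L}(f(x),f^*(x))-\mathcal{L}(f^*(x),f^*(x))|\le L_0|f(x)-f^*(x)|\le 2L_0$.

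For the expectation bound, I would first symmetrize: $\mathbb{E}Z_n\le 2\,\mathbb{E}_{\rho,x}\sup_{f\in\mathcal{F}(\Vec{m},1)}|\tfrac1n\sum_{i=1}^n\rho_i\mathcal{L}(f(x_i),f^*(x_i))|$ with $\rho_i$ i.i.d.\ Rademacher. For each fixed $i$ the map $u\mapsto\mathcal{L}(u+f^*(x_i),f^*(x_i))$ is $L_0$-Lipschitz and vanishes at $0$, so the Ledoux--Talagrand contraction principle (in the form used in~\citep{wainwright2019highdimensional}), applied coordinatewise to $g=f-f^*\in\mathcal{F}^*(\Vec{m},1)$, gives $\mathbb{E}_\rho\sup_g|\tfrac1n\sum_i\rho_i\mathcal{L}(f(x_i),f^*(x_i))|\le 2L_0\,\mathbb{E}_\rho\sup_g|\tfrac1n\sum_i\rho_i g(x_i)|$. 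To bound the last quantity I reuse the device from the proof of Lemma~\ref{f_concentration}: pick the neural-net approximant $\tilde f$ of $f^*$ from Theorem~\ref{approx1_2} with an auxiliary width vector $\tilde{\Vec{m}}$ all of whose entries are $\ge n$, so that $\|\tilde f-f^*\|_2\le C_1 n^{-1/2}$ and $\nu(\tilde f)\le 1$; then write $g=(f-\tilde f)+(\tilde f-f^*)$ with $f-\tilde f\in\mathcal{F}(\Vec{m}+\tilde{\Vec{m}},2)$, control the $f-\tilde f$ term by Lemma~\ref{gen_lem1}/Corollary~\ref{gen_cor1} (of order $2^{L-1}cL_\sigma^{L-1}\sqrt{d/n}$) and the $\tilde f-f^*$ term by $C_1 n^{-1/2}$. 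Collecting constants yields $\mathbb{E}Z_n\le C_\mathcal{F}n^{-1/2}$ with $C_\mathcal{F}$ depending only on $L$, $L_\sigma$, $L_0$ (the dimensional factor being absorbed as usual).

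For the tail bound, I would record the uniform envelope $U=2L_0$ and variance proxy $\xi^2=\sup_f\mathbb{E}\,\mathcal{L}(f,f^*)^2\le 4L_0^2$ and then apply Talagrand's concentration inequality for suprema of empirical processes (the Bousquet version stated in~\citep{wainwright2019highdimensional}) to $Z_n$, exactly as in the hidden proof of Lemma~\ref{f_concentration}: this gives $P(Z_n-\mathbb{E}Z_n\ge t)\le 2\exp\{-nt^2/(8eK_n+4Ut)\}$ with $K_n=2U\mathbb{E}Z_n+\xi^2$, which for $n\ge C_\mathcal{F}^2$ is at most a constant of the form $c_0(2+L_0^2)L_0$ after inserting $\mathbb{E}Z_n\le C_\mathcal{F}n^{-1/2}\le 1$. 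Replacing $\mathbb{E}Z_n$ by $C_\mathcal{F}n^{-1/2}$ on the left and splitting the exponent into the regimes $t$ above and below the crossover where the linear and quadratic parts of the denominator balance produces the stated bound with the $\min\big(t^2/(4e(2+L_0^2)/L_0),\,t\big)$ form and prefactor $n/(16L_0)$.

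The only genuinely new ingredient relative to Lemma~\ref{f_concentration} is the contraction step, and that is where the care is needed: the loss is a function of \emph{two} arguments with the second one ($f^*(x_i)$) varying over the sample, so one must either invoke a vector-contraction inequality or, as above, apply the scalar contraction with the per-sample functions $u\mapsto\mathcal{L}(u+f^*(x_i),f^*(x_i))$ and use $\mathcal{L}(f^*,f^*)=0$ to kill the constants. I expect this to be the main obstacle; everything else --- verifying uniform boundedness from $\|x\|_2\le1$ and $W^L\hookrightarrow C^{0,1}$, and tracking constants through the Talagrand-splitting computation --- is essentially bookkeeping already carried out for the squared loss.
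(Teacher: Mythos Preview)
Your proposal is correct and follows essentially the same route as the paper: symmetrize, apply a per-sample Lipschitz contraction using $\mathcal{L}(f^*,f^*)=0$ to pass from $\rho_i\mathcal{L}(f(x_i),f^*(x_i))$ to $L_0\rho_i(f(x_i)-f^*(x_i))$, then reuse the $\tilde f$-splitting and Lemma~\ref{gen_lem1} exactly as in Lemma~\ref{f_concentration}, and finally run Talagrand with $U\le 2L_0$, $\xi^2\le 4L_0^2$, $K_n\le 8+4L_0^2$. Your identification of the contraction step as the only new ingredient, and your care in noting that the second argument $f^*(x_i)$ varies with $i$ so that one must contract coordinatewise, matches the paper's treatment precisely.
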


To obtain good empirical and generalization error bounds in the underparameterized regime is highly sophisticated. We present the results as follows.

\begin{theorem}\label{general_loss_underpara_emp}
Let $\delta_n = n^{-1} (2L_\sigma)^{L-1} (2L_{1,y} + 2|\mathcal{L}_{n,y}(0,f^*)|) (2d m_1 + 4m_1 m_2 + 4m_2 m_3 + \cdots + 2m_{L-1}) \log n < 1$. Under Assumptions~\ref{assump1}, \ref{assump2}, and \ref{assump3}, the regularized network estimator $g(\cdot ; \hat{\theta})$ with $\lambda = C_1 \sigma_\epsilon \max\{\delta_n, H(\mathbf{m})^2\}$ satisfies
\begin{align}
\tilde{\mathcal{L}}_n(g(\cdot ;\hat{\theta}), f^*) - \tilde{\mathcal{L}}_n(f^*,f^*) \le C \left\{ L_0 H(\mathbf{m}) \|f^*\|_{W^L} + (\sigma_{\epsilon}^2 + \|f^*\|_{W^L}^2) \frac{(2d m_1 + 4m_1 m_2 + 4m_2 m_3 + \cdots + 2m_{L-1}) \log n}{n} \right\}
\end{align}
and $\nu(\hat{\theta}) \le C_{\mathrm{k}}$ with probability at least $1-O(n^{-C_2})$ for some constants $C_1, C_2, C, C_{\mathrm{k}} > 0$.
\end{theorem}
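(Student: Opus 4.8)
The plan is to graft the second-order Taylor decomposition from the proof of Theorem~\ref{general_loss_overpara_emp} onto the metric-entropy / peeling machinery underlying Theorem~\ref{underpara_emp}. First I would fix $\theta^*$, the parameters of the depth-$L$, width-$\vec{m}$ network approximating $f^*$ furnished by Theorem~\ref{approx1_2}, so that $\|g(\cdot;\theta^*)-f^*\|_{L^2(\mu)}\le H(\vec{m})\|f^*\|_{W^L}$ and $\nu(\theta^*)\le\|f^*\|_{W^L}$. Using optimality of $\hat\theta$ for the regularized objective and Taylor-expanding $\mathcal{L}_n(\cdot,f^*+\epsilon)$ around $y=f^*$ to second order, exactly as in Theorem~\ref{general_loss_overpara_emp} and with $\mathcal{L}_n(f^*,f^*)=0$, I obtain
\begin{align*}
\mathcal{L}_n(g(\cdot;\hat\theta),f^*)\le{}&\lambda\big(\nu(\theta^*)-\nu(\hat\theta)\big)+L_0\|g(\cdot;\theta^*)-f^*\|_n\\
&+\big(\mathcal{L}_{n,y}(g(\cdot;\theta^*),f^*)-\mathcal{L}_{n,y}(g(\cdot;\hat\theta),f^*)\big)\cdot\epsilon+2B\|\epsilon\|_n^2.
\end{align*}
The first two terms are handled exactly as in the overparametrised case; the $2B\|\epsilon\|_n^2$ term is dealt with by truncating the Gaussian noise on the event $\{\max_i|\epsilon_i|\le T\}$ with $T$ of order $\sigma_\epsilon\sqrt{\log n}$ (probability $1-O(n^{-C_2})$), as in Theorem~\ref{general_loss_overpara_emp}, contributing a term of lower order. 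The crux is the noise cross term.

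For the cross term I would bound, uniformly over $f=g(\cdot;\theta)\in\mathcal{F}(\vec{m},1)$, the quantity $\big|\frac1n\sum_{i=1}^n\big(\mathcal{L}_y(g(x_i;\theta),f^*(x_i))-\mathcal{L}_y(g(x_i;\theta^*),f^*(x_i))\big)\epsilon_i\big|$. Passing to the composed class $\mathcal{H}=\{x\mapsto\mathcal{L}_y(g(x;\theta),f^*(x)):\nu(\theta)\le 1\}$, Assumption~\ref{assump4} gives that $\mathcal{L}_y$ is $L_{1,y}$-Lipschitz in its first argument, so every $\delta/L_{1,y}$-net of $\mathcal{F}(\vec{m},1)$ in $\|\cdot\|_\infty$ induces a $\delta$-net of $\mathcal{H}$; moreover, since $|g(x;\theta)|\le\nu(\theta)\le 1$ on $\mathbf{B}^d$, every element of $\mathcal{H}$ lies within sup-distance $L_{1,y}$ of the fixed function $x\mapsto\mathcal{L}_y(0,f^*(x))$, so the effective $L_\infty$-envelope of $\mathcal{H}$ has size at most $L_{1,y}+|\mathcal{L}_{n,y}(0,f^*)|$ up to universal constants. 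Feeding the $L_\infty$ metric-entropy estimate of Lemma~\ref{inf-metric-entropy} for the enlarged class $\mathcal{F}(\vec{m}+\vec{m},1)$ (to absorb the difference $\hat\theta-\theta^*$) into the peeling argument of Theorem~S.1 of~\citep{huiyuan2023Nonasymptotic} then yields, with probability $1-O(n^{-C_2})$, a bound of the form $\lambda\,\nu(\hat\theta-\theta^*)+C\delta_n$ for the cross term, valid once $\lambda\ge C_1\sigma_\epsilon\max\{\delta_n,H^2(\vec{m})\}$ with $\delta_n=n^{-1}(2L_\sigma)^{L-1}(2L_{1,y}+2|\mathcal{L}_{n,y}(0,f^*)|)\,m_1m_2\cdots m_{L-1}d\log n$. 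The factor $(2L_\sigma)^{L-1}m_1\cdots m_{L-1}d$ is precisely what Lemma~\ref{inf-metric-entropy} produces, while $(2L_{1,y}+2|\mathcal{L}_{n,y}(0,f^*)|)$ is the envelope size just identified.

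Finally I would assemble the pieces. With $\lambda$ as in the statement, $\lambda\big(\nu(\theta^*)-\nu(\hat\theta)\big)+\lambda\,\nu(\hat\theta-\theta^*)\le 2\lambda\,\nu(\theta^*)\le 2\lambda M$, so all $\nu(\hat\theta)$-dependence either cancels or is absorbed into the constant; combining this with $L_0\|g(\cdot;\theta^*)-f^*\|_n\le L_0H(\vec{m})\|f^*\|_{W^L}+(\text{lower order})$ (using Lemma~\ref{general_loss_f_concentration} to pass from $\|\cdot\|_2$ to $\|\cdot\|_n$) and the truncated second-order term gives the claimed bound on $\mathcal{L}_n(g(\cdot;\hat\theta),f^*)$, while rearranging the same inequality and using $\mathcal{L}_n(g(\cdot;\hat\theta),f^*)\ge 0$ (valid without loss of generality after subtracting $\mathcal{L}(f^*,f^*)$) forces $\lambda\,\nu(\hat\theta)\le C\lambda$, i.e.\ $\nu(\hat\theta)\le C_{\mathrm{k}}$. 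The main obstacle will be the uniform control of the noise cross term over the highly nonlinear, non-convex class $\mathcal{H}$: one must verify that composing with the Lipschitz map $\mathcal{L}_y$ preserves the $L_\infty$ metric-entropy bound of Lemma~\ref{inf-metric-entropy} up to the stated constants, and that the localization / peeling scheme of~\citep{huiyuan2023Nonasymptotic} survives both the enlarged envelope and the extra logarithmic factor. By comparison, truncating the Gaussian tail and controlling the second-order Taylor remainders are routine.
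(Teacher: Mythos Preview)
Your proposal is correct and follows essentially the same route as the paper. The paper's own proof is two sentences: it observes that the noise cross term involves the composition $x\mapsto\mathcal{L}_{n,y}(g(x;\theta),f^*(x))$, and that this composition can be regarded as adding one more hidden layer (a single neuron with ``activation'' $\mathcal{L}_{n,y}(\cdot,f^*)$) on top of the depth-$L$ network, after which the proof of Theorem~\ref{underpara_emp} applies verbatim with $L$ replaced by $L+1$. Your passage to the composed class $\mathcal{H}$, the Lipschitz transfer of the $L_\infty$ metric-entropy bound from Lemma~\ref{inf-metric-entropy}, and your identification of the envelope $L_{1,y}+|\mathcal{L}_{n,y}(0,f^*)|$ are exactly this extra-layer observation spelled out in detail; the factor $(2L_{1,y}+2|\mathcal{L}_{n,y}(0,f^*)|)$ in $\delta_n$ is precisely what the paper's general-activation transformation (end of Section~\ref{sec3}) produces when the new activation has Lipschitz constant $L_{1,y}$ and nonzero value $\mathcal{L}_{n,y}(0,f^*)$ at the origin.
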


\begin{theorem}\label{general_loss_underpara_gen}
Let $\delta_n = n^{-1} (2L_\sigma)^{L-1} (2L_{1,y} + 2|\mathcal{L}_{n,y}(0,f^*)|) (2d m_1 + 4m_1 m_2 + 4m_2 m_3 + \cdots + 2m_{L-1}) \log n < 1$. Under Assumptions~\ref{assump1}, \ref{assump2}, and \ref{assump3}, then the regularized network estimator $g(\cdot ; \hat{\theta})$ with $\lambda = \lambda_2 \equiv C_1 \sigma_\epsilon \max\{\delta_n, H(\mathbf{m})^2\}$ satisfies
\begin{align}
\int_{\mathbf{B}^d} (\tilde{\mathcal{L}}(g(\cdot ;\hat{\theta}), f^*) - \tilde{\mathcal{L}}(f^*,f^*))  d\mu \le & C  \left\{ L_0 H(\mathbf{m}) \|f^*\|_{W^L} + (\sigma_{\epsilon}^2 + \|f^*\|_{W^L}^2) \left ( \frac{(2d m_1 + 4m_1 m_2 + 4m_2 m_3 + \cdots + 2m_{L-1}) \log n}{n} \right. \right .\\
& \left. \left. + \sqrt{\frac{(2d m_1 + 4m_1 m_2 + 4m_2 m_3 + \cdots + 2m_{L-1}) \log n}{n}}  \right ) \right\}
\end{align}
with probability at least $1 - O(n^{-C_2})$ for some constants $C_1, C_2, C > 0$.
\end{theorem}

These two estimations are based on the analog of Lemma~\ref{underpara_gen_lem1} and a key ingredient relying on local Rademacher complexity estimation for our general $\mathcal{L}$, which are summarized in the following two results.

\begin{lemma}\label{general_loss_underpara_gen_lem1}
For any $0 < \gamma < 1$, define $\mathcal{B}_\mathcal{F}(\gamma) = \{ f\in \mathcal{F}^*(\mathbf{m},1) : \|f\|_2 < \gamma \}$, the $L_2(\mu)$-ball in $\mathcal{F}^*(\mathbf{m},1)$ of radius smaller than $\gamma$. Let
\begin{align}
Z_n(\gamma) = \sup_{f\in \mathcal{B}_\mathcal{F}(\gamma)} \left| \int_{\mathbf{B}^d} \tilde{\mathcal{L}}(f,f^*) \, d\mu - \tilde{\mathcal{L}}_n(f,f^*) \right|.
\end{align}
Let $m = \max\{m_1,m_2,\dots,m_{L-1}\}$. Then, for any $\gamma$ satisfying
\begin{align}
\sqrt{ \frac{2\log(18L_\sigma)(d m_1 + m_1 m_2 + m_2 m_3 + \cdots + m_{L-1}) \log n}{n} } \le \gamma \le 1,
\end{align}
we have
\begin{align}
\mathbb{E} Z_n(\gamma) \le 272 L_0 \gamma \sqrt{ \frac{\log(18L_\sigma)(d m_1 + m_1 m_2 + m_2 m_3 + \cdots + m_{L-1}) \log n}{n} }.
\end{align}
\end{lemma}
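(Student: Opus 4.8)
The plan is to follow the metric-entropy and localization argument used for the mean-squared-loss analogue, Lemma~\ref{underpara_gen_lem1}, substituting the multilayer sup-norm covering bound of Lemma~\ref{inf-metric-entropy} and peeling off the loss by a contraction step. Recall that, under the normalization $\mathcal{L}(f^*,f^*)=0$ adopted for general losses, every $h\in\mathcal{B}_\mathcal{F}(\gamma)$ can be written $h=f-f^*$ with $f\in\mathcal{F}(\vec m,1)$, $\|f^*\|_{W^L}\le 1$, $\|h\|_2<\gamma$, and then $\mathcal{L}(f,f^*)(x)=\psi_x(h(x))$ where $\psi_x(t):=\mathcal{L}(f^*(x)+t,f^*(x))$ satisfies $\psi_x(0)=0$ and, by the Lipschitzness part of Assumption~\ref{assump4}, is $L_0$-Lipschitz uniformly in $x$. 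First I would apply the standard symmetrization inequality to obtain $\mathbb{E}Z_n(\gamma)\le 2\,\mathbb{E}_{\rho,x}\sup_{h\in\mathcal{B}_\mathcal{F}(\gamma)}\bigl|\tfrac1n\sum_{i=1}^n\rho_i\psi_{x_i}(h(x_i))\bigr|$ with $\rho_i$ i.i.d.\ Rademacher.

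Next, by the Ledoux--Talagrand contraction inequality (applicable since each $\psi_{x_i}$ is $L_0$-Lipschitz with $\psi_{x_i}(0)=0$), this quantity is at most $4L_0\,\mathbb{E}_{\rho,x}\sup_{h\in\mathcal{B}_\mathcal{F}(\gamma)}\bigl|\tfrac1n\sum_{i=1}^n\rho_i h(x_i)\bigr|$, so it remains to control this \emph{localized} Rademacher average of $\mathcal{B}_\mathcal{F}(\gamma)$ --- precisely the quantity bounded in the proof of Lemma~\ref{underpara_gen_lem1}. There one covers $\mathcal{B}_\mathcal{F}(\gamma)$ in empirical $L_2$ distance by intersecting an $\|\cdot\|_\infty$-net of $\mathcal{F}(\vec m,1)$ with the ball $\{\|f\|_2<\gamma\}$ and feeds the resulting entropy into Dudley's entropy integral truncated at a scale $\delta\sim(2n)^{-3/2}$. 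With $\vec m=(m_1,\dots,m_{L-1})$ and $m=\max_i m_i$, Lemma~\ref{inf-metric-entropy} gives $\log\mathcal{N}(\delta,\mathcal{F}(\vec m,1),\|\cdot\|_\infty)\le (dm_1+m_1\cdots m_{L-1})\log(1+4m_1\cdots m_{L-2}L_\sigma^{L-1}\delta^{-1})$, so the Dudley integral from $0$ to $\gamma$ is bounded by a universal constant times $\gamma\sqrt{(\log(13m^{L-2}L_\sigma^{L-1})+2)\,m_1m_2\cdots m_{L-1}\,d\log n/n}$ exactly when $\gamma$ satisfies the stated lower bound (that lower bound being what makes the $\gamma$-term dominate the residual contribution at the truncation scale, just as in Lemma~S.2 of~\citep{huiyuan2023Nonasymptotic}). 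Tracking the constant $4L_0$ from the contraction together with the universal constant ($30$) from the Dudley estimate yields the claimed $\mathbb{E}Z_n(\gamma)\le 120L_0\gamma\sqrt{\log(13m^{L-2}L_\sigma^{L-1}+2)\,m_1m_2\cdots m_{L-1}\,d\log n/n}$.

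The main obstacle is the localization bookkeeping inside the chaining step: one must verify that intersecting the $\|\cdot\|_\infty$-net with the $L_2(\mu)$-ball of radius $\gamma$ produces a chain whose entropy integral scales \emph{linearly} in $\gamma$ rather than as $\sqrt{\gamma}$, and that the hypothesized lower bound on $\gamma$ is exactly the threshold at which stopping the chaining at $\delta\sim(2n)^{-3/2}$ becomes negligible relative to the $\gamma$-term --- this is the delicate part and is carried out exactly as in~\citep{huiyuan2023Nonasymptotic}, now with the multilayer entropy bound. Two minor technical points must also be recorded: that $\mathcal{F}(\vec m,1)$ and $f^*$ are uniformly bounded in sup-norm (immediate from $|f(x)|\le\nu(f)\|x\|_2\le 1$, established inside the proof of Theorem~\ref{overpara_emp}, and the $C^{0,1}(\mathbf{B}^d)$-embedding of $W^L$ from Theorem~\ref{property_g_barron}), so that the relevant range of $\psi_x$ is a fixed compact interval on which the Lipschitz constant is genuinely $L_0$; and that empirical $\|\cdot\|_n$-covering numbers are dominated by $\|\cdot\|_\infty$-covering numbers, which is trivial. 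With these in hand the remainder is a faithful transcription of the two-layer argument.
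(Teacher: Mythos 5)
Your proposal follows essentially the same route as the paper: symmetrize, peel off the loss by the Ledoux--Talagrand contraction using the $L_0$-Lipschitzness from Assumption~\ref{assump4} (picking up the factor $4L_0$), and then reduce to the localized Rademacher average of $\mathcal{B}_\mathcal{F}(\gamma)$, which is exactly what Lemma~\ref{underpara_gen_lem1} controls via the multilayer entropy bound of Lemma~\ref{inf-metric-entropy} and the argument of Lemma~S.2 in~\citep{huiyuan2023Nonasymptotic}. Your constant bookkeeping ($4L_0$ times the constant $30$ from the localized estimate, giving $120L_0$) matches the paper's, so the proof is correct.
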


The following is a key novel result of ours, relating the risk measure for a general loss to that for the $L^2$ loss. It is an indispensable result for our generalization theory to hold for general Lipschitz losses.

\begin{theorem}\label{indispensable result}
Given the uniformly 1-bounded function class $\mathcal{F}(\mathbf{m},1)$, which is clearly star-shaped around the ground truth $f^*$ (i.e., $c f \in \mathcal{F}(\mathbf{m},1)$ for any $c \in [0,1]$ and $f \in \mathcal{F}(\mathbf{m},1)$ near $f^*$), let
\[
\delta_n = \sqrt{ \frac{(2L_\sigma)^{L-1} (2L_{1,y} + 2|\mathcal{L}_{n,y}(0,f^*)|) (2d m_1 + 4m_1 m_2 + 4m_2 m_3 + \cdots + 2m_{L-1}) \log n}{n} } < 1.
\] 
Then:
\begin{enumerate}
    \item \label{general_loss_aux_thm_1} Assume that $\tilde{\mathcal{L}}(f,f^*)$ is $L_0'$-Lipschitz with respect to the first argument $f$. Then
    \begin{align}
    \sup_{f\in \mathcal{F}(\mathbf{m},1)} \frac{|\int_{\mathbf{B}^d} (\tilde{\mathcal{L}}(f,f^*) - \tilde{\mathcal{L}}(f^*,f^*)) \, d\mu - (\tilde{\mathcal{L}}_n(f,f^*) - \tilde{\mathcal{L}}_n(f^*,f^*))|}{\|f-f^*\|_2 + \delta_n} \le 10 L_0' \delta_n
    \end{align}
    with probability at least $1 - c_1 e^{-c_2 n \delta_n^2}$.
    \item \label{general_loss_aux_thm_2} 
    Furthermore, assume that $\Tilde{\mathcal{L}}(f,y)$ is $\gamma$-strongly convex for the first argument $f$ for each $y$, then we have 
    \begin{align}
    \|\hat{f}-f^*\|_2 \le c_2\delta_n + c_3 
    \end{align}
    and then,
    \begin{align}\label{event_1}
    sup_{f\in \mathcal{F}(\Vec{m},1)} {|\int_{\mathbf{B}^d}((\Tilde{\mathcal{L}}(f,f^*) - \Tilde{\mathcal{L}}(f^*,f^*))d\mu - (\Tilde{\mathcal{L}}_n(f,f^*)-\Tilde{\mathcal{L}}_n(f^*,f^*)))|} \le 
    c_2\delta_n^2 + c_3\delta_n
    \end{align}
\end{enumerate}
with the same probability as~\ref{general_loss_aux_thm_1}, for some constants $c_2,c_3$.
\end{theorem}

Finally, we also have an encompassing result unifying the underparameterized and overparameterized regimes.

\begin{theorem}\label{general_loss_para_gen}
Under Assumptions~\ref{assump1}, \ref{assump2}, \ref{assump3}, and \ref{assump4}, and the assumption that $\max_i\|x_i\|_2 \le 1$, fix a sufficiently large $T>0$. There exists a constant $c$ such that the regularized network estimator $g(\cdot ; \hat{\theta})$ with $\lambda = \min(\lambda_1, \lambda_2)$, where $\lambda_1$ and $\lambda_2$ are defined in Theorems~\ref{general_loss_overpara_gen} and~\ref{general_loss_underpara_gen}, respectively, satisfies
\begin{align}\label{loss_para_gen_exp}
\int_{\mathbf{B}^d} |\mathcal{L}(g(\cdot ;\hat{\theta}), f^*) \, d\mu| \le C & \left\{ L_0 H(\mathbf{m}) \|f^*\|_{W^L} + 2BT + (\sigma_{\epsilon}^2 + \|f^*\|_{W^L}^2) \right. \\
& \min \left( \max\{12L_{1,y} L_\sigma^L, 2^{L+2} c L_{1,y} L_\sigma^{L-1} \sqrt{d}\} \sqrt{\frac{\log n}{n}}, \right. \\
&  \frac{(2d m_1 + 4m_1 m_2 + 4m_2 m_3 + \cdots + 2m_{L-1}) \log n}{n} \\
& + \left. \left. \sqrt{\frac{(2d m_1 + 4m_1 m_2 + 4m_2 m_3 + \cdots + 2m_{L-1}) \log n}{n}} \right ) \right\}
\end{align}
with probability at least $1-O(n^{-C_1})$ for some constants $C_1, C > 0$ (which may depend on $T$ and $L_\sigma$) and sufficiently large $n$.
\end{theorem}

Similar to Corollary~\ref{simplified_overunder_thm}, we also have the following simplified version.

\begin{corollary}
Under Assumptions~\ref{assump1}, \ref{assump2}, \ref{assump3}, and \ref{assump4}, and the assumption that $\max_i\|x_i\|_2 \le 1$, fix a sufficiently large $T>0$ (depending on $n$). There exists a constant $c$ such that the regularized network estimator $g(\cdot ; \hat{\theta})$ with $\lambda = \max(\lambda_1, \lambda_2)$, where $\lambda_1$ and $\lambda_2$ are defined in Theorems~\ref{general_loss_overpara_gen} and~\ref{general_loss_underpara_gen}, respectively, satisfies
\begin{align}\label{loss_para_gen_exp_simplified}
\int_{\mathbf{B}^d} |\mathcal{L}(g(\cdot ;\hat{\theta}), f^*) \, d\mu| \le C & \left\{ \frac{(\sqrt{5}L_\sigma)^{L-1} - 1}{\sqrt{5}L_\sigma - 1} \frac{1}{\sqrt{b}} L_0 \|f^*\|_{W^L} + 2BT + (\sigma_{\epsilon}^2 + \|f^*\|_{W^L}^2) \right. \\
& \min \left( \max\{12L_{1,y} L_\sigma^L, 2^{L+2} c L_{1,y} L_\sigma^{L-1} \sqrt{d}\} \sqrt{\frac{\log n}{n}}, \right. \\
& \left. \left. \frac{4L m^{2} d \log n}{n} + 2m\sqrt{\frac{dLlogn}{n}} \right) \right\}
\end{align}
with probability at least $1-O(n^{-C_1})$ for some constants $C_1, C > 0$ (which may depend on $T$ and $L_\sigma$) and sufficiently large $n$.
\end{corollary}

We suspect that the above bound is also near optimal up to a logarithmic factor. The analysis of the double descent phenomenon for general Lipschitz loss functions is similar to that for the MSE loss as in Section~\ref{sec7}.

\section{Comparison with Existing Results}\label{sec10}
Classical data-dependent generalization error bounds, e.g., Rademacher complexity-based bounds, typically have the following form.

\begin{proposition}
Assume the loss function $\mathcal{L}$ is 0-1 valued. For any $\delta > 0$, with probability greater than $1 - \delta$, the following holds:
\begin{align}\label{rad_gen}
\mathcal{L}(f(x),y) \le \mathcal{L}_n(f(x_i),y_i) + R_n(\varphi \circ f) + \sqrt{\frac{8\log(2/\delta)}{n}},
\end{align}
where $\varphi$ is a dominating cost function, $\varphi \circ F = \{(x,y) \rightarrow \varphi(f(x),y) - \varphi(0,y), f \in F\}$, and $R_n(f)$ is the sample Rademacher complexity with respect to the training data $(x_i, y_i)$, $1\le i \le n$.
\end{proposition}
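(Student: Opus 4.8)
The plan is to reproduce the classical symmetrization and bounded-differences argument underlying all data-dependent Rademacher bounds (in the style of Koltchinskii--Panchenko and Bartlett--Mendelson), so that~\eqref{rad_gen} follows with no new ingredients. First I would pass from the $0$--$1$ loss to its surrogate: since $\varphi$ dominates the misclassification indicator pointwise, $\mathbb{E}[\mathcal{L}(f(X),Y)] \le \mathbb{E}[\varphi(f(X),Y)]$ for every $f\in F$, and the decomposition $\varphi(f(x),y) = \big(\varphi(f(x),y)-\varphi(0,y)\big) + \varphi(0,y)$ isolates the $f$-independent term $\varphi(0,y)$. After the standard normalization of $\varphi$ so that the centered class $\mathcal{G} := \varphi\circ F = \{(x,y)\mapsto \varphi(f(x),y)-\varphi(0,y): f\in F\}$ takes values in $[0,1]$, the remaining constant term cancels between population and empirical averages (or is absorbed into $\mathcal{L}_n$ on the right-hand side), and it suffices to establish the one-sided uniform deviation inequality
\[
\sup_{g\in\mathcal{G}}\Big(\mathbb{E}[g(X,Y)] - \tfrac1n\textstyle\sum_{i=1}^n g(x_i,y_i)\Big) \;\le\; R_n(\mathcal{G}) + \sqrt{\tfrac{8\log(2/\delta)}{n}}.
\]

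To prove this I would proceed in three routine steps. \emph{(i) Concentration.} Apply McDiarmid's bounded-differences inequality to $\Phi(z_1,\dots,z_n) := \sup_{g\in\mathcal{G}}\big(\mathbb{E} g - \frac1n\sum_i g(z_i)\big)$; replacing one coordinate moves the empirical average, hence $\Phi$, by at most $1/n$, so $\Phi(S) \le \mathbb{E}[\Phi(S)] + c\,\sqrt{\log(2/\delta)/n}$ with probability at least $1-\delta/2$, where the precise constant (and the $2/\delta$) comes from the bounded-differences constant for the centered surrogate. \emph{(ii) Symmetrization.} Introducing an independent ghost sample $(x_i',y_i')$ and Rademacher signs $\sigma_i$, exchangeability gives
\[
\mathbb{E}[\Phi(S)] \le \mathbb{E}\Big[\sup_{g\in\mathcal{G}}\tfrac1n\textstyle\sum_i \sigma_i\big(g(x_i',y_i')-g(x_i,y_i)\big)\Big] \le \mathbb{E}\Big[\sup_{g\in\mathcal{G}}\big|\tfrac2n\textstyle\sum_i \sigma_i g(x_i,y_i)\big|\Big] = R_n(\mathcal{G}) = R_n(\varphi\circ F).
\]
\emph{(iii) Assembly.} Combining (i) and (ii), and then adding $\tfrac1n\sum_i\mathcal{L}(f(x_i),y_i)$ back in (it is dominated by the corresponding $\varphi$-average on the training data), yields~\eqref{rad_gen} for every $f\in F$ simultaneously.

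Since this is a textbook bound, there is no genuine obstacle; the only points that need care are the centering normalization of $\varphi$ --- which is exactly what makes the \emph{centered} class $\varphi\circ F$ rather than $\{(x,y)\mapsto\varphi(f(x),y)\}$ appear in the Rademacher term --- and matching the exact constant $8$ and the factor $2/\delta$ to the bounded-differences computation (and, if one prefers the empirical rather than the expected Rademacher complexity, a second McDiarmid step plus a union bound). I would simply cite a standard reference, e.g.~\citep{wainwright2019highdimensional}, for this routine bookkeeping rather than carrying it out in detail.
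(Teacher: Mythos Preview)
The paper does not prove this proposition at all: it is quoted in Section~12 as the ``classical data-dependent generalization error bound'' to which the authors contrast their own results, and no argument is given beyond the remark that ``it uses McMirad [sic] concentration inequality to relate the left and right sides since loss is 0-1 valued.'' Your sketch---McDiarmid on the uniform deviation functional, ghost-sample symmetrization, and the centering $\varphi(f(x),y)-\varphi(0,y)$---is exactly the standard proof and is fully consistent with that one-line remark, so there is nothing to compare; your proposal is correct and is essentially what the paper implicitly invokes.
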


This kind of expression has some features different from ours. First, it works for any predictor $f$ regardless of how the model and training process are; second, it uses the McDiarmid concentration inequality to relate the left and right sides since the loss is 0-1 valued; third, it uses the Rademacher complexity of the predictors and does not consider any other complexity measures.

Since we do not require the loss function to be bounded, we rely on the Talagrand and local Talagrand concentration inequalities for the overparameterized and underparameterized regimes, respectively, instead of the McDiarmid concentration inequality, but the same thing is that we all need to estimate the Rademacher complexity required by these concentration inequalities. The second essential difference is that we need to estimate the empirical error $\mathcal{L}_n$, which highly relies on the optimality of the network estimator. The third is that we explicitly distinguish between the overparameterized and underparameterized regimes since their behavior is quite different, making the error bounds more accurate and predictive of the double descent phenomenon.

\section{Discussion}\label{sec11}
\begin{enumerate}
    \item 

We present the first \textit{near}-complete generalization theory for \textit{almost} general multilayer fully connected feedforward neural networks with path regularizations. \textit{Near} and \textit{almost} mean that we impose some very mild conditions on activation functions and loss functions. Nevertheless, the theory is widely applicable. This theory is optimal, up to a logarithmic factor, when the loss is MSE and the activation is ReLU. This theory predicts the double descent phenomenon.
\item
Note that the regularization terms are needed, otherwise the empirical error bounds, e.g.~\ref{overpara_emp} are no longer valid.
\item
The empirical and generalization error bound, however, is not optimal. Establishing a lower bound for the error estimation will give us a hint and insight into what the optimal error upper bound should be. Having a stronger approximation theory for multilayer networks will lead to a better bias bound. One critical direction is that we do not fully leverage the structure of the minimizer of our learning problem in our estimation of variance terms, as it is hard to explicitly characterize it for highly nonlinear neural networks. Any breakthrough in this aspect will make our variance estimation sharper and more useful (i.e., dependent on the width vector), which will also be our future research direction.
\item
To fully understand the generalization theory of neural networks, fully connected multilayer neural networks are only the first step. There remains a number of research questions. First, what is the corresponding theory for more general loss functions and activation functions? Second, what is the corresponding theory for other regularizations, including implicit regularization like early stopping? Lastly, what is the corresponding theory for other types of neural networks, e.g., LSTM, CNN, or transformer and combinations thereof? It is of course that the techniques used in this work can be useful and adapted for these problems. In fact, we are confident that our result holds in principle for CNNs and RNNs, as they are equivalent to multilayer networks with certain symmetry constraints on the weights. Transformers will require some care in handling arithmetic functions, but that should not pose severe difficulty.
\end{enumerate}
\section{Acknowledgment}\label{ack}
We are grateful to Xiong Zhou, who was an intern here when this work was finished, for pointing out some errors in the preliminary version of this work and suggesting methods to overcome the problems.

\section{Supplementary Information}\label{appendix}

In this section, we provide detailed proofs of the results in main sections. 

\subsection{Proofs on results in section~\ref{sec3}}
As stated, under the definition of the finite $L$-layer neural network~\ref{L_layer_nn}, that it puts weights and biases in the same position. However, it doesn't really tell us why every finite $L$-layer neural network in usual appearance can be transformed in this form. Let us elaborate on this. 


First of all, figure~\ref{figure1: the structure of a layer of a transformed multilayer neural network} visually illustrates a layer of the transformed neural network:

\begin{figure}[htp]
    \centering  
    \includegraphics[width=.90\textwidth]{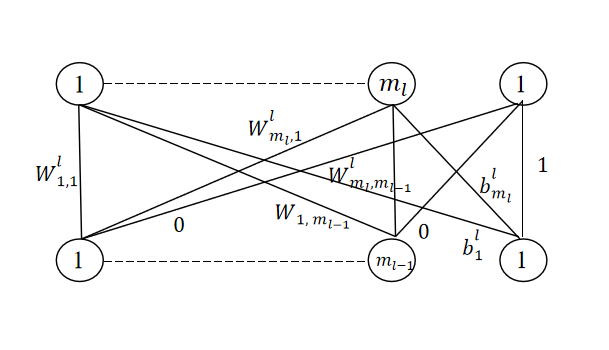}
    \caption{The structure of the $l$-th layer of a transformed multilayer neural network defined in~\eqref{transformed_nn}}
    \label{figure1: the structure of a layer of a transformed multilayer neural network}
\end{figure}

Mathematically, e.g. let us consider two-layer neural network firstly, i.e. 
\begin{align}
f = \sum_{j=1}^n c_i\sigma (\sum_{i=1}^m a_{ij}x_i + b_j)
\end{align}
The reformulated expression is $f' = \sum_{j=1}^n c_i\sigma (\sum_{i=1}^{m+1} a'_{ij}x'_i)$ where $x' = (x^T, 1)^T$ and $a'_{i,j} = a_{i,j}$, if $1\le i \le m, 1\le j \le n$ and $a'_{m+1,j} = b_j$. In general, for neural network \eqref{L_layer_nn}, the reformulated expression is 
\begin{equation}\label{transformed_nn}
f'(x) = \sum_{i_L=1}^{m_L+1} a_{i_L}^L \sigma\left (\sum_{i_{L-1}=1}^{m_{L-1}+1}a_{i_Li_{L-1}}^{L-1}\sigma \left(\sum_{i_{L-2}} \cdots \sigma \left (\sum_{i_1=1}^{m_1+1} a_{i_2i_1}^1 \sigma \left (\sum_{i_0=1}^{d+1}a_{i_1i_0}^0x_{i_0}\right ) \right ) \right ) \right )
\end{equation}
where $a_{i_t,i_{t-1}} = a_{i_t,i_{t-1}}$ if $1 \le i_t \le m_{i_t}$  and $1 \le i_{t-1} \le m_{i_{t-1}}$, $a_{i_t,m_{t-1}+1} = b_{t-1}$ if $1\le i_t \le m_{i_t} $, $a_{i_{m_t},i_{t-1}} = 0$ if $1\le i_{t-1} \le m_{i_{t-1}}$, $a_{i_{m_t},i_{m-1}} = 1$.

This reformulation is straightforwardly checked to be valid. 

\begin{proof}[Proof of proposition~\ref{problem_equiv_max_norm}]
One can go over the proof of Theorem 5 in~\citep{neyshabur2015norm} and change, without difficulty, from $l_p$ max norm to $l_{p,q}$ max norm. This shows the minimum value  of~\ref{opt_mix_problem} is no more than that of~\ref{opt_problem}. The other direction comes with using the reverse construction to turn the solution of the former into the latter.
\end{proof}
\subsection{Proofs on results in section~\ref{sec_approx}}

\begin{proof}[Proof of Lemma~\ref{com_lem}]
\begin{enumerate}\label{com_lem_proof}
    \item An integral expression for the left hand side of~\ref{com_lem} is
    \begin{align}
        \sum_{k=1}^n\binom{n}{k}(m-1)^k\frac{1}{k} = \int_0^{m-1}\frac{(x+1)^n-1}{x}dx
    \end{align}
    Letting $y=x+1$ and using changes of variables, we have
    \begin{align}
    &    \int_1^m\frac{y^n-1}{y-1}dy \\
    = & \int_1^m(y^{n-1}+y^{n-2}+\cdots +y+1)dy \\
    = & \sum_{k=1}^n \frac{m^k-1}{k} 
    \end{align}
    contribution of the -1 terms is $\sim logn$, so we only need to calculate $T_m = \sum_{k=1}^n\frac{m^k}{k}$. 
    We split $T_m$ into two parts
    \begin{align} 
        T_m & = \sum_{k=1}^{n-r}\frac{m^k}{k} \label{com_exp}\\
         & + \sum_{k=n-r+1}^n \frac{m^k}{k} \label{com_exp2}
    \end{align}
    As $\frac{m^k}{k}$ is increasing function of $k$, we find the first term $\le m^{n-r}$, while the second term is bounded above by
    \begin{align}
        \sum_{k=n-r+1}^n \frac{m^k}{k} & \le \frac{1}{n-r+1}\sum_{k=0}^nm^k \\
                                       & = \frac{1}{n-r+1}\frac{m^{n+1}-1}{m-1}
    \end{align}
    We can set an appropriate $r$ to calculate an upper bound. As~\ref{com_exp} is a decreasing function of $r$ and~\ref{com_exp2} is an increasing function of $r$, we naturally set $r$ so that these two are equal.
    \begin{align}
    m^{n-r} & = \frac{1}{n-r+1}\frac{m^{n+1}-1}{m-1} \\
         r & = log_mn
    \end{align}
    so we let $r=  \lceil log_mn \rceil $. For this $r$, ~\ref{com_exp} $\le \frac{m^n}{n}$, and
    \begin{align}
    ~\ref{com_exp2} & \le \frac{1}{n-log_mn}\frac{m^{n+1}-1}{m-1}  \\
                   & \le \frac{m}{(m-1)^2}\frac{m^{n+1}-1}{n} \\
                   & \le \frac{m}{(m-1)^2}\frac{m^{n+1}}{n}
    \end{align}
    where we have used the inequality $log_mn \le \frac{n}{m}$ for $n\ge m\ge 2$.
    
Then,~\ref{com_exp} + \ref{com_exp2} $\le (\frac{m^2}{(m-1)^2}+1) \frac{m^{n}}{n}\le \frac{5m^{n}}{n}$.\\
    \item Using the same strategy as~\ref{com_lem_proof}, we obtain
    \begin{align}
        \sum_{k=0}^{n-1}\binom{m}{k}(m-1)^k\frac{1}{n-k} & = \int_1^{m-1}((1+\frac{1}{x})^n - \frac{1}{x^n})x^{n-1}dx + \sum_{k=0}^{n-1}\frac{\binom{n}{k}}{n-k} \\
        & = \int_1^{m-1}((1+\frac{1}{x})^n - \frac{1}{x^n})x^{n-1}dx + \sum_{k=1}^{n}\frac{\binom{n}{k}}{k} \\
        & \le \int_1^{m-1}\frac{(x+1)^n-1}{x}dx + \sum_{k=1}^{n}\frac{\binom{n}{k}}{k} \\
        & = \int_0^{m-1}\frac{(x+1)^n-1}{x}dx - \int_0^{1}\frac{(x+1)^n-1}{x}dx + \sum_{k=1}^{n}\frac{\binom{n}{k}}{k} \\
        & \le \frac{5m^{n}}{n} 
    \end{align}
\end{enumerate}
\end{proof}

\begin{proof}[Proof of Lemma~\ref{com_lem_2}]
As $k_1,k_2,\cdots, k_{m}$ are exchangeable in the above expression, we can write the left hand side as 
\begin{align}
& \frac{1}{m^{n}} \sum_{\substack{k_1+k_2+\cdots +k_{m}=n\\k_1\ge 1,k_2\ge 1,\dots,k_{m}\ge 1}}\binom{n}{k_1,k_2,\dots , k_{m}}(\frac{1}{k_1}+\frac{1}{k_2}+\cdots + \frac{1}{k_{m}}) \\
& = \frac{m}{m^n}\sum_{k_1\ge 1}\sum_{\substack{k_1+k_2+\cdots +k_{m}=n\\k_2\ge 1,\dots,k_{m}\ge 1}}\binom{n}{k_1,k_2,\dots , k_{m}}\frac{1}{k_1} 
\end{align}
Then,
\begin{align}
&  \frac{m}{m^n}\sum_{k_1\ge 1}\sum_{\substack{k_1+k_2+\cdots +k_{m}=n\\k_2\ge 1,\dots,k_{m}\ge 1}}\binom{n}{k_1,k_2,\dots , k_{m}}\frac{1}{k_1} \\
 & = \frac{m}{m^n}\sum_{k_1\ge 1}\binom{n}{k_1}\frac{1}{k_1}\sum_{\substack{k_2+\cdots +k_{m}=n-k_1\\k_2\ge 1,\dots,k_{m}\ge 1}}\binom{n-k_1}{k_2,\dots , k_{m}} \\
& \le \frac{m}{m^n}\sum_{k_1\ge 1}\binom{n}{k_1}\frac{1}{k_1}(m-1)^{n-k_1} \\
& \le \frac{5m}{n}
\end{align}
\end{proof}
\begin{proof}[Proof of Proposition~\ref{approx1_2_lem}] $L=2$ case is treated as Lemma 1 in~\citep{barron1993universal}. For any $\delta > 0$, since $f$ is in the closed convex hull of $\mathcal{G}$, we can find $f_1,\dots, f_n \in \mathcal{G}$ such that a convex combination of them $f^* = \gamma_1f_1 + \dots + \gamma_nf_n$ is within $\delta$ distance to $f$, that is, $\|f-f^* \|\le \delta$. Now let $g$ be a random variable taking values in $H$ with support $\{f_1,\dots, f_n\}$ and the probability valued in $f_i$ is $\gamma_i$. Let $g_1,g_2,\dots g_m$ be $m$ independent samples of $g$ and $\overline{g} = \frac{1}{m}\sum_{i=1}^mg_i$. By independence, $\E\|\overline{g}-f^*\|^2 = \frac{1}{m}\E\|g-f^*\|^2$. And since $\E g = f^*$ we get $\frac{1}{m}\E\|g-f^*\|^2 = \frac{1}{m}(E\|g\|^2-E\|f^*\|^2) = \frac{1}{m}(R^2-\|f^*\|^2)$. So there must exist some $g_1,g_2,\dots ,g_m$ such that $\|\overline{g}-f^*\|^2 \le \frac{1}{m}(R^2-\|f^*\|^2) \le \frac{1}{m}R^2$. One can then choose $\delta = \epsilon/\sqrt{m}$ and use triangular inequality to complete the proof.

When $L=3$, $\Vec{m}=\{m_1,m_2\}$. In order to make ideas more clear to the readers, we divide it into two cases. 
\begin{itemize}
\item $m_1\ge m_2$.
For any $\delta > 0$, since $f$ is in the closed convex hull of $T(\gG_2)$, we can also find $f_1,\dots, f_n \in \mathcal{G}_2$ such that a convex combination of them $f^* = \gamma_1f_1 + \dots + \gamma_nf_n$ is within $\delta_2$ distance to $f$, that is, $\|f-f^* \|\le \delta_2$. We can find $g^2_1,\dots, g^2_{m_2}$ so that  $\left \| f^* - \frac{1}{m_2}\sum_{i=1}^{m_2}g^2_i   \right \|_H^2 \le \frac{R^2}{{m_2}}$. Let $h^2_i\in \mathcal{G}_2, 1\le i\le m_2$ so that $T(h^2_i) = g^2_i$. For each $h^2_i$ we can find $h^{2*}_i$ in $\mathcal{G}_1$ so that $\|h^2_i-h^{2*}_i\|\le \delta_1$. Then we repeat the procedure as what we did for $f^*$: for each $h^{2*}_i$ we know it is a convex combination $h^{2*}_i=\gamma^1_{i,1}g^1_{i,1} + \dots +\gamma^1_{i,k_i}g^1_{i,k_i}$ for $g^1_{i,\cdot}\in \mathcal{G}_1$. Then with appropriate normalization we denote $g^1$ be the random variables with support $\{g^1_{i,\cdot}\}$ and probability distribution is determined by $\gamma^1_{i,k_i}$. We independently sample $m_1$ elements $g^1_1,\cdots , g^1_{m_1}$ from $g^1$. More specifically, we first sample a number $i$ from $\{1,2,\dots,m_2\}$ uniformly, and then sample from $\{g^1_{i,1},g^1_{i,2},\dots,g^1_{i,k_i}\}$ according to probability distribution $\gamma^1_{i,1},\gamma^1_{i,2},\dots,\gamma^1_{i,k_i}$. We group these $g^1_{\cdot}$ by its associated first-sampled number $i$ into $m_2$ groups, denoted by $B_1,\dots, B_{m_2}$. From now on we are conditioned on the event that $|B_i| \ge 1$ for all $1\le i\le m_2$. In particular $m_1\ge m_2$ is the necessary condition for this event to hold. Under this event, let us estimate 
\begin{align}\label{approx1_2_first_term}
\E\left \|\frac{T(\frac{\sum_{g^1_i\in B_1}g^1_1}{|B_1|}) + \cdots + T(\frac{\sum_{g^1_{m_2}\in B_{m_2}}g^1_{m_2}}{|B_{m_2}|})}{m_2} - \frac{T(h^{2*}_1) + \cdots T(h^{2*}_{m_2})}{m_2}  \right \|_H^2 
\end{align}
By Cauchy inequality and Lipschitzness of $T$, we have 
\begin{align}
~\ref{approx1_2_first_term} \le & \frac{1}{m^2_2}m_2\E_{|B_1|\ge 1,\dots, |B_{m_2}|\ge 1} \sum_{i=1}^{m_2} \E_{g^1_1,\dots ,g^1_{m_2}}\left \|    T(\frac{\sum_{g^1_i\in B_i}g^1_i}{|B_i|}) - T(h^{2*}_i)          \right \|_H^2 \\
 \le  & \frac{L^2_T}{m_2}\E_{|B_1|\ge 1,\dots, |B_{m_2}|\ge 1}\sum_{i=1}^{m_2} \E\left \|    \frac{\sum_{g^1_i\in B_i}g^1_i}{|B_i|} - h^{2*}_i          \right \|_H^2
\end{align}
Similar to $L=2$ case, each term under the second expectation symbol is bounded by $\frac{R^2}{|B_i|}$, so we continue to get
\begin{align}\label{ite_key_exp}
    ~\ref{approx1_2_first_term} \le  \frac{L^2_TR^2}{m_2}\E_{|B_1|,\dots, |B_{m_2}|} \sum_{i=1}^{m_2}\frac{1}{|B_i|}
\end{align}
A very coarse estimate $|B_i|\ge 1$ gives
\begin{align}
~\ref{ite_key_exp} \le {L^2_TR^2} 
\end{align}
Therefore there must exist some $g^1_{\cdot,\cdot}$ such that 
\begin{align}\label{approx_1_2_last_2}
\left \|\frac{T(\frac{\sum_{g^1_i\in B_1}g^1_1}{|B_1|}) + \cdots + T(\frac{\sum_{g^1_{m_2}\in B_{m_2}}g^1_{m_2}}{|B_{m_2}|})}{m_2} - \frac{T(h^{2*}_1) + \cdots T(h^{2*}_{m_2})}{m_2}  \right \|_H \le \frac{L_TR}{\sqrt{m_2}}
\end{align}
With this estimation we can finally estimate 
\begin{align}\label{approx_1_2_last}
E\left \|\frac{T(\frac{\sum_{g^1_i\in B_1}g^1_1}{|B_1|}) + \cdots + T(\frac{\sum_{g^1_{m_2}\in B_{m_2}}g^1_{m_2}}{|B_{m_2}|})}{m_2} - f^*  \right \|_H
\end{align}
By decomposing this quantity into three terms 
\begin{align}
&   \frac{T(\frac{\sum_{g^1_i\in B_1}g^1_1}{|B_1|}) + \cdots + T(\frac{\sum_{g^1_{m_2}\in B_{m_2}}g^1_{m_2}}{|B_{m_2}|})}{m_2} - f^*     \\
& = \left (\frac{T(\frac{\sum_{g^1_i\in B_1}g^1_1}{|B_1|}) + \cdots + T(\frac{\sum_{g^1_{m_2}\in B_{m_2}}g^1_{m_2}}{|B_{m_2}|})}{m_2} - \frac{T(h^{2*}_1) + \cdots T(h^{2*}_{m_2})}{m_2}  \right ) \\
& + \left (  \frac{T(h^{2*}_1) + \cdots T(h^{2*}_{m_2})}{m_2} - \frac{T(h^{2}_1) + \cdots T(h^{2}_{m_2})}{m_2}  \right ) \\
&  + \left ( \frac{T(h^{2}_1) + \cdots T(h^{2}_{m_2})}{m_2} - f^*     \right )
\end{align}
By Cauchy inequality, we can get the bound
\begin{align}
    ~\ref{approx_1_2_last} \le & ~\ref{approx1_2_first_term} + \left \|  \frac{T(h^{2*}_1) + \cdots T(h^{2*}_{m_2})}{m_2} \right.\\
    &\left. - \frac{T(h^{2}_1) + \cdots T(h^{2}_{m_2})}{m_2}  \right \|_H \\
    &  + \left \|\frac{T(h^{2}_1) + \cdots T(h^{2}_{m_2})}{m_2} - f^*  \right \|_H 
\end{align}
By Lipschitzness of $T$ the second term is bounded by
\begin{align}
L_T\delta_1
\end{align}
As discussed in the beginning, the third term is bounded by  $\frac{R}{\sqrt{m_2}}$. Summing together, we have
\begin{align}
~\ref{approx_1_2_last} \le L_TR + L_T\delta_1 + \frac{R}{\sqrt{m_2}} 
\end{align}

By triangular inequality, one can choose $\delta_1 = \epsilon, \delta_2 = \epsilon/\sqrt{m_2}$ to get
\begin{align}\label{approx_1_2_final}
\left \|\frac{T(\frac{\sum_{g^1_i\in B_1}g^1_1}{|B_1|}) + \cdots + T(\frac{\sum_{g^1_{m_2}\in B_{m_2}}g^1_{m_2}}{|B_{m_2}|})}{m_2} - f  \right \|_H  \le L_T(R+\epsilon) + {\frac{1}{\sqrt{m}}}(R+\epsilon)
\end{align}

Note that the above coarse estimate is not useful as we have a constant nonzero $L_T(R+\epsilon)$ gap. This is due to having used the trivial estimate $|B_i|>1$. Now we give a better estimate by directly estimating a combinatoric expression from~\ref{ite_key_exp}.

If we expand~\ref{ite_key_exp}, we have
\begin{align}\label{ite_key_exp_2}
~\ref{approx1_2_first_term} & \le  \frac{L^2_TR^2}{m_2}E_{|B_1|,\dots, |B_{m_2}|} \sum_{i=1}^{m_2}\frac{1}{|B_i|} \\
     & = \frac{L^2_TR^2}{m_2} \frac{1}{m_2^{m_1}}\sum_{\substack{k_1+k_2+\cdots +k_{m_2}=m_1\\k_1\ge 1,k_2\ge 1,\dots,k_{m_2}\ge 1}}C_{m_1}^{k_1,k_2,\dots , k_{m_2}}(\frac{1}{k_1}+\frac{1}{k_2}+\cdots + \frac{1}{k_{m_2}})
\end{align}
By Lemma~\ref{com_lem_2}, the above expression is bounded above by
\begin{align}
~\ref{ite_key_exp_2} & \le \frac{L^2_TR^2}{m_2} \frac{5m_2}{m_1} \\
                     & = \frac{5L^2_TR^2}{m_1} \\         
\end{align}
One can then follow the remaining steps and set $\delta_1 = \sqrt{5}\epsilon/\sqrt{m_1}, \delta_2 = \epsilon/\sqrt{m_2}$ to get
\begin{align}
\text{the left hand side of}~\ref{approx_1_2_final} \le (\frac{\sqrt{5}L_T}{\sqrt{m_1}} + \frac{1}{\sqrt{m_2}})(R+\epsilon)
\end{align}
\item $m_1 \le m_2$. In this case, by technical reasons, not all nodes in the second hidden layer can be assigned nodes in the first layer in the fashion of the above proof. Thus we choose and deal with only $m_1$ nodes in the second layers instead. By symmetry, without loss of generalization, we can choose the first $m_1$ nodes. Then the same reasoning steps are taken for $\vec{m}=(m_1,m_1)$ pattern and we complete.
\end{itemize}

When $L\ge 4$ the proof is completely similar to $L=3$ case. The expansion of the form~\ref{approx_1_2_last} for general $L$ layer case has a manifest recursive structure and obviously one can iteratively estimate from the first hidden layer to the last hidden layer. We leave the proof to the readers.

\end{proof}

\begin{proof}[Proof of Theorem~\ref{approx1_2}] Without loss of generality $\|f\|_{W^L} = 1$. Since $W^L \rightarrow C^{0,1}$, we find that $\|f\|_{L^2(P)} \le (1+R) \|f\|_{W^L}$ for all $f \in W^L$.
Recall from the proof of Theorem 2.10 in~\citep{wojtowytsch2020banach} that the unit ball of $W^l$ is the closed convex hull of
the class $\mathcal{G} = \{\pm\sigma(g) : \|g\|_{W^{l-1}} \le 1\}$ for $1\le l$ (holds for general Lipschitz activation as well).
Then applying Theorem~\ref{approx1_2_lem} and completing the proof.

For the norm bound, notice that when $L=2$, $\hat{f}= \frac{\sum_{i=1}^{m_1} \epsilon_i\sigma (g_i)}{m_1}$, where $\epsilon_i$ is + or -, each $\nu(g_i) \le 1$, thus $\nu(\hat{f})  \le 1$. $L>2$ case can be done by recursive relations. By the very computation of $\nu$, each $\nu(g_i) \le 1$ will result in the next layer components $\nu(g^{(2)}_{i,j}) \le \nu(g_i) \le 1$, and so on so forth. 
\end{proof}

\begin{proof}[Proof of corollary~\ref{main_app_bound}] Since $\lambda \le \|f\|_{W^L}$ and 
when $N>d^{d/d-1}$, $N+2>d[N^{1/d}]$, so when $M>C_1d^{d/d-1}$, $max\{d\{N^{\lfloor 1/d \rfloor},N+2\} = N+2$. In this case, $N=M/C_1-2$. The right hand side of~\eqref{app_bound} 
\begin{align}
& 131\|f\|_{W^L} \sqrt{d}(((M/C_1-2)^2((L-C_2)/11)^2log_3(M/C_1))^{-1/d} \\
& = 131\sqrt{d}(C_1)^{1/d}/121\|f\|_{W^L}((M-2C_1)^2(L-18-2d)^2(log_3M-3-d))^{-1/d} \\
\end{align}
As $d\ge 1$
\begin{align}
& 131\sqrt{d}(C_1)^{1/d}\|f\|_{W^L}((M-2C_1)^2(L-18-2d)^2(log_3M-3-d))^{-1/d} \\
& \le 131\sqrt{d}(C_1)^{1/d}/121\|f\|_{W^L}((L-20)^2(M-162)^2(log_3M-4))^{-1/d} \\
& = C\|f\|_{W^L}((L-20)^2(M-162)^2(log_3M-4))^{-1/d}
\end{align}
where $C=131\sqrt{d}(C_1)^{1/d}/121$ depending only on $d$.
\end{proof}

\subsection{Proofs on results in section~\ref{sec6}}
\begin{proof}[Proof of Theorem~\ref{overpara_emp}]
For any $f^* \in W^L$, let $g(\cdot ; \theta^*)$ denote the $L_{2}(B^{d+1})$-approximation of $f^*$  in Theorem ~\ref{approx1} where $\theta^* = (a^L, w^{L-1}, \dots, w^2,w^1)^T$. 

Because of the choice of $m$, one can equivalently think of $g(\cdot ; \theta^*)$ as a neural network with width vector $\Vec{m}$ with weights connecting to extra nodes being all 0. Thus
by the optimality of $\hat{\theta}$, we have
\begin{align}
{1 \over 2n}\sum_{i=1}^n(g(x_i;\hat{\theta}) - y_i)^2 + \lambda \nu(\hat{\theta}) & \le {1\over 2n}\sum_{i=1}^n(g(x_i;\theta^*)-y_i)^2 + \lambda \nu(\theta^*)
\end{align}
Taking $y_i = f^*(x_i) + \epsilon_i$ and rearranging terms gives
\begin{align}\label{emp_decomposition}
& {1\over 2}\|g(\cdot ; \hat{\theta}) - f^*(\cdot)\|^2_n \\
& \le \lambda (\nu(\theta^*) - \nu(\hat{\theta})) + {1\over 2}\|g(\cdot ; \theta^*) - f^*(\cdot)\|^2_n + {1 \over n} |\sum_{i=1}^n \epsilon_i(g(x_i;\hat{\theta}) - g(x_i;\theta^*))  | \\
& \equiv T_1 + T_2 + T_3.
\end{align}

One can regard the above manipulation as a pseudo form of bias-variance decomposition inequality (instead of equality as we take the surrogate $g(\cdot ; \theta^*)$ as the mean of our estimator $g(\cdot ; \hat{\theta}$)) from both the idea and the expressions point of views.
\noindent
By definition, we have
\begin{align}\label{T_1}
T_1 = \lambda(\nu(\theta^*) - \nu(\hat{\theta})) = 2\lambda \nu(\theta^*)  - \lambda \nu(\theta^* \ominus \hat{\theta})
\end{align}
Theorem~\ref{approx1_2} gives the bound for $T_2$
\begin{align}\label{T_2}
T_2 = {1 \over 2}\|g(\cdot ;\theta^*) - f^*(\cdot)\|^2_{L^2(\mathrm{P}_n)} \le C_1 H^2(\Vec{m}){\|f\|^2_{W^L} } 
\end{align}
for some constant $C_1 > 0$. 

For $T_3$, we first derive a coarse bound. This bound is based on the following estimate
\begin{equation}\label{T3_coarse_bound}
\begin{aligned}
T_3 & = {1 \over n} |\sum_{i=1}^n \epsilon_i(g(x_i;\hat{\theta}) - g(x_i;\theta^*))| \\
    & \le {1 \over n} \sum_{i=1}^n |\epsilon_i\| g(x_i;\hat{\theta}) - g(x_i;\theta^*)) | \\
    & = {1 \over n} \sum_{i=1}^n |\epsilon_i||g(x_i;\hat{\theta} - \theta^*)|
\end{aligned}
\end{equation}

We can prove by induction that, for any $l\ge 1$ and $f\in W^l$ 
\begin{align}\label{key_res}
| f(x) | \le L_\sigma^{l-1}\|x\|_2\|f\|_{W^l}
\end{align}
Let us give a proof here:
when $l=1$, from the definition of neural space~\ref{neural space} $W^1$ is the space of affine functions $f(x) = \sum_{i=1}^{d+1}a_ix_i$ with $l_2$ norm $\|f\|_2 = \sqrt{\sum_{i=1}^{d+1}a^2_i}$, the result follows from Cauchy inequality.

If~\eqref{key_res} holds for $l$, then for any $f(x) \in W^{l+1}$, from the definition of $W^{l+1}$
\begin{align}
|f_\mu(x)| & = |\int_{B^{W^l}}\sigma(g(x))\mu(dg) |\\
        & \le \int_{B^{W^l}}|\sigma(g(x))\|\mu(dg)| \\
        & \le \int_{B^{W^l}}L_\sigma|g(x)||\mu(dg)| \\
        & \le L_\sigma^{l}\|x\|_2 \int_{B^{W^l}}|\mu(dg)|
\end{align}
As $g(x)\in B^{W^l}$ has norm $\|g\|_{W^l} \le 1$. Taking infimum with respect to $\mu$, we complete the proof. 

Thereby, the upper bound of $T_3$ becomes
\begin{align}
T_3 & \le {1 \over n} L_\sigma^{L-1}\|g(x;\hat{\theta}\ominus\theta^*)\|_{W^L}\sum_{i=1}^n|\epsilon_i|\|x_i\|_2  \\
     & \le {1 \over n} L_\sigma^{L-1}\nu(\hat{\theta}\ominus\theta^*)\sum_{i=1}^n|\epsilon_i|\|x_i\|_2
\end{align}

Denote the $l_2$-norm matrix of $X$ to be $n(X) = diag({\|x_1\|_2},{\|x_2\|_2},\dots, {\|x_n\|_2})^T$. Let $v = {1\over\sqrt{n}}n(X)\epsilon$ and 
Let $H = n(X)n(X)^T/n$, one can verify that $v^Tv = \epsilon^TH\epsilon$.
Furthermore, let $\xi = {1\over n}L_\sigma^l\sum_{i=1}^n|\epsilon_i|\|x_i\|_2$, then
we have
\begin{align}
T_3 \le  \nu(\hat{\theta}\ominus\theta^*)\xi
\end{align}
Choosing $\lambda \ge 2\xi$ and noting that $\nu(\theta^*) \le \|f^*\|_{W^L}$(~\ref{approx3}), we obtain
\begin{equation}
\begin{aligned}
\label{empirical_e}
& \|g(\cdot ;\hat{\theta}) - f^*(\cdot)\|^2_n \\
& \le C_1H^2(\Vec{m})\|f^*\|^2_{W^L}m^{-1} + 4\lambda \nu(\theta^*) + 2\left ( \xi - \lambda \right ) \nu(\hat{\theta} - \theta^*) \\
& \le C_1H^2(\Vec{m})\|f^*\|^2_{W^L}m^{-1} + 4\lambda \|f^*\|_{W^L}
\end{aligned}
\end{equation}
Now we can bound $\xi$. 
By the assumption that $max_i\|x_i\|_2\le 1$, we have
\begin{align}
\|H\|_2 \le tr(H) = {1\over n}tr(X^TX)  \le 2
\end{align}
Applying a tail bound for quadratic forms of sub-Gaussian vectors~\citep{hsu2012tail} gives
\begin{align}
P(\|v\|_2^2 \ge 2\sigma^2_{\epsilon} + 4\sigma^2_{\epsilon}\sqrt{t} + 4\sigma^2_\epsilon t) \le e^{-t}
\end{align}
Choosing $t=4logn > 1$ for $n\ge 2$  yields
\begin{align}
\|v\|_2^2 < 2\sigma^2_\epsilon + 4\sigma^2_\epsilon\sqrt{t} + 4\sigma^2_\epsilon t < 10\sigma^2_\epsilon t < 64\sigma^2_\epsilon logn
\end{align}
with probability at least $1-n^{-4}$. 
Thus, for $\lambda \ge 2\xi$ to hold with the same probability, it suffices to set $\lambda = 8L_\sigma^l\sigma_\epsilon \sqrt{4logn}$. To complete the proof, substituting the value of $\lambda$ into~\eqref{empirical_e} gives
\begin{align}\label{coarse_overpar_emp}
\|g(\cdot ; \hat{\theta}) - f^*(\cdot)\|^2_n \le C_1H^2(\Vec{m})\|f^*\|^2_{W^L} + 32L_\sigma^l(\sigma^2_\epsilon + \|f^*\|^2_{W^L})\sqrt{logn}
\end{align}
where we have used the inequality $2\sigma_\epsilon \|f^*\|_{W^L} \le \sigma^2_\epsilon + \|f^*\|^2_{W^L}$.

The $\sqrt{logn}$ bound of the second term on the right side of~\ref{coarse_overpar_emp} is certainly coarse as~\ref{T3_coarse_bound} contracts too much. 

Now we give a better bound. Since $\epsilon_i$ is gaussian, by Hoeffding inequality and the expression of $T_3$~\ref{emp_decomposition}
\begin{align}
\mathbb{P}(T_3 \ge \lambda \nu(\hat{\theta} - \theta^*)) \le 2exp \left \{-{ n^2\lambda^2\nu^2(\hat{\theta}- \theta^*) \over 2\sum_{i=1}^n\sigma^2_{\epsilon}g^2(x_i;\hat{\theta} - \theta^*)}     \right \}
\end{align}
As $|g(x_i;\hat{\theta} - \theta^*)| \le (L_\sigma^{L-1}\|x\|_2) \|g(x_i;\hat{\theta}\ominus\theta^*)\|_{W^L} \le (L_\sigma^{L-1}\|x\|_2)\nu(\hat{\theta} - \theta^*) \le L_\sigma^{L-1}\nu(\hat{\theta} - \theta^*)$, we get
\begin{align}\label{T3_inequality}
\mathbb{P}(T_3 \ge \lambda \nu(\hat{\theta}\ominus\theta^*)) \le 2exp \left \{ -{ n^2\lambda^2\nu^2(\hat{\theta}\ominus\theta^*) \over 2n(L_\sigma^{L-1})^2\sigma^2_{\epsilon}\nu^2(\hat{\theta} - \theta^*)} \right \}
\end{align}
In order for the right hand size of the above inequality less than $n^{-4}$, one can compute that $\lambda \ge L^{L-1}_\sigma\sigma_{\epsilon}\sqrt{{2log(2n^4)\over n}}$. So we set $\lambda = max\{6L^{L-1}_\sigma, 2^LcL_\sigma^{L-1}\sqrt{d}\}\sigma_{\epsilon}\sqrt{logn/n}$ so that $T_3 \le \lambda \nu(\hat{\theta}\ominus\theta^*)$ holds with probability at least $1 - n^{-4}$ (we take such value for $\lambda$ because we need to be consistent with the expectation value estimation below).  To complete the proof, substituting the value of $\lambda$ into~\eqref{empirical_e} gives
\begin{align}\label{fine_overpar_emp}
\|g(\cdot ; \hat{\theta}) - f^*(\cdot)\|^2_n \le C_1H^2(\Vec{m})\|f^*\|^2_{W^L} + max\{12L^{L-1}_\sigma, 2^{L+1}cL_\sigma^{L-1}\sqrt{d}\}(\sigma^2_{\epsilon} + \|f^*\|^2_{W^L})\sqrt{\frac{logn}{n}}
\end{align}
where, again, we have used the inequality $2\sigma_{\epsilon}\|f^*\|_{W^L} \le \sigma^2_{\epsilon} + \|f^*\|^2_{W^L}$.

Note that the norm $\nu^2(\hat{\theta} - \theta^*)$ has been canceled out, so the estimate of $T_3$ doesn't depend on width vector $\Vec{m}$ compared to $T_2$.


To prove~\ref{emp_exp}, one way is  using method totally analogous to the one in~\citep{huiyuan2023Nonasymptotic} (Eq.(14) of Theorem 2). We refer the readers to the proof therein for details. 

We give a second proof, based on the estimation of Gaussian complexity which looks more concise. The motivation is that~\ref{T3_coarse_bound} is closely related to Gaussian complexity. 

For any set ${T}$, we denote $\mathcal{G}(T)$ and $\mathcal{R}(T)$ to be the Gaussian complexity and Rademacher complexity of $T$. It is well known that 
\begin{align}\label{g_r_equivalence}
{\mathcal{G}(T) \over 2\sqrt{logn}} \le \mathcal{R}(T) \le \sqrt{2\over \pi}\mathcal{G}(T)
\end{align}
By~\ref{coarse_overpar_emp}, Lemma~\ref{gen_lem1} and~\ref{g_r_equivalence}, and let $\sigma_i$ be Rademacher random variables and $\epsilon_i$ Gaussian random variables $N(0,1)$, we have
\begin{equation}
\begin{aligned}
E_{\epsilon}T_3 & = \sigma_\epsilon E_{\epsilon} {1 \over n} |\sum_{i=1}^n \epsilon_i(g(x_i;\hat{\theta}) - g(x_i;\theta^*))| \\
    & =  \sigma_\epsilon E_{\epsilon} {1 \over n} |\sum_{i=1}^n \epsilon_i(g(x_i;\hat{\theta} - \theta^*))| \\
    & =  \sigma_\epsilon E_{\epsilon} {1 \over n} sup_{f\in \{g(\cdot ;\hat{\theta} - \theta^*),-g(\cdot ;\hat{\theta} - \theta^*)\}}\sum_{i=1}^n \epsilon_i(f(x_i)) \\
    & \le {2\sigma_\epsilon \over n}\sqrt{logn} E_{\sigma}  sup_{f\in \{g(\cdot ;\hat{\theta} - \theta^*),-g(\cdot ;\hat{\theta} - \theta^*)\}}\sum_{i=1}^n \sigma_if(x_i)   \\
    & = {2\sigma_\epsilon \over n}\sqrt{logn} E_{\sigma} |\sum_{i=1}^n \sigma_i (g(x_i;\hat{\theta} - \theta^*)) | \\
    & \le {2^Lc\sigma_\epsilon \over n}\sqrt{logn} L_\sigma^{L-1}\nu(\hat{\theta}\ominus\theta^*)\sqrt{dn} \\
    & = 2^Lc\sigma_\epsilon L_{\sigma}^{L-1}\nu(\hat{\theta}\ominus\theta^*)\sqrt{dlogn \over n}
\end{aligned}
\end{equation}
Summing together with~\ref{T_1} and~\ref{T_2} and noticing the value of $\lambda$ we chosen, we complete.
\end{proof}

\begin{proof}[Proof of Lemma~\ref{gen_cor0}] Let $b=max_{x\in [-1,1]}\sigma(x)$. By the argument in the proof of Proposition~\ref{gen_lem0}~(e.g. 5.24 in~\citep{wainwright2019highdimensional}), we arrive at
\begin{align}\label{dubley}
\mathbb{E}_\rho \left [ sup_{f\in \mathcal{F}} \left | \frac{1}{n}\sum_{k=1}^n\rho_kf(x_k)\right | \right ] \le \frac{24}{\sqrt{n}}\int_0^{2b}\sqrt{log\mathcal{N}(t;\mathcal{F},\|\cdot \|_{\mathbb{P}_n})}dt
\end{align}
As $\|\cdot \|_{\mathbb{P}_n} \le \| \cdot \|_{\infty}$ , it remains to bound the metric entropy with respect to supremum norm.

Let $g(\cdot ;u_1),g(\cdot ;u_2)\in \mathcal{F}$, we have
\begin{align}
& |g(x;u_1)-g(x;u_2)| \\
& =  |\sigma (u_1x) - \sigma (u_2x) | \\
&  \le  L_\sigma |u_1x - u_2x | \\
& \le  L_\sigma \|u_1-u_2\|_2\|x\|_2 \\
& \le  L_\sigma  \|u_1-u_2\|_2
\end{align}
Therefore, in order to cover $\mathcal{F}$ it  needs to cover $\mathbf{B}^d$ with respect to $l_2$ norm. The volume argument in Lemma 5.7 of~\citep{wainwright2019highdimensional} yields
\begin{align}
\mathcal{N}(\delta, \mathbf{B}^d,\|\cdot\|_2) \le (1+2\delta^{-1})^d
\end{align}
resulting in
\begin{align}
\mathcal{N}(\delta, \mathcal{F}, \|\cdot\|_{\infty}) \le (1 + 2L_\sigma\delta^{-1})^d 
\end{align}
Substituting the above metric entropy estimation to the right hand side of~\ref{dubley}, we get that there exists a universal constant $c$ depending on $\sigma$ and $L_\sigma$ satisfying
\begin{align}
\mathbb{E}_\rho \left [ sup_{f\in \mathcal{F}} \left | \frac{1}{n}\sum_{k=1}^n\rho_kf(x_k)\right | \right ] \le 24\sqrt{\frac{d}{n}}\int_0^{2b}\sqrt{log(1+2L_\sigma t^{-1})}dt = c\sqrt{\frac{d}{n}}
\end{align}
where we have used the finiteness of the integral. In particular, if $\sigma$ is identity function, we have $c$ a universal constant.
\end{proof}
\begin{proof}[Proof of Lemma~\ref{gen_lem1}] Let's first do the cases of $L=2$ and $L=3$ for clarification, then one can generalize it to general $L$ without any difficulty.

When $L=2$, this was already done in~\citep{huiyuan2023Nonasymptotic}. For completeness, we present the proof here.

By definition, $\Vec{m} = (m_1)$
\begin{align}
\mathbb{E}_\rho & sup_{f\in \mathcal{F}(m_1,F)} \left | \sum_{k=1}^n \rho_k f(x_k) \right | \\
                 & = \mathbb{E}_\rho sup_{\nu(\theta) \le F}\left |\sum_{i=1}^n\rho_i \sum_{k=1}^{m_1} a_k\sigma(w^T_kx_i) \right | \\
                 & = \mathbb{E}_\rho sup_{\nu(\theta)\le F} \left |\sum_{k=1}^{m_1} a_k \|w_k\|_2\sum_{i=1}^n \rho_i  \sigma (u_i^Tx_i) \right |, \quad   \|u_i\|_2=1\\                 
                 & = \mathbb{E}_\rho sup_{\nu(\theta)\le F} sup_{\|u\|_2=1}\left |\sum_{k=1}^{m_1} a_k \|w_k\|_2\sum_{i=1}^n \rho_i  \sigma (u^Tx_i) \right | \\
                 & \le F\mathbb{E}_\rho sup_{\|u\|_2=1}\left |\sum_{i=1}^n \rho_i \sigma(u^Tx_i) \right | \\
                 & \le 2cL_\sigma F \sqrt{dn}
\end{align}
where the third  equality holds because there is a maximizer $u$ of $\sum_{i=1}^n \rho_i  \sigma (u^Tx_i)$  over unit sphere, and then one can change the sign of all $a_k$ appropriately so that they become all positive or negative. This will not change $\mu(\theta)$. 
And the last inequality  follows from inequality (4) of Theorem 12 in~\citep{2001Rademacher} and Lemma~\ref{gen_cor0}.

When $L=3$, $\Vec{m} = (m_1,m_2)$, we do manipulation
\begin{align}
\mathbb{E}_\rho & sup_{f\in \mathcal{F}(\Vec{m},F)} \left | \sum_{i=1}^n \rho_i \sum_{k_2=1}^{m_2}w^3_{k_2}\sigma (w^{2}_{k_2k_1}\sum_{k_1=1}^{m_1} \sigma ((w^1_{k_1})^Tx_i) \right | \\
& = \mathbb{E}_\rho  sup_{\nu(\theta)\le F} \left | \sum^n_{i=1}\rho_i \sum_{k_2=1}^{m_2}w^3_{k_2}\sigma (\sum_{k_1=1}^{m_1} w^{2}_{k_2,k_1}\|w^1_{k_1}\|_2\sigma ((u_{k_1})^Tx_i) \right |, \quad \|u_{k_1}\|_2 = 1 \\
& = \mathbb{E}_\rho  sup_{\nu(\theta)\le F} \left | \sum_{i=1}^n \rho_i \sum_{k_2=1}^{m_2} w^3_{k_2}\|w^2_{k_2,k_1}\|_1\|w^1_{k_1}\|_2 \sigma (\sum_{k_1=1}^{m_1} v_{k_2,k_1} \sigma (u_{k_1}^Tx_i)) \right |, \quad \|v_{k_2,k_1}\|_1 = 1 \\
& = \mathbb{E}_\rho  sup_{\nu(\theta)\le F} \left | \sum_{k_2=1}^{m_2} w^3_{k_2}\|w^2_{k2,k1}\|_1\|w^1_{k_1}\|_2 \sum_{i=1}^n \rho_i \sigma (\sum_{k_1=1}^{m_1} v_{k_2,k_1} \sigma (u_{k_1}x_i)) \right | \\
\end{align}
One can then deduce that $v_{k_2,k_1}$ doesn't depend on $k_2$ by using the same argument as for $L=2$ case for the deduction that $u_k$ doesn't depend on $k$. So we abuse the notation a little bit to write $v_{k_2,k_1}$ as $v_{k_1}$  and continue from the last line of the above expression
\begin{align}
 & \mathbb{E}_\rho   sup_{\nu(\theta)\le F} \left | \sum_{k_2=1}^{m_2} w^3_{k_2}\|w^2_{k2,k1}\|_1\|w^1_{k_1}\|_2 \sum_{i=1}^n \rho_i \sigma (\sum_{k_1=1}^{m_1} v_{k_2,k_1} \sigma (u_{k_1}x_i)) \right | \\ 
 & = \mathbb{E}_\rho  sup_{\nu(\theta)\le F} \left | \sum_{k_2=1}^{m_2} w^3_{k_2}\|w^2_{k2,k1}\|_1\|w^1_{k_1}\|_2 sup_{\|v_{k_1}\|_1=1}\sum_{i=1}^n  \rho_i \sigma (\sum_{k_1=1}^{m_1} v_{k_1} \sigma (u_{k_1}x_i)) \right | \\
& \le F \mathbb{E}_\rho sup_{\nu(\theta)\le F}  sup_{\|v_{k_1}\|_1=1} \left | \sum_{i=1}^n \rho_i \sigma (\sum_{k_1=1}^{m_1} v_{k_1} \sigma (u_{k_1}x_i)) \right | \\
& \le 2L_\sigma F \mathbb{E}_\rho  sup_{\nu(\theta)\le F} sup_{\|v_{k_1}\|_1=1} \left | \sum_{i=1}^n \rho_i (\sum_{k_1=1}^{m_1} v_{k_1} \sigma (u_{k_1}x_i)) \right | \\
& = 2L_\sigma F \mathbb{E}_\rho sup_{\nu(\theta)\le F} sup_{\|v_{k_1}\|_1=1} \left | \sum_{k_1=1}^{m_1} v_{k_1} \sum_{i=1}^n \rho_i \sigma (u_{k_1}x_i)  \right |
\end{align}
Again, one can adjust the sign of $v_{k_1}$ to make $u_{k_1}$ independent of $k_1$. So
\begin{align}
& = 2L_\sigma F \mathbb{E}_\rho  sup_{\nu(\theta)\le F} sup_{\|v_{k_1}\|_1=1} \left | \sum_{k_1=1}^{m_1} v_{k_1} \sum_{i=1}^n \rho_i \sigma (u_{k_1}x_i) \right |\\
& = 2L_\sigma F \mathbb{E}_\rho  sup_{\|v_{k_1}\|_1=1}  sup_{\|u\|_2=1} \left | \sum_{k_1=1}^{m_1} v_{k_1} (\sum_{i=1}^n \rho_i \sigma (ux_i)) \right | \\
& = 2L_\sigma F \mathbb{E}_\rho  sup_{\|v_{k_1}\|_1=1}  sup_{\|u\|_2=1} \left | \sum_{k_1=1}^{m_1} v_{k_1} \right | \left | \sum_{i=1}^n \rho_i \sigma (ux_i) \right | \\
& \le 2L_\sigma F \mathbb{E}_\rho   sup_{\|u\|_2=1} \left | \sum_{i=1}^n \rho_i \sigma (ux_i) \right |  \label{88} \\ 
& \le 4cL_\sigma^2F\sqrt{dn}   \label{89}
\end{align}

The preceding argument can be easily generalized to general $L\ge 2$ cases. We omit the proof here because it only involves lengthy and tedious symbols, but the idea is  completely straightforward.
\end{proof}

\begin{proof}[Proof of Lemma~\ref{f_concentration}] By a standard symmetrization argument,
\begin{align}
\mathbb{E}Z_n \le 2 \mathbb{E}_{\rho,x} sup_{f\in \mathcal{F}^*(\Vec{m},1)} \left | {1 \over n}\sum_{i=1}^n \rho_if^2(x_i)  \right |
\end{align}
where $\rho_i$ are independent Rademacher variables. Since $\phi(x)=x^2$ is 2-Lipschitz continuous for $x\in[-1,1]$ and it is easy to see that $sup_{f\in \mathcal{F}^*(\Vec{m},1)}sup_{x\in \mathbf{B}^d}|f(x)| \le 2$, by Lemma 26.9 of~\citep{shalev2014understanding} we have
\begin{align}
\mathbb{E}Z_n \le 2 \mathbb{E}_{\rho,x}sup_{f\in \mathcal{F}^*(\Vec{m},1)}\left |{1\over n}\sum_{i=1}^n \rho_i \phi (f(x_i))   \right | \le 16 \mathbb{E}_{\rho,x}sup_{f\in \mathcal{F}^*(\Vec{m},1)}\left |{1 \over n}\sum_{i=1}^n \rho_i f(x_i)  \right |
\end{align}
Let $\Tilde{f}\in \mathcal{F}(\Tilde{\Vec{m}},1)$ be the $L$-layer neural network that best approximates $f^*$ under the $L_2(\mathbf{B}^d)$-norm in Theorem~\ref{approx1_2}, where $\Tilde{m} \ge n$ elementwise. Thus, $\|\Tilde{f} - f^*\|_{L_2(\mathbf{B}^d)} \le C_1/\sqrt{n}$ for some constant $C_1>0$ depending on $f^*$. By decomposing $f=f - \Tilde{f} + \Tilde{f}$ and noting that $f - \Tilde{f} \in \mathcal{F}(\Vec{m}+\Tilde{\Vec{m}},2)$, we obtain
\begin{align}
\mathbb{E}Z_n & \le 16 \mathbb{E}_{\rho,x}sup_{f\in \mathcal{F}(\Vec{m},1)}\left |{1 \over n}\sum_{i=1}^n \rho_i(f(x_i)-\Tilde{f}(x_i))   \right | + 16\mathbb{E}_{\rho,x}\left | {1\over n}\sum_{i=1}^n \rho_i(\Tilde{f}(x_i) - f^*(x_i))  \right | \\
& \le 16 \mathbb{E}_{\rho,x}sup_{f\in \mathcal{F}(\Vec{m}+\Tilde{\Vec{m}},2)} \left | {1\over n}\sum_{i=1}^n\rho_i f(x_i)  \right | + {16\over n}\sum_{i=1}^n \sqrt{\mathbb{E}_\rho  \rho_i^2}\|\Tilde{f} - f^*\|_2 \\
& \le 16\mathbb{E}_{\rho,x}sup_{f\in \mathcal{F}(\Vec{m}+\Tilde{\Vec{m}},2)} \left |{1\over n}\sum_{i=1}^n\rho_if(x_i)  \right | + {16C_1\over \sqrt{n}} \le {32c2^{L-1}L_\sigma^{L-1}\sqrt{d} + 16C_1 \over \sqrt{n}} \\
& \equiv {C_\mathcal{F} \over \sqrt{n}}
\end{align}
where the last inequality follows from Lemma~\ref{gen_lem1}.
Define
\begin{align}
U = sup_{x\in \mathbf{B}^d}sup_{f\in \mathcal{F}^*(\Vec{m},1)} |f(x)|^2, \xi^2 = sup_{f\in \mathcal{F}^*(\Vec{m},1)}\mathbb{E}|f(x)|^4, K_n = 2U\mathbb{E}Z_n + \xi^2
\end{align}
and note that for $\sqrt{n}\ge C_\mathcal{F}$,
\begin{align}
U \le 2sup_{x\in \mathbf{B}^d}sup_{f\in \mathcal{F}(\Vec{m},1)}(|f(x)|^2 + |f^*(x)|^2) \le 4, \xi^2\le U^2 \le 16, K_n\le {8C_\mathcal{F}\over \sqrt{n}} + 16 \le 24
\end{align}
By Talagrand's concentration inequality~\citep{wainwright2019highdimensional},
\begin{align}
P(Z_n - \mathbb{E}Z_n \ge t) \le 2exp\left (-{nt^2 \over 8eK_n+4Ut}  \right ) \le 2exp\left (-{nt^2 \over 192e+16t} \right )
\end{align}
Note that $-nt^2/(192e+16t) \le -nt/32$ if $t\ge 12e$, and $-nt^2/(192e+16t) \le -nt^2 /(384e)$ otherwise. We then conclude that
\begin{align}
P\left (Z_n \ge {C_\mathcal{F} \over \sqrt{n}} + t   \right ) \le exp\left \{ -{n \over 32}min({t^2\over 12e},t) \right \}
\end{align}
\end{proof}
\begin{proof}[Proof on Theorem~\ref{overpara_gen}] Let $\hat{f}(\cdot) = g(\cdot ; \hat{\theta})$ and $\hat{\Delta} = \hat{f} - f^*$. By the proof in~\ref{overpara_emp} and, in particular, ~\eqref{empirical_e}, if we choose $\lambda = max\{6L^L_\sigma, 2^LcL_\sigma^{L-1}\sqrt{d}\}$ 
then, with probability at least $1-n^{-4}$,
\begin{align}
0 \le \|\hat{f} - f^*\|^2_n \le C_1H^2(\Vec{m})\|f^*\|^2_{W^L} + 4\lambda \nu(\theta^*) - \lambda(\nu(\hat{\theta}) -\nu(\theta^*)) 
\end{align}
for some constant $C_1>0$. Since $\nu(\theta^*) \le \|f^*\|_{W^L}$, we further obtain
\begin{align}
\lambda \nu(\hat{\theta}) \le 5\lambda \nu(\theta^*) + C_1H^2(\Vec{m})\|f^*\|^2_{W^L}
\end{align}
If
\begin{align}
H(\Vec{m}) \le \sqrt{\frac{max\{6L^L_\sigma, 2^{L}cL_\sigma^{L-1}\sqrt{d}\}}{C_1}}
\end{align}
then
\begin{align}
\nu(\hat{\theta}) \le 5\nu(\theta^*) + \|f^*\|_{W^L} \le 6\|f^*\|_{W^L}
\end{align}
By~\eqref{s_path_norm} and the homogeneity of ReLU, the path enhanced scaled variation norm of $\hat{f}/\nu(\hat{\theta})$ is exactly 1. Also, by definition, the $W^L$-norm of $f^*/(6\|f^*\|_{W^L})$ is smaller than 1. Thus, the event
\begin{align}
    {\hat{\Delta}\over 6\|f^*\|_{W^L}} = {\hat{f}\over 6\|f^*\|_{W^L}} - {f^*\over 6\|f^*\|_{W^L}} \in \mathcal{F}^*(\Vec{m},1)
\end{align}
holds with probability at least $1-n^{-4}$.

Now, conditioning on the event $\{\hat{\Delta}/(6\|f^*\|_{W^L})\in \mathcal{F}^*(\Vec{m},1)   \}$, applying Lemma~\ref{f_concentration} with $t=8\sqrt{6elogn/n} < 12e$ yields
\begin{align}
\|\hat{\Delta}\|^2_2 \le \|\hat{\Delta}\|^2_n + {36 \over \sqrt{n}}C_\mathcal{F}\|f^*\|^2_{W^L} + 288\|f^*\|^2_{W^L}\sqrt{{6elogn\over n}}
\end{align}
with probability at least $1-n^{-1}$. By~\ref{overpara_emp}, with probability at least $1-n^{-4}$ we have 
\begin{align}
\|\hat{\Delta}\|^2_n = \|\hat{f} - f^*\|^2_n \le C_3 & \left \{ H^2(\Vec{m})\|f^*\|^2_{W^L} \right.\\
&\left. + max\{12L^L_\sigma, 2^{L+1}cL_\sigma^{L-1}\sqrt{d}\}(\sigma^2_{\epsilon} + \|f^*\|^2_{W^L}) \sqrt{\frac{logn}{n}}  \right \}
\end{align}
for some constant $C_3>0$. Combining these pieces, we conclude that
\begin{align}
\|\hat{f} - f^*\|^2_2 \le C_4 & \left \{H^2(\Vec{m}) \|f^*\|^2_{W^L} \right.\\
&\left. + max\{12L^L_\sigma, 2^{L+1}cL_\sigma^{L-1}\sqrt{d}\}(\sigma^2_{\epsilon} + \|f^*\|^2_{W^L})\sqrt{\frac{logn}{n}}  \right \}
\end{align}
with probability at least $1-O(n^{-1})$ for some constant $C_4 > 0$.

The proof for~\eqref{gen_exp} is totally analogous to the one in~\citep{huiyuan2023Nonasymptotic}, to reduce the size of this paper we refer the readers to the proof therein for details.
\end{proof}
\subsection{Proof on results in section~\ref{sec7}}
\begin{proof}[Proof of Lemma~\ref{inf-metric-entropy}] Let's make an induction on $L$. 
$L=2$ is established in~\citep{huiyuan2023Nonasymptotic}. We take their proof for motivation and clarification of the ideas.

Let $\Vec{m}={m}$, and let $g(\cdot ;\theta_1), g(\cdot ; \theta_2) \in \mathcal{F}(m,1)$ be two two-layer networks, such that $\theta_1 = (a^1_1,\dots,a^1_m, (w^1_1)^T, \dots, (w^1_m)^T)^T$ and $\theta_2 = (a^2_1,\dots,a^2_m,(w^2_1)^T,\dots,(w^2_m)^T)^T$. We can assume without loss of generality that $\|w^j_i\|_2 = 1$ for all $i,j$, in which case $g(x;\theta_j) \in \mathcal{F}(m,1)$ is equivalent to $\sum_{k=1}^m |a^j_k|\le 1$ for both $j$.  We then have
\begin{align}
& |g(\Tilde{x};\theta_1)-g(\Tilde{x};\theta_2)| \\
&  = |\sum_{k=1}^ma^1_k\sigma(\Tilde{x}^Tw^1_k) - \sum_{k=1}^ma^2_k\sigma(\Tilde{x}^Tw^2_k)| \\
& \le |\sum_{k=1}^m(a^1_k-a^2_k)\sigma(\Tilde{x}^Tw^1_k)| + |\sum_{k=1}^ma^2_k(\sigma(\Tilde{x}^Tw^1_k) - \sigma(\Tilde{x}^Tw^2_k))| \\
& \le \sqrt{2}L_\sigma\sum_{k=1}^m|a^1_k-a^2_k\|\Tilde{x}^Tw^1_k| + \sqrt{2}L_\sigma\sum_{k=1}^m|a^2_k|max_{1\le k\le m}\|w^1_k-w^2_k\|_2 \\
& \le \sqrt{2}L_\sigma\sum_{k=1}^m|a^1_k - a^2_k| + \sqrt{2}L_\sigma max_{1\le k \le m}\|w^1_k-w^2_k\|_2
\end{align}
Such coefficient is due to $\|\Tilde{x}\|_2=\sqrt{2}$. We denote the unit $l_1$-ball in $R^n$ by $B^n_1(1)$. To cover $\mathcal{F}(m,1)$ with respect to $\|\cdot\|_{\infty}$, we need only cover $B^m_1(1)$ with respect to $\|\cdot\|_1$ and $m$ many $\mathbf{B}^d$ with respect to $\|\cdot\|_2$ simultaneously. The volume argument in Lemma 5.7 of~\citep{wainwright2019highdimensional} yields
\begin{align}
\mathcal{N}(\delta, B^m_1,\|\cdot\|_1)\le (1+2\delta^{-1})^m, \mathcal{N}(\delta, \mathbf{B}^d,\|\cdot\|_2) \le (1+2\delta^{-1})^d
\end{align}
that results in 
\begin{align}
log\mathcal{N}(\delta, \mathcal{F}(m,1),\|\cdot\|_{\infty}) \le (d+1)mlog(1+4\sqrt{2}L_\sigma\delta^{-1}).
\end{align}
When $L=3$, letting $\Vec{m} = \{m_1,m_2\}$ and using the notation above, now let 
\begin{align}
\theta_1 & = (a^1_1,\dots,a^1_{m_2}, b^1_{1,1},\dots, b^1_{m_2,m_1}, (w^1_{1})^T,\dots, (w^1_{m_1})^T)^T \\
\theta_2 & = (a^2_1,\dots,a^2_{m_2}, b^2_{1,1},\dots, b^2_{m_2,m_1}, (w^2_{1})^T,\dots, (w^2_{m_1})^T)^T. 
\end{align}
By changing the scales, we can also assume without loss of generality that  $\|w^j_i\|_2 = 1$ for all $i,j$. By changing the scales further, we can assume $\sum_{i_2=1}^{m_1} |b^j_{i_1,i_2}|=1$ for all $i_1,j$. Thus $g(x;\theta_j) \in \mathcal{F}(\Vec{m},1)$ is equivalent to $\sum_{i=1}^{m_2}|a^j_i| \le 1$ for both $j$.
We then have
\begin{align}
& |g(\Tilde{x};\theta_1)-g(\Tilde{x};\theta_2)| \\
& = |\sum_{k=1}^{m_2}a^1_k\sigma (\sum_{j=1}^{m_1}b^1_{k,j}\sigma (\Tilde{x}^Tw^1_j)) - \sum_{k=1}^{m_2}a^2_k\sigma (\sum_{j=1}^{m_1}b^2_{k,j}\sigma (\Tilde{x}^Tw^2_j))| \\
& \le   |\sum_{k=1}^{m_2}a^1_k\sigma (\sum_{j=1}^{m_1}b^1_{k,j}\sigma (\Tilde{x}^Tw^1_j)) - \sum_{k=1}^{m_2}a^2_k\sigma (\sum_{j=1}^{m_1}b^1_{k,j}\sigma (\Tilde{x}^Tw^1_j))| \\
& + |\sum_{k=1}^{m_2}a^2_k\sigma (\sum_{j=1}^{m_1}b^1_{k,j}\sigma (\Tilde{x}^Tw^1_j)) - \sum_{k=1}^{m_2}a^2_k\sigma (\sum_{j=1}^{m_1}b^2_{k,j}\sigma (\Tilde{x}^Tw^1_j))| \\
& + |\sum_{k=1}^{m_2}a^2_k\sigma (\sum_{j=1}^{m_1}b^2_{k,j}\sigma (\Tilde{x}^Tw^1_j)) - \sum_{k=1}^{m_2}a^2_k\sigma (\sum_{j=1}^{m_1}b^2_{k,j}\sigma (\Tilde{x}^Tw^2_j))|\\
& \le L_\sigma \sum_{k=1}^{m_2}|a^1_k-a^2_k| \sum_{j=1}^{m_1}|b^1_{k,j}||\sigma (\Tilde{x}^Tw^1_j)| \\
& + L_\sigma \sum_{k=1}^{m_2}|a^2_k|\sum_{j=1}^{m_1}|b^1_{k,j} - b^2_{k,j}||\sigma (\Tilde{x}^Tw^1_j)| \\
& + L_\sigma \sum_{k=1}^{m_2}|a^2_k| \sum_{j=1}^{m_1}|b^2_{k,j}| \|\Tilde{x}^Tw^1_j - \Tilde{x}^Tw^2_j\|_2 \\
& \le \sqrt{2}L_\sigma \sum_{k=1}^{m_2}|a^1_k-a^2_k| +  \sqrt{2}L_\sigma \sum_{k=1}^{m_2}\sum_{j=1}^{m_1}|b^1_{k,j} - b^2_{k,j}| +  \sqrt{2}L_\sigma max_{1\le j \le m_1}\|w^1_j - w^2_j\|_2 \\
\end{align}
To cover $\mathcal{F}(\Vec{m},1)$ with respect to $\|\cdot\|_{\infty}$, we need only cover $B^{m_2}_1(1)$ with respect to $\|\cdot\|_1$, $m_2$ number of $B^{m_1}_1(1)$ with respect to $\|\cdot\|_1$ and $m_1$ many $\mathbf{B}^d$ with respect to $\|\cdot\|_2$ simultaneously. Again, using volume argument we yield

\begin{align}
log\mathcal{N}(\delta, \mathcal{F}(m,1),\|\cdot\|_{\infty}) \le (dm_1+m_1m_2+m_2)log(1+4\sqrt{2}L_\sigma\delta^{-1}).
\end{align}
The above argument can be readily generalized  to general $L$ without much difficulty.  Therefore,  for $\Vec{m} = (m_1,m_2,\dots, m_{L-1})$, we obtain 
\begin{align}
log\mathcal{N}(\delta, \mathcal{F}(\Vec{m},1),\|\cdot\|_{\infty}) \le (dm_1+m_1m_2+m_2m_3+\cdots+ m_{L-1})log(1+4\sqrt{2}L_\sigma\delta^{-1}).
\end{align}
\end{proof}
\begin{proof}[Proof of Theorem~\ref{underpara_emp}] It remains to bound $\mathcal{N}_{\Vec{m}+\Vec{m}}(\delta_n) \equiv \mathcal{N}(\delta_n,\mathcal{F}(\Vec{m}+\Vec{m},1),\|\cdot\|_n)$. By Lemma~\eqref{inf-metric-entropy}, 
\begin{align}
log\mathcal{N}_{\Vec{m}+\Vec{m}}(\delta_n) & \le log \mathcal{N}(\delta_n, \mathcal{F}(\Vec{m}+\Vec{m},1),\|\cdot\|_\infty) \\
& \le (2dm_1+4m_1m_2+4m_2m_3+\cdots+ 2m_{L-1})log(1+4\sqrt{2}L_\sigma\delta^{-1}) 
\end{align}
Then we choose $\delta_n = n^{-1}L_\sigma(2dm_1+4m_1m_2+4m_2m_3 + \cdots + 2m_{L-1})dlogn$, and take 
$\Tilde{p}=n^{2L_\sigma(2dm_1+4m_1m_2+4m_2m_3 + \cdots + 2m_{L-1})}$. And we go over the proof of Theorem S.1 in~\citep{huiyuan2023Nonasymptotic}
with these new expressions. Everything in the proof will hold and some constants in Theorem depend on $L$ and $L_\sigma$   now. To reduce the size of this paper, we omit the complete proof, the reader can look closely into supplementary materials in~\citep{huiyuan2023Nonasymptotic}.

Similar to the proof of the Theorem of the generalization error in the overparametrised regime~\ref{overpara_gen}, we bound $T_1,T_2$ and $T_3$. We recall that $g(x;\hat{\theta}\ominus\theta^*)$ is a $L$-layer network with widths at most $\Vec{m}+\Vec{m}$. 
We still take the following bounds for $T_1$ and $T_2$
\begin{align}
    T_1 &\le 2\lambda \nu(\theta^*) \label{underpar_T1} \\
    T_2 \label{underpar_T2} &\le C_1 H(\Vec{m})^2\|f^*\|_{W^L}^2m^{-1} 
\end{align}
for some constant $C_1 > 0$. Define $\hat{\sigma_\epsilon}=\sqrt{n^{-1}\sum_{i=1}^n\epsilon_i^2}$. For $T_3$, since $g(x;\hat{\theta}\ominus\theta^*)/\nu(g(x;\hat{\theta}\ominus\theta^*))\in \mathcal{F}(\Vec{m}+\Vec{m},1)$, we obtain
\begin{align}
    \frac{T_3-\delta_n\nu(\Delta^*)\hat{\sigma_\epsilon}}{\|\Delta^*\|_n + \delta_n\nu(\Delta^*)} = \frac{n^{-1}|\sum_{i=1}^n\epsilon_i\Delta^*(x_i)/\nu(\Delta^*)|-\delta_n\hat{\sigma_\epsilon}}{\|\Delta^*/\nu(\Delta^*)\|_n+\delta_n} \le V_{\delta_n}(\epsilon)
\end{align}
Noting that $V_{\epsilon_n}(\epsilon)$ is a Lispchitz continuous function of independent Gaussian variables and applying Theorem 2.26 in~\cite{wainwright2019highdimensional} yields
\begin{align}
    \mathbb{P}(|V_{\delta_n}(\epsilon) - \mathbb{E}V_{\delta_n}(\epsilon)|\ge t) \le 2 exp\{-\frac{nt^2}{2} \}
\end{align}
Similar to Lemma S1 in~\cite{huiyuan2023Nonasymptotic}, we have $\mathbb{E}V_{\delta_n}(\epsilon) \le 2\sigma_\epsilon\sqrt{log\mathcal{N}_{\Vec{m}+\Vec{m}}(\delta_n)/n}$. This is a straightforward generalization and we omit the proof. Choosing $t=2\sigma_\epsilon\sqrt{log\Tilde{p}/n}$ for some $\Tilde{p}\ge \mathcal{N}_{2m}(\delta_n)$ to be specified later, we have, with probability at least $1-2\Tilde{p}^{-2\sigma_\epsilon^2}$,
\begin{align}
    V_{\delta_n}(\epsilon) < 4\sigma_\epsilon\sqrt{\frac{log\Tilde{p}}{n}}
\end{align}
Similarly, $\mathbb{P}(\hat{\epsilon} \ge \sigma_\epsilon + t) \le exp(-nt^2/2)$ as $n^{-1/2}\|\epsilon\|_2$ is also $n^{-1/2}$-Lipschitz continuous and $n^{-1/2}\mathbb{E}\|\epsilon\|_2\le \sqrt{n^{-1}\mathbb{E}\epsilon^T\epsilon} = \sigma_\epsilon$. Choosing $t=\sigma_\epsilon$, we have, with probability at least $1-exp(-n\sigma_\epsilon^2/2)$,
\begin{align}
    \hat{\sigma_\epsilon} < 2\sigma_\epsilon
\end{align}
Combining these pieces gives
\begin{align}\label{underpar_T3}
    T_3 \le 4\sigma_\epsilon \sqrt{\frac{log\Tilde{p}}{n}}(\|\Delta^*\|_n + \delta_n\nu(\Delta^*)) + 2\sigma_\epsilon\delta_n\nu(\Delta^*)
\end{align}
with probability at least $1-2\Tilde{p}^{-2\sigma_\epsilon^2}-exp(-\sigma_\epsilon^2n/2)$. Furthermore, combining the estimation of $T_1$,$T_2$ and $T_3$~\ref{underpar_T1},~\ref{underpar_T2} and~\ref{underpar_T3} yields
\begin{align}
    \frac{1}{2}\|g(\cdot ;\hat{\theta}) - f^*\|_n^2 & \le C_1H(\Vec{m})^2 \|f^*\|_{W^L}m^{-1} + 4\sigma_\epsilon\sqrt{\frac{log\Tilde{p}}{n}}\|\Delta^*\|_n \\
    & +\{(2\sqrt{\frac{log\Tilde{p}}{n}}+1)2\sigma_\epsilon\delta_n - \lambda  \}\nu(\Delta^*) + 2\lambda \nu(\theta^*)
\end{align}
Choosing $\lambda \ge (2\sqrt{n^{-1}log\Tilde{p}}+1)4\sigma_\epsilon\delta_n$, we have 
\begin{align}~\label{underpara_emp_proof_1}
    \frac{1}{2}\|g(\cdot ;\hat{\theta}-f^*\|_n^2 & \le C_1H(\Vec{m})^2\|f^*\|_{W^L}^2m^{-1} + 2\lambda\nu(\theta^*) \\
    & +4\sigma_\epsilon\sqrt{\frac{log\Tilde{p}}{n}}(\|g(\cdot ; \hat{\theta} - f^* \|_n + \|g(\cdots \theta^*)-f^*\|_n
\end{align}
where we have used the triangle inequality to bound $\|\Delta^*\|_n$. Using the inequality $ab\le a^2 + b^2/4$, we obtain
\begin{align}~\label{underpara_emp_proof_2}
    4\sigma_\epsilon\sqrt{\frac{log\Tilde{p}}{n}}\|g(\cdot ;\hat{\theta}-f^*\|_n \le 16\sigma_\epsilon^2\frac{log\Tilde{p}}{n} + \frac{1}{4}\|g(\cdot;\hat{\theta} - f^*\|_n^2 \\
    4\sigma_\epsilon\sqrt{\frac{log\Tilde{p}}{n}}\|g(\cdot ;{\theta^*}-f^*\|_n \le 16\sigma_\epsilon^2\frac{log\Tilde{p}}{n} + \frac{1}{4}\|g(\cdot;{\theta^*} - f^*\|_n^2
\end{align}
Substituting~\ref{underpara_emp_proof_2} into~\ref{underpara_emp_proof_1} and noting that $\nu(\theta^*)\le \|f^*\|_{W^L}$ yields
\begin{align}
    \|g(\cdots ; \hat{\theta} - f^* \|_n^2 \le 6C_1H(\Vec{m})^2\|f^*\|_{W^L}^2m^{-1} + 128\sigma_\epsilon^2\frac{log\Tilde{p}}{n} + 8\lambda\|f^*\|_{W^L}
\end{align}
with probability at least $1-2\Tilde{p}^{-2\sigma_\epsilon^2} - exp(-\sigma_\epsilon^2n/2)$.
\end{proof}

It remains to bound $\mathcal{N}_{\Vec{m}+\Vec{m}}(\delta_n) \equiv \mathcal{N}(\delta_n,\mathcal{F}(\Vec{m}+\Vec{m},1), \|\cdot\|_n)$. Since a $\delta_n$-covering of $\mathcal{F}(\Vec{m}+\Vec{m},1)$ with respect to $\|\cdot\|_{\infty}$ is always a $\delta_n$-covering with respect to $\|\cdot\|_n$, by Lemma~\ref{inf-metric-entropy} we have
\begin{align}
    log\mathcal{N}_{\Vec{m}+\Vec{m}}(\delta_n) & \le log \mathcal{N}(\delta_n, \mathcal{F}(\Vec{m}+\Vec{m},1),\|\cdot\|_\infty) \\
& \le (2dm_1+4m_1m_2+4m_2m_3+\cdots+ 2m_{L-1})log(1+4\sqrt{2}L_\sigma\delta^{-1}) 
\end{align}
Recall that $\delta_n =  n^{-1}L_\sigma(4m_1m_2+4m_2m_3 + \cdots + 2m_{L-1})dlogn < 1$, and we have
\begin{align}
    log(1+4\sqrt{2}L_\sigma\delta^{-1}) \le log(1+L_\sigma\frac{4\sqrt{2}n}{2dm_1+4m_1m_2+4m_2m_3 + \cdots + 2m_{L-1}}) \le 2log(nL_\sigma)
\end{align}
Now take $\Tilde{p} = (nL_\sigma)^{2(2dm_1+4m_1m_2+4m_2m_3+\cdots+ 2m_{L-1})}$, and by the assumption that $\delta_n \le 1$ we have
\begin{align}
    \sqrt{\frac{log\Tilde{p}}{n}} \le \sqrt{\frac{2(2dm_1+4m_1m_2+4m_2m_3+\cdots+ 2m_{L-1})log(nL_\sigma)}{n}} \le 2
\end{align}
when $n$ is sufficient large. In order for $\lambda \ge (2\sqrt{log\Tilde{p}/n}+1)4\sigma_\epsilon\delta_n$ to hold, setting $\lambda = 20\sigma_\epsilon max(\delta_n,H(\Vec{m}))$ is sufficient. With this choice of $\lambda$, we conclude that~\ref{underpara_emp} holds with probability at least $1-2\Tilde{p}^{-2\sigma_\epsilon^2} -exp(-\sigma_\epsilon^2n/2)$. This completes the proof of~\ref{underpara_emp} by noting that
\begin{align}
    -2\sigma_\epsilon^2log\Tilde{p} = -4\sigma_\epsilon^2((2dm_1+4m_1m_2+4m_2m_3+\cdots+ 2m_{L-1}))log(nL_\sigma) \le -4\sigma_\epsilon^2log(nL_\sigma)
\end{align}
and $exp(-\sigma_\epsilon^2(nL_\sigma)/2) = o(n^{-C_2})$ for any constant $C_2 > 0$.
\begin{proof}[Proof of Lemma~\ref{underpara_gen_lem1}]
It remains to bound the $(1/n)$-covering of $\mathcal{B}_{\mathcal{F}}(\gamma)$ with respect to $L_{\infty}(\mathbf{B}^d)$-norm. By the deductions in~\citep{huiyuan2023Nonasymptotic}, we get
\begin{align}
log M & \le log \mathcal{N}(1/(2n)^{3/2},\mathcal{F}(\Vec{m},1),\|\cdot\|_{\infty}) \\
& (dm_1+m_1m_2+m_2m_3+\cdots+ m_{L-1})log(1+4\sqrt{2}L_\sigma (2n)^{3/2}) \\
& \le (dm_1+m_1m_2+m_2m_3+\cdots+ m_{L-1})log(18L_\sigma (n)^{3/2}) \\
& \le  (dm_1+m_1m_2+m_2m_3+\cdots+ m_{L-1})log(18L_\sigma (n)^{3/2}) \\
& \le  (dm_1+m_1m_2+m_2m_3+\cdots+ m_{L-1})log(18L_\sigma) +  (dm_1+m_1m_2+m_2m_3+\cdots+ m_{L-1})3/2log(n) \\
& \le 4(dm_1+m_1m_2+m_2m_3+\cdots+ m_{L-1})log(18L_\sigma)logn 
\end{align}
Then we go over the proof of Lemma S.2 in~\citep{huiyuan2023Nonasymptotic} and we complete. Some constants in the Theorem depend on $L$ and  $L_\sigma$  now. For self-contained purpose, we provide the complete proof. 

First note that $sup_{x\in \mathbb{B}^d}sup_{f\in \mathcal{F}^*(\Vec{m},1)} |f(x)| \le 2$. By a standard symmetrization argument, we have
\begin{align}
    \mathbb{E}Z_n(\gamma) \le 16\mathbb{E}_{\rho,x}sup_{f\in \mathcal{B}_\mathcal{F}(\gamma)} \frac{1}{n}|\sum_{i=1}^n\rho_if(x_i)|
\end{align}
where $\rho_i$ are independent Rademacher variables. Let $\{g_j\}_{J=1}^M$ be a minimal $(1/n)$-covering of $\mathcal{B}_\mathcal{F}(\gamma)$ with respect to the $L_\infty(\mathbb{B}^d)$-norm. For a given $f\in \mathcal{B}_\mathcal{F}(\gamma)$, let $g_{j^*}$ be the function closest to $f$. By the triangle inequality, we obtain
\begin{align}
|\sum_{i=1}^n\rho_if(x_i)| &\le |\sum_{i=1}^n\rho_i(f(x_i)-g_{j^*}(x_i))| + max_{1\le j \le M} |\sum_{i=1}^n\rho_ig_j(x_i)| \\
& \le 1 + max_{1\le j \le M}|\sum_{i=1}^n\rho_i\frac{g_j(x_i)}{\|g_j\|_n}|\sqrt{max_{1\le j \le M}\|g_j\|_n^2} \\
& \equiv 1 + I_1\sqrt{I_2}
\end{align}
Since $\rho_i$ are sub-Gaussian with $\mathbb{E}e^{\gamma \rho_i}\le e^{\gamma^2/2}$ for all $\gamma$, it follows from Lemma S.7 in~\cite{huiyuan2023Nonasymptotic} that
\begin{align}
    \mathbb{E}I_1 \le \sqrt{2nlog(2M)}
\end{align}
Moreover, since $g_j\in \mathcal{B}_\mathcal{F}(\gamma)$, we have $max_j \|g_j\|_2\le \gamma$, and thus
\begin{align}
    I_2 \le \gamma^2 max_j|\|g_j\|_n^2 - \|g_j\|_2^2|
\end{align}
Note that $max_j sup_x |g_j(x)| \le 2$ and $Var(|g_j(x)|^2) \le \mathbb{E}|g_J(x)|^4 \le 4\gamma^2$. Apply Bernstein's inequality~\cite{} and the union bound
\begin{align}
    \mathbb{P}(max_J |\|g_j\|_n^2 - \|g_j\|_2^2| \ge t\gamma) \le 2Mexp(-\frac{nt^2}{16t/(3\gamma)+8}) \le 2Mexp(-\frac{nt\gamma}{6})
\end{align}
for $t\ge 12\gamma$. Using the identity $\mathbb{E}X = \int_0^{\infty}\mathbb{P}(X\ge t)dt$ for nonnegative $X$ gives, for $M\ge 4$,
\begin{align}
    \mathbb{E}max_j|\|g_j\|_n^2 - \|g_j\|_2^2|/\gamma & \le \int_0^{12\gamma}1dt + \int_{12\gamma}^{\infty}2Mexp(-\frac{nt\gamma}{6}dt \\
    & 12\gamma + \frac{12M}{n\gamma}e^{-2n\gamma^2} \le 15\gamma
\end{align}
since $\gamma \ge \sqrt{logM/(2n)}$, which is due to the assumption $\gamma \ge \sqrt{2(dm_1+m_1m_2+m_2m_3+\cdots+ m_{L-1})log(18L_\sigma)logn/n}$ and the fact that $logM \le 4(dm_1+m_1m_2+m_2m_3+\cdots+ m_{L-1})log(18L_\sigma)logn$ to be shown later. Combining these pieces, by Jensen's inequality we have
\begin{align}\label{proof_underpara_EZ}
    \frac{1}{n}\mathbb{E}_{\rho,x}(I_1\sqrt{I_2}) = \frac{1}{n}\mathbb{E}_x{\mathbb{E}_\rho(I_1 | (x_i)_i)\sqrt{I_2}}\le 8\gamma \sqrt{\frac{logM}{n}}
\end{align}
It remains to find a $(1/n)$-covering of $\mathcal{B}_\mathcal{F}(\gamma)$ with respect to the $L_{\infty}(\mathbb{B}^d)$-norm. Consider a $(1/n^{3/2})$-covering of $\mathcal{F}^*(\Vec{m},1)$ with respect to the $L_{\infty}(\mathbb{B}^d)$-norm, which we denote by $\{f_j\}_{j=1}^{M^{\prime}}$. In the following, we prove that $\{f_j/max(\|f_j\|_2/\gamma,1) \}_{j=1}^{M^{\prime}}$ is a $(2/n)$-covering of $\mathbb{B}_\mathcal{F}(\gamma)$.

Since $\mathcal{B}_\mathcal{F}(\gamma) \subset \mathcal{F}^*(\Vec{m},1)$, for any $f\in \mathcal{B}_\mathcal{F}(\gamma)$ there exists some $g_j \subset \{f_j\}_{j=1}^{M^{\prime}}$ such that $\|f_{g_j}\|_{\infty} \le 1/n^{3/2}$, and hence
\begin{align}\label{proof_underpara_EZ_2}
    |\|g_j\|_2 - \|f\|_2| \le \|g_j-f\|_2 \le \frac{1}{n^{3/2}}
\end{align}
If $\|g_j\|_2 \le \gamma$, then $g_j$ also belongs to$ \{ f_j / max(\|f_j\|_2/\gamma, 1)\}_{j=1}^{M^{\prime}}$. If $\|g_j\|_2 > \gamma$, then by the triangle inequality,~\ref{proof_underpara_EZ_2}, and the assumption that $\gamma\ge 1/\sqrt{n}$ we have 
\begin{align}
    |f-\frac{\gamma g_j}{\|g_j\|_2}| \le |f-g_j| + \frac{|\|g_j\|_2-\gamma|}{\|g_j\|_2}|g_j| \le \frac{1}{n^{3/2}} + \frac{n^{-3/2}}{1/\sqrt{n}} \le \frac{2}{n}
\end{align}
where we have used the fact that $0 \le \|g_j\|_2 - \gamma \le \|g_j\|_2 - \|f\|_2$. 
Substituting the above calculation of metric entropy in the beginning of this proof into~\ref{proof_underpara_EZ} yields
\begin{align}
    \frac{1}{n}\mathbb{E}_{\rho,x}(I_1\sqrt{I_2}) \le 16\gamma \sqrt{\frac{(dm_1+m_1m_2+m_2m_3+\cdots + m_{L-1})log(18L_\sigma)logn}{n}}
\end{align}
and
\begin{align}
    \mathbb{E}Z_n(\gamma) & \le 16 ( \frac{1}{n}+16\gamma \sqrt{\frac{(dm_1+m_1m_2+m_2m_3+\cdots + m_{L-1})log(18L_\sigma)logn}{n}}) \\
    & \le 272\gamma\sqrt{\frac{(dm_1+m_1m_2+m_2m_3+\cdots + m_{L-1})log(18L_\sigma)logn}{n}}
\end{align}
where we have used the fact that $1/n \le \gamma \sqrt{(dm_1+m_1m_2+m_2m_3+\cdots + m_{L-1})log(18L_\sigma)logn/n}$. 
By the calculation of metric entropy in the beginning of this proof, we complete.

\end{proof}

\begin{proof}[Proof of Theorem~\ref{underpara_gen}]
 Modifying appropriately according to Lemma~\ref{inf-metric-entropy} and~\ref{underpara_gen_lem1}, the proof follows completely the same steps as the proof of Theorem 4 in~\citep{huiyuan2023Nonasymptotic}. Some constants now depend on $L$ and $ L_\sigma$. Again, the readers can refer to their proofs. 

 As we mentioned before, the readers should read Chapter14 of~\cite{wainwright2019highdimensional} so that they can quickly understand the proof of this Theorem.

\end{proof}

\begin{proof}[Proof of Theorem~\ref{para_gen}]
It is quite straightforward.
\end{proof}
\begin{proof}[Proof of Theorem~\ref{para_lower_bound}]
By the third property of generalized Barron spaces~\ref{property_g_barron} we know that $W^2 \subset W^L$, it is shown in~\citep{huiyuan2023Nonasymptotic} that $inf_{\hat{f}}sup_{f^*\in W^2}\|\hat{f} - f^*\|^2_2 \ge \frac{C}{\sqrt{nlogn}}$, where $\hat{f}$ is any estimator, thereby immediately implying the conclusion. 

We remark that the truth of the third property of generalized Barron spaces~\ref{property_g_barron} needs the special characteristic of ReLU activations (positive homogeneity), which doesn't hold for general Lipschitz activations.
\end{proof}
\subsection{Proof on results in section~\ref{sec8}}

We first recall some terminologies. Let $\Tilde{L}(g(x;\hat{\theta}),f^*(x)):= E_{y}{L}(g(x;\hat{\theta}),y)$, the expectation over $y$ conditioned on $x$. We have suggested using $\mathbb{D}(g(\cdot ;\hat{\theta}), f^*(\cdot)):=\mathbb{E}_x\Tilde{L}(g(x;\hat{\theta}),f^*(x))-\mathbb{E}_x\Tilde{L}(f^*(x),f^*(x))$ as the measure of the difference between $g(\cdot;\hat{\theta})$ and $f^*$ (without using data $x,y$). For a given training data $(x_i,y_i), i=1,2, \dots,n$, we similarly have the empirical version $\Tilde{L}_n(g(x;\hat{\theta}),f^*)-\Tilde{L}_n(f^*,f^*)$. For example, for regression problem~\ref{problem}, $L$ is MSE, and $\mathbb{D}(g(\hat{\theta}),f^*)=\int_x\|(g(x;\hat{\theta}) - f^*(x)\|_2^2dx + \sigma^2$; for binary classification problem, $L$ is $y logp(x) + (1-y) log(1-p(x))$, and $\Tilde{L}= p^*logp(x) + (1-p^*)log(1-p(x))$, and $\mathbb{D}(g(\hat{\theta}),f^*)=\int_\mathbf{B^d}  p^*logp(x) + (1-p^*)log(1-p(x))d\mu$. Both agree with our common practice, possibly up to a constant. The second term $\Tilde{L}(f^*,f^*)$ plays a role as a normalization factor, so that $\mathbb{D}(f^*,f^*)=0$ which is required. 
\begin{proof}[Proof of Theorem~\ref{emp_general_loss}]
\begin{align}
& \frac{1}{n}\sum_{i=1}^n L(g(x_i;\hat{\theta}),y_i) + \lambda\mu(\hat{\theta})  \le \frac{1}{n}\sum_{i=1}^n L(g(x_i;{\theta}^*),y_i) + \lambda\mu({\theta}^*) \\
   & \frac{1}{n}\sum_{i=1}^n \Tilde{L}(g(x_i;\hat{\theta}),f^*(x_i)) + \frac{1}{n}\sum_{i=1}^n L(g(x_i;\hat{\theta}),y_i) - \mathbb{E}_{y_i} L(g(x_i;\hat{\theta}),y_i) + \lambda\mu(\hat{\theta}) \\
   & \le
\frac{1}{n}\sum_{i=1}^n \Tilde{L}(g(x_i;{\theta}^*),f^*(x_i)) + \frac{1}{n}\sum_{i=1}^n L(g(x_i;{\theta}^*),y_i) - \mathbb{E}_{y_i} L(g(x_i;{\theta}^*),y_i) + \lambda\mu({\theta}^*) 
\end{align}
Subtracting $\Tilde{L}(f^*,f^*)$ on both side and rearranging terms, one gets
\begin{align}
    \frac{1}{n}\sum_{i=1}^n \Tilde{L}(g(x_i;\hat{\theta}),f^*(x_i)) - \Tilde{L}(f^*(x_i),f^*(x_i)) & \le \frac{1}{n}\sum_{i=1}^n \|\Tilde{L}(g(x_i;{\theta}^*),f^*(x_i)) - \Tilde{L}(f^*(x_i),f^*(x_i))\|  \\ 
    & + \frac{1}{n}|\sum_{i=1}^n L(g(x_i;{\theta}^*),y_i) 
     - \mathbb{E}_{y_i} L(g(x_i;{\theta}^*),y_i) \\
     & - (\sum_{i=1}^n L(g(x_i;\hat{\theta}),y_i) - \mathbb{E}_{y_i} L(g(x_i;\hat{\theta}),y_i))| \\
     & + \lambda(\mu({\theta}^*) - \mu(\hat{\theta})) \\
     & \equiv T_1 + T_2 + T_3
\end{align}
$T_1$ can be bounded via the Lipschitzness of $\Tilde{L}$ and $L^2$ difference between $g(x_i;\theta^*)$ and $f^*$. $T_2$ can be jointly bounded with part of $T_3$ via Hoeffeding concentration inequality by the Lipschitzness of $L$. Part of $T_3$ is bounded by approximation Theorem~\ref{approx1_2}. 

\begin{align}
T_1 & =\frac{1}{n}\sum_{i=1}^n \|\Tilde{L}(g(x_i;{\theta}^*),f^*(x_i)) - \Tilde{L}(f^*(x_i),f^*(x_i))\| \\
    & \le L_1 \frac{1}{n}\sum_{i=1}^n |g(x_i;{\theta}^*) - f^*(x_i)| \\
    & \le \frac{L_1}{\sqrt{n}}\sqrt{\sum_{i=1}^n |g(x_i;{\theta}^*) - f^*(x_i)|^2} \\
    & = \frac{L_1}{\sqrt{n}}\|g(\cdot ;\theta^*) - f^*(\cdot)\|_{L^2(\mathrm{P}_n)} \\
    &   \le C_1 \frac{H(\Vec{m}){\|f\|_{W^L}}}{\sqrt{n} } 
\end{align}
for some constant $C_1$ depending on $L_1$ further.

$T_3$ is also rewritten as $
T_3 = \lambda(\nu(\theta^*) - \nu(\hat{\theta})) = 2\lambda \nu(\theta^*)  - \lambda \nu(\theta^* \ominus \hat{\theta})$ with the first term bounded by $2\lambda\|f^*\|_{W^L}$. 

$T_2$ can be rewritten as 
\begin{align}
     & \frac{1}{n}|\sum_{i=1}^n L(g(x_i;{\theta}^*),y_i) 
     - \mathbb{E}_{y_i} L(g(x_i;{\theta}^*),y_i) - (\sum_{i=1}^n L(g(x_i;\hat{\theta}),y_i) - \mathbb{E}_{y_i} L(g(x_i;\hat{\theta}),y_i))| \\
    & = \frac{1}{n} |\sum_{i=1}^n L(g(x_i;{\theta}^*),y_i) - L(g(x_i;\hat{\theta}),y_i)
     - (\mathbb{E}_{y_i} L(g(x_i;{\theta}^*),y_i) -  \mathbb{E}_{y_i} L(g(x_i;\hat{\theta}),y_i))|
\end{align}
As before, $L(g(x_i;{\theta}^*),y_i) - L(g(x_i;\hat{\theta}),y_i)$ can also be considered as the concatenation of two networks $L(g(x_i;{\theta}^*),y_i)$ and with one more output layer on top of them for doing subtraction, and  $L(g(x_i;\hat{\theta}),y_i)$ evaluated on $x_i$ and $y_i$. The loss $L(\cdot,\cdot)$ is interpreted as  an activation function composed with the value of the output node. So we may continuously abbreviate write $L(g(x_i;{\theta}^*-\hat{\theta}),y_i) := L(g(x_i;{\theta}^*),y_i) - L(g(x_i;\hat{\theta}),y_i)$ with network parameters $\theta^*-\hat{\theta}$. The Lipschitzness of $L$ tells us $|L(g(x_i;{\theta}^*),y_i) - L(g(x_i;\hat{\theta}),y_i)| \le L_1|g(x_i;{\theta}^*) - g(x_i;\hat{\theta})|=L_1|g(x_i;{\theta}^*-\hat{\theta})|$ which is bounded for each $i$.

Then, by Hoeffding inequality with respect to bounded functions of independent random variables $y_i$ (not necessarily identical distributed as it depends on $x_i$),
\begin{align}
    & \mathbb{P} [\frac{1}{n}|\sum_{i=1}^n L(g(x_i;{\theta}^*- \hat{\theta}),y_i) - \mathbb{E}_{y_i}L(g(x_i;{\theta}^*-\hat{\theta}),y_i) | \ge \lambda \mu(\theta^* \ominus \hat{\theta}) ] \\
    & \le 2exp\left \{-\frac{n^2\lambda^2\mu^2({\theta}^*-\hat{\theta})}{2\sum_{i=1}^nL_1^2g^2(x_i;{\theta}^*-\hat{\theta})}\right \} \\
    & \le 2exp\left \{-\frac{n^2\lambda^2\mu^2({\theta}^*-\hat{\theta})}{2\sum_{i=1}^nL_1^2(L_\sigma^{L-1})^2\mu^2({\theta}^*-\hat{\theta})}\right \} \\
    & = 2exp\left \{-\frac{n\lambda^2}{2L_1^2(L_\sigma^{L-1})^2}\right \}
\end{align}
In order for the right hand size of the above inequality less than $n^{-4}$, one can compute that $\lambda \ge L_1L_\sigma^{L-1}\sqrt{{2log(2n^4)\over n}}$. So we set $\lambda = max\{6L^{L-1}L^{L-1}_\sigma, 2^LcL_\sigma^{L-1}\sqrt{d/nlogn}\}\sigma_{\epsilon}\sqrt{logn/n}$ so that $T_2 \le \lambda \nu(\hat{\theta}\ominus\theta^*)$ holds with probability at least $1 - n^{-4}$ (we take such value for $\lambda$ because we need to be consistent with the expectation value estimation below).  To complete the proof, substituting the value of $\lambda$ into~\eqref{empirical_e} gives
\begin{align}\label{fine_overpar_emp}
\frac{1}{n}\sum_{i=1}^n \Tilde{L}(g(x_i;\hat{\theta}),f^*(x_i)) - \Tilde{L}(f^*,f^*) \le C_1H(\Vec{m})\|f^*\|_{W^L}/\sqrt{n} + max\{12L^{L-1}_\sigma, 2^{L+1}cL_\sigma^{L-1}\sqrt{d}\}\|f^*\|_{W^L}\sqrt{\frac{logn}{n}}
\end{align}
So,
\begin{align}
\frac{1}{n}\sum_{i=1}^n \Tilde{L}(g(x_i;\hat{\theta}),f^*(x_i))\le   \Tilde{L}(f^*,f^*)  + C_1H(\Vec{m})\|f^*\|_{W^L}/\sqrt{n} + max\{12L^{L-1}_\sigma, 2^{L+1}cL_\sigma^{L-1}\sqrt{d}\}\|f^*\|_{W^L}\sqrt{\frac{logn}{n}}
\end{align}
\end{proof}
To understand the difference between this expression and the one for MSE loss, we note that MSE loss is not an Lipschitz function. In general, if $L(f,g)$ defines an distance between $f$ and $g$, we may similarly obtain  an empirical error bound as the case of MSE loss by setting an approximation result corresponding to this distance instead of $L^2$ distance. But if no, our Lipschitzness assumption is a stronger assumption, leading to a stronger empirical error bound. 

To prove the second statement~\ref{emp_exp_general_loss}, we make a further assumption on the distribution of target $y$ conditional on $x$ which is general enough.
\begin{assumption}~\label{gen_loss_y_dis}
    $y$ is sub-gaussianal conditional on $x$.
\end{assumption}
A standard fact on the sub-gaussian is the following
\begin{proposition}~\label{subgaussin_prop}
If $h(y)$ is a Lipschitz function of $y$ with Lipschitz constant $L_y$, and $y$ is sub-gaussian with parameter $\sigma$, i.e. $\mathbb{E}e^{\lambda y}\le e^{\lambda^2\sigma^2}$, then $h(y)$ is sub-gaussian with parameter $L_y\sigma$.    
\end{proposition}

\begin{proof}[Proof of Theorem~\ref{emp_exp_general_loss}]
Let $Z(g):=\frac{1}{n}\sum_{i=1}^n L(g(x_i;{\theta}^*\ominus\hat{\theta}),y_i) - \mathbb{E}_{y_i}L(g(x_i;{\theta}^*\ominus\hat{\theta}),y_i)$, so it is a zero-mean random process indexed by $g$. Then $|Z(g_1)-Z(g_2)| \le \frac{1}{n}L_1|g_1(x_i;\theta^*\ominus\hat{\theta}) - g_2(x_i;\theta^*\ominus\hat{\theta})|+\frac{1}{n}L_0^{\prime}|g_1(x_i;\theta^*\ominus\hat{\theta}) - g_2(x_i;\theta^*\ominus\hat{\theta})|$. So $Z(g_1)-Z(g_2)$ is bounded, thereby is a sub-gaussian with parameter $\frac{1}{n}(L_1+L_0^{\prime})|g_1(x_i;\theta^*\ominus\hat{\theta}) - g_2(x_i;\theta^*\ominus\hat{\theta})|$. 
This parameter is a rescaled $l^1$ distance on the space of $g$. Let us define
\begin{align}
    \|g_1 - g_2\|_{\mathbb{P}_n} := \frac{1}{n}|g_1(x_i;\theta^*\ominus\hat{\theta}) - g_2(x_i;\theta^*\ominus\hat{\theta})|
\end{align}
We do 
\begin{align}
& \mathbb{E}T_2  \\ 
&= \mathbb{E} [\frac{1}{n}|\sum_{i=1}^n L(g(x_i;{\theta}^*\ominus\hat{\theta}),y_i) - \mathbb{E}_{y_i}L(g(x_i;{\theta}^*\ominus\hat{\theta}),y_i) | 
\end{align}
One notices that the right side of the above equality is defined for $L$ modulo constant functions. This property guarantees that the Lipschitz constant is a norm of the space of $L$s (originally it is only a semi-norm). It induces a distance 
\begin{align}
    d(L_1,L_2):=\|L_1 - L_2\|     _{C^{0,1}}
\end{align}

Given an arbitrary function family $\mathcal{F}$ of $g$ bounded by $b$, by our assumption on the loss function $L$, Proposition~\ref{subgaussin_prop} and Dudley entropy integral results in Chapter 5 of~\cite{wainwright2019highdimensional}, we know that
\begin{align}
    \mathbb{E}_{y_i} \left [ sup_{g\in \mathcal{F}} \left | \frac{1}{n}\sum_{k=1}^nL(g(x_i;{\theta}^*\ominus\hat{\theta}),y_i) - \mathbb{E}_{y_i}L(g(x_i;{\theta}^*\ominus\hat{\theta}),y_i)\right | \right ] \le (L_1+L_0^{\prime})\frac{24}{{n}}\int_0^{2b}\sqrt{logN(t;\mathcal{F},\|\cdot \|_{\mathbb{P}_n})}dt
\end{align}
where we use $sup_{f,g\in \mathcal{F}}\|f-g\|_{\mathbb{P}_n} \le 2b$.
So it reduces to the estimation of the covering number. As $\|\cdot \|_{\mathbb{P}_n} \le \| \cdot \|_{\infty}$ , it remains to bound the metric entropy with respect to supremum norm. Thus, it reduces to the situation of MSE loss. By the proof of Proposition~\ref{gen_lem0} and its corollaries~\ref{gen_cor0},~\ref{gen_lem1}, we deduce that 
\begin{align}
     \mathbb{E}_{y_i} \left [ sup_{g\in \mathcal{F}} \left | \frac{1}{n}\sum_{k=1}^nL(g(x_i;{\theta}^*\ominus\hat{\theta}),y_i) - \mathbb{E}_{y_i}L(g(x_i;{\theta}^*\ominus\hat{\theta}),y_i)\right | \right ] \le (L_1+L_0^{\prime})2^{L-1}cL_\sigma^{L-1}F\frac{\sqrt{d}}{n}
\end{align}
This bound $O(\frac{1}{n})$ is much better than~\ref{emp_exp} $O(\sqrt{\frac{logn}{n}})$, again due to its Lipschitzness of the loss function.

The proof of Lemma~\ref{gen_lem1} (or Remark~\ref{rmk_of_gen_lem1}) in fact gives the upper bound of the maximum value of $g$ by $2^LL_\sigma^{L-1}F$. By the metric entropy calculation in the proof of Lemma~\ref{gen_cor0} with its decreasing property with respect to $t$ under the integral, we complete.
\end{proof}

\begin{proof}[Proof of Lemma~\ref{general_loss_f_concentration}]
By a standard symmetrization argument,
\begin{align}
\mathbb{E}Z_n \le 2 \mathbb{E}_{\rho,x} sup_{f\in \mathcal{F}^*(\Vec{m},1)} \left | {1 \over n}\sum_{i=1}^n \rho_i\mathcal{L}(f(x_i),f^*(x_i))  \right | 
\end{align}
where $\rho_i$ are independent Rademacher variables. We have
\begin{align}
\mathbb{E}Z_n & \le 2 \mathbb{E}_{\rho,x}sup_{f\in \mathcal{F}(\Vec{m},1)} \left |{1\over n}\sum_{i=1}^n \rho_i \mathcal{L} (f(x_i), f^*(x_i))   \right | \\ & =  2 \mathbb{E}_{\rho,x}sup_{f\in \mathcal{F^*}(\Vec{m},1)} \left |{1\over n}\sum_{i=1}^n \rho_i \mathcal{L} (f(x_i)+f^*(x_i), f^*(x_i))   \right | \\
& \le 4L_0 \mathbb{E}_{\rho,x}sup_{f\in \mathcal{F}(\Vec{m},1)}\left |{1 \over n}\sum_{i=1}^n \rho_i (f(x_i) - f^*(x_i))  \right |
\end{align}
Let $\Tilde{f}\in \mathcal{F}(\Tilde{\Vec{m}},1)$ be the $L$-layer neural network that best approximates $f^*$ under the $L_2(\mathbf{B}^d)$-norm in Theorem~\ref{approx1_2}, where $\Tilde{m} \ge n$ elementwise. Thus, $\|\Tilde{f} - f^*\|_{L_2(\mathbf{B}^d)} \le C_1/\sqrt{n}$ for some constant $C_1>0$ depending on $f^*$. By decomposing $f=f - \Tilde{f} + \Tilde{f}$ and noting that $f - \Tilde{f} \in \mathcal{F}(\Vec{m}+\Tilde{\Vec{m}},2)$, we obtain
\begin{align}
\mathbb{E}Z_n & \le 4L_0 \mathbb{E}_{\rho,x}sup_{f\in \mathcal{F}(\Vec{m},1)}\left |{1 \over n}\sum_{i=1}^n \rho_i(f(x_i)-\Tilde{f}(x_i))   \right | + 4L_0 \mathbb{E}_{\rho,x}\left | {1\over n}\sum_{i=1}^n \rho_i(\Tilde{f}(x_i) - f^*(x_i))  \right | \\
& \le 4L_0 \mathbb{E}_{\rho,x}sup_{f\in \mathcal{F}(\Vec{m}+\Tilde{\Vec{m}},2)} \left | {1\over n}\sum_{i=1}^n\rho_i f(x_i)  \right | + {4L_0\over n}\sum_{i=1}^n \sqrt{\mathbb{E}_\rho  \rho_i^2}\|\Tilde{f} - f^*\|_2 \\
& \le 4L_0\mathbb{E}_{\rho,x}sup_{f\in \mathcal{F}(\Vec{m}+\Tilde{\Vec{m}},2)} \left |{1\over n}\sum_{i=1}^n\rho_if(x_i)  \right | + {4L_0C_1\over \sqrt{n}} \le {8L_0c2^{L-1}L_\sigma^{L-1}\sqrt{d} + 4L_0C_1 \over \sqrt{n}} \equiv {C_\mathcal{F} \over \sqrt{n}}
\end{align}
where the last inequality follows from Lemma~\ref{gen_lem1}.
Define
\begin{align}
U = sup_{x\in \mathbf{B}^d}sup_{f\in \mathcal{F}(\Vec{m},1)} \mathcal{L}(f,f^*), \xi^2 = sup_{f\in \mathcal{F}(\Vec{m},1)}\mathbb{E}\mathcal{L}^2(f,f^*), K_n = 2U\mathbb{E}Z_n + \xi^2
\end{align}
and note that for $\sqrt{n}\ge C_\mathcal{F}$,
\begin{align}
U & \le sup_{x\in \mathbf{B}^d}sup_{f\in \mathcal{F}(\Vec{m},1)}|\mathcal{L}(f,f^*)| \\
& \le sup_{x\in \mathbf{B}^d}sup_{f\in \mathcal{F}(\Vec{m},1)}L_0|f-f^*| \\
& \le L_0sup_{x\in \mathbf{B}^d}sup_{f\in \mathcal{F}(\Vec{m},1)}(|f| + |f^*|) \\
& \le 2L_0 \\
\xi^2 & \le U^2 \le 4L_0^2, \\
 K_n & \le {8C_\mathcal{F}\over \sqrt{n}} + 4L_0^2 \\
& \le 8 + 4L_0^2
\end{align}
the second inequality owes to $\mathcal{L}(f^*,f^*) = 0$.
By Talagrand's concentration inequality~\citep{wainwright2019highdimensional},
\begin{align}
P(Z_n - \mathbb{E}Z_n \ge t) \le 2exp\left (-{nt^2 \over 8eK_n+4Ut}  \right ) \le 2exp\left (-{nt^2 \over 32(2+L_0^2)e+8L_0t} \right )
\end{align}
Note that $-nt^2/(32(2+L_0^2)e+8L_0t) \le -nt/(16L_0)$ if $t\ge 4e(2+L_0^2)/L_0$, and $ -nt^2 /(64(2+L_0^2)e)$ otherwise. We then conclude that
\begin{align}
P\left (Z_n \ge {C_\mathcal{F} \over \sqrt{n}} + t   \right ) \le exp\left \{ -{n \over (16L_0)}min({t^2\over 4e(2+L_0^2)/L_0},t) \right \}
\end{align}
\end{proof}

\begin{proof}[Proof of Theorem~\ref{general_loss_overpara_gen}]
The argument is exactly similar to the argument in the proof of~\ref{overpara_gen} (just notices the changes in some expressions accordingly). Also, one thing that will change is that for any $f \in \mathcal{F}(\Vec{m}, F)$, we have $f/\nu(f) \in \mathcal{F}(\Vec{m},1)$, and $\nu(f/\nu(f)) = 1 $. This can be checked by changing the output layer weights (the last hidden layer), which doesn't rely on the homogeneity of activation functions like ReLU.
\end{proof}

\begin{proof}[Proof of Lemma~\ref{general_loss_underpara_gen_lem1}]
By a standard symmetrization argument,

\begin{align}
\mathbb{E}Z_n & \le 2 \mathbb{E}_{\rho,x} sup_{f\in \mathcal{F}(m,1), \|f-f^*\|_2\le \gamma} \left | {1 \over n}\sum_{i=1}^n \rho_i\mathcal{L}(f(x_i),f^*(x_i))  \right | \\ 
& \le 4L_0 \mathbb{E}_{\rho,x} sup_{f\in \mathcal{B}_F(\gamma)} \left | {1 \over n}\sum_{i=1}^n \rho_if(x_i)  \right |
\end{align}
The above argument is what we done as in the proof of Lemma~\ref{general_loss_f_concentration}.  Then this reduces to Lemma~\ref{underpara_gen_lem1}.

\end{proof}
\begin{proof}[Proof of Theorem~\ref{general_loss_underpara_emp}]
This is similar to the proof of Theorem~\ref{underpara_emp}. The only difference is that we have now $L+1$-layer neural networks 
because the composition of $\mathcal{L}_{n,y}(\cdot ,f^*)$  with $g$ is equivalent to setting activation function to be $\mathcal{L}_{n,y}(\cdot , f^*)$ for the last output node and adding one more hidden layer with a single weight setting to 1.  
\end{proof}
\begin{proof}[Proof of Theorem~\ref{general_loss_underpara_gen}]
This result is the analogy to Theorem~\ref{underpara_gen}. To emphasize the importance of Theorem~\ref{indispensable result} for our purpose, we copy it here for the reader's convenience. 

\begin{theorem}\label{indispensable_result_cp}
Given the uniform 1-bounded function class $\mathcal{F}(\Vec{m},1)$, and it is clear that it is star shaped around the ground truth $f^*$, i.e. $cf\in \mathcal{F}(\Vec{m},1)$ for any $c\in [0,1]$ and $f\in \mathcal{F}(\Vec{m},1)$ near $f^*$. Let 
\begin{align}
\delta_n = \sqrt{ (2L_\sigma)^{L-1}(2L_{1,y}+2|\mathcal{L}_{n,y}(0,f^*)|)(2dm_1+4m_1m_2+4m_2m_3 + \cdots + 2m_{L-1})dlogn/n}
\end{align}
Then
\begin{enumerate}
    \item \label{general_loss_aux_thm_1} Assume that $\Tilde{\mathcal{L}}(f,f^*)$ is $L_0^{\prime}-$Lipschitz with respect to the first argument $f$, then
    \begin{align}\label{event_0}
    sup_{f\in \mathcal{F}(\Vec{m},1)} \frac{|\int_{\mathbf{B}^d}((\Tilde{\mathcal{L}}(f,f^*) - \Tilde{\mathcal{L}}(f^*,f^*))d\mu - (\Tilde{\mathcal{L}}_n(f,f^*)-\Tilde{\mathcal{L}}_n(f^*,f^*)))|}{\|f-f^*\|_2 + \delta_n} \le 10L_0^{\prime}\delta_n
    \end{align}
    with probability at least $1 - c_1e^{-c_2n\delta_n^2}$.
    \item \label{general_loss_aux_thm_2} Furthermore, assume that $\Tilde{\mathcal{L}}(f,y)$ is $\gamma$-strongly convex for the first argument $f$ for each $y$, then we have 
    \begin{align}
    \|\hat{f}-f^*\|_2 \le c_2\delta_n + c_3 
    \end{align}
    and then,
    \begin{align}\label{event_1}
    sup_{f\in \mathcal{F}(\Vec{m},1)} {|\int_{\mathbf{B}^d}((\Tilde{\mathcal{L}}(f,f^*) - \Tilde{\mathcal{L}}(f^*,f^*))d\mu - (\Tilde{\mathcal{L}}_n(f,f^*)-\Tilde{\mathcal{L}}_n(f^*,f^*)))|} \le 
    c_2\delta_n^2 + c_3\delta_n
    \end{align}
\end{enumerate}
with the same probability as~\ref{general_loss_aux_thm_1}, for some constants $c_2,c_3$. 
\end{theorem}

Define a random variable family
\begin{align}
    Z_n(r) = sup_{\|f-f^*\|_2\le r}|\int_{\mathbf{B}^d}(\Tilde{\mathcal{L}}(f,f^*) - \Tilde{\mathcal{L}}(f^*,f^*))d\mu - (\Tilde{\mathcal{L}}_n(f,f^*)-\Tilde{\mathcal{L}}_n(f^*,f^*))|
\end{align}
Then, we have the following fact 
controlling the $Z_n(r)$'s tail probability
\begin{lemma}(Lemma 14.21 of~\cite{wainwright2019highdimensional})~\label{gen_loss_Z_lemma}
    For every $r\ge \delta_n$, $Z_n(r)$ satisfies the tail probability bound
    \begin{align}
        \mathbb{P}[Z_n(r)\ge 8Lr\delta_n + u] \le c_1exp (-\frac{c_2nu^2}{(L_0^{\prime})^2r^2+L_0^{\prime}u} )
    \end{align}
\end{lemma}
By the Lipschitzness of $\Tilde{\mathcal{L}}$ and the boundedness of $f$, we have $|\Tilde{\mathcal{L}}(f,f^*)-\Tilde{\mathcal{L}}(f^*,f^*)|_{\infty}\le L^{\prime}_0\|f-f^*\|_{\infty}\le 2L$. Moreover, we have
\begin{align}
    Var(\Tilde{\mathcal{L}}(f,f^*) - \Tilde{\mathcal{L}}(f^*,f^*)) & \le \mathbb{P}[(\Tilde{\mathcal{L}}(f,f^*) - \Tilde{\mathcal{L}}(f^*,f^*)^2)] \\
    & \le L^{\prime 2}_0\|f-f^*\|_2^2 \le L^{\prime 2}_0r^2
\end{align}
So, by the Talagrand concentration inequality, we have
\begin{align}\label{general_loss_tail_Z}
    \mathbb{P}[Z_n(r)\ge 2\mathbb{E}[Z_n(r)]+u] \le c_1exp\{-\frac{c_2nu^2}{L^{\prime 2}_0r^2+L^{\prime}_0u} \}
\end{align}
It remains to upper bound $\mathbb{E}[Z_n(r)]$. 
\begin{align}
    \mathbb{E}[Z_n(r)] & \le 2\mathbb{E}[sup_{\|f-f^*\|_2\le r}|\frac{1}{n}\sum_{i=1}^n\sigma_i(\mathcal{L}(f(x_i),y_i)-\mathcal{L}(f^*(x_i),y_i))|] \\
    & = 4L_1\mathbb{E}[sup_{\|f-f^*\|_2\le r}|\frac{1}{n}\sum_{i=1}^n\sigma_i(f(x_i)-f^*(x_i))|] \\
    & \le 4L_1r\delta_n
\end{align}
Reducing the first equality to~\ref{underpara_gen_lem1}, the last inequality dues to our choice of $\delta_n$ and the non-increasing of $r\rightarrow\frac{R_n(r;\mathcal{F^*})}{r}$. We complete by combining with the tail probability~\ref{general_loss_tail_Z}.

The proof of~\ref{general_loss_aux_thm_1} is similar to the proof of Theorem 14.20 in~\cite{wainwright2019highdimensional}. Define event $E_0=\{Z_n(\delta_n)\ge 9L\delta_n^2\}$ and $E_1=\{\int_{\mathbf{B}^d}(\Tilde{\mathcal{L}}(f,f^*) - \Tilde{\mathcal{L}}(f^*,f^*))d\mu - (\Tilde{\mathcal{L}}_n(f,f^*)-\Tilde{\mathcal{L}}_n(f^*,f^*))|\ge 10L_0^{\prime}\delta_n\|f-f^*\|_2$ for some $f$ with $\|f-f^*\|_2\ge \delta_n$\}. Let $\mathcal{E}$ be the event set such that the inequality~\ref{event_0} in~\ref{general_loss_aux_thm_1} holds. Then $E_0^c(\delta_n)\cap E_1^c \subseteq \mathcal{E}$. Letting $u=L\delta_n^2$ and using Lemma~\ref{gen_loss_Z_lemma} we can get $\mathbb{P}[E_0]\le c_1exp(-c_2n\delta_n^2)$. Using the "pilling" skill, we get that for all $\delta_n^2\ge \frac{c}{n}$ we have $\mathbb{P}[E_1]\le c_1exp(-c_2^{\prime}n\delta_n^2)$. Then we complete. For full details the readers can refer to~\cite{wainwright2019highdimensional} and the proof of Theorem 4 therein. 

To prove~\ref{general_loss_aux_thm_2}, going through the proof of~\ref{general_loss_aux_thm_1}, we notice that either $\|\hat{f}-f^*\|_2\le \delta_n$ or 
\begin{align}
    |\int_{\mathbf{B}^d}(\Tilde{\mathcal{L}}(f,f^*) - \Tilde{\mathcal{L}}(f^*,f^*))d\mu - (\Tilde{\mathcal{L}}_n(f,f^*)-\Tilde{\mathcal{L}}_n(f^*,f^*))|\le 10L_0^{\prime}\delta_n\|f-f^*\|_2
\end{align}
If the former is true, then we are done. Otherwise, 
\begin{align}
    |\int_{\mathbf{B}^d}(\Tilde{\mathcal{L}}(f,f^*) - \Tilde{\mathcal{L}}(f^*,f^*))d\mu| - |(\Tilde{\mathcal{L}}_n(f,f^*)-\Tilde{\mathcal{L}}_n(f^*,f^*))| 
    & \le |\int_{\mathbf{B}^d}(\Tilde{\mathcal{L}}(f,f^*) - \Tilde{\mathcal{L}}(f^*,f^*))d\mu - (\Tilde{\mathcal{L}}_n(f,f^*)-\Tilde{\mathcal{L}}_n(f^*,f^*))| \\
    & \le 10L_0^{\prime}\delta_n\|f-f^*\|_2
\end{align}
We temporarily use symbol $E$ to represent the empirical error bound $\Tilde{\mathcal{L}}_n(f,f^*)-\Tilde{\mathcal{L}}_n(f^*,f^*)$~\ref{general_loss_underpara_emp}, then 
\begin{align}
 \int_{\mathbf{B}^d}(\Tilde{\mathcal{L}}(f,f^*) - \Tilde{\mathcal{L}}(f^*,f^*))d\mu  \le  10L^{\prime}_0\delta_n\|\hat{f} - f^*\|_2 + E
\end{align}
Using the $\gamma$-strongly convexity we have
\begin{align}
    \frac{\gamma}{2}\|\hat{f}-f^*\|_2^2 & \le \int_{\mathbf{B}^d}(\Tilde{\mathcal{L}}(f,f^*) - \Tilde{\mathcal{L}}(f^*,f^*))d\mu \\
    & \le (10L_0^{\prime})\delta_n\|\hat{f}-f^*\|_2 + E 
\end{align}
So, there exist constants $c_2,c_3$ (depending on $\gamma$ and $L$) such that 
\begin{align}\label{l2_estimation_gen_loss}
    \|\hat{f}-f^*\|_2 \le c_2\delta_n + \sqrt{c_3\delta_n^2+E}
\end{align}
As $E = c_0 + c_1\delta_n^2$ for some $c_0, c_1$, there are some constants, still denoting $c_2,c_3$, such that 
\begin{align}\label{l2_estimation_gen_loss}
    \|\hat{f}-f^*\|_2 \le c_2\delta_n + c_3 
\end{align}
The second half of~\ref{indispensable_result_cp} follows immediately. Substituting it into~\ref{general_loss_aux_thm_1} and we get 
\begin{align}
    |\int_{\mathbf{B}^d}(\Tilde{\mathcal{L}}(f,f^*) - \Tilde{\mathcal{L}}(f^*,f^*))d\mu| & \le ((c_2+1)\delta_n + 10L^{\prime}_0\sqrt{c_3\delta_n^2+E})\delta_n + E \\
    & \le 10L^{\prime}_0(c_2+1)\delta_n^2 + \sqrt{c_3}10L^{\prime}_0\delta_n^2 + 10L^{\prime}_0\sqrt{E}\delta_n + E \\
    & \le c_4\delta_n^2 + c_5\delta_n + E
\end{align}
for some constants $c_4,c_5$ (depending on $\gamma$ and $L$). Plugging the empirical loss bound~\ref{general_loss_underpara_emp} into the expression of $E$, we complete.

\end{proof}

\bibliographystyle{plainnat}
\vskip 0.2in
\bibliography{ref}

\end{document}